 \documentclass[final,3p,times]{elsarticle}

\usepackage{booktabs}
\usepackage[dvipsnames]{xcolor}
\usepackage{amssymb}
\usepackage{array}
\usepackage{placeins}
\usepackage{float}
\usepackage{stfloats}
\usepackage{tabularx}

\usepackage{soul}

\definecolor{mygreen}{RGB}{0,120,0}
\usepackage{amsmath}
\definecolor{mypurple}{RGB}{128,0,128}
\usepackage{microtype}  
\usepackage[linesnumbered,ruled,vlined]{algorithm2e}
\renewcommand{\hl}[1]{#1}
 \usepackage{amsthm}
    \newtheorem{assumption}{Assumption}
    
    \newtheorem{theorem}{Theorem}
    
    \newtheorem{definition}{Definition}
    \newtheorem{invariant}{Invariant}
    
\usepackage{listings}
\usepackage{multirow}
\usepackage{tikz}
\usepackage{url}
\definecolor{lightyellow}{RGB}{255,250,205}
\sethlcolor{lightyellow}
\usetikzlibrary{arrows.meta, positioning, shapes.symbols,calc}

\usepackage[framemethod=TikZ]{mdframed}
\usepackage{graphicx}
\usepackage{lineno}
\usepackage{hyperref}
\lstdefinelanguage{MontiCore}{
  keywords={grammar, extends, symbol, abstract, enum, interface, implements},
  morekeywords=[2]{Name},
  keywordstyle=\color{blue}\bfseries,
  keywordstyle=[2]\color{teal},
  commentstyle=\color{ForestGreen}, 
  morecomment=[l]{//},
  morecomment=[s]{/*}{*/},
  stringstyle=\color{orange},
  basicstyle=\scriptsize\ttfamily,
  breaklines=true,
  tabsize=2,
  showstringspaces=false,
  numbers=left,
numbersep=6pt,
xleftmargin=2em,
framexleftmargin=2em
}
\definecolor{atcolor}{RGB}{128,0,128} 
\newcommand{\repofile}[1]{\path{#1}}

\lstdefinelanguage{XML}{
  morestring=[b]",
  morestring=[s]{>}{<},
  morecomment=[s]{<?}{?>},
  stringstyle=\color{black},
  identifierstyle=\color{blue},
  keywordstyle=\color{teal},
   numbers=left,
numbersep=6pt,
xleftmargin=2em,
framexleftmargin=2em,
  morekeywords={name,ID,BTCPP_format,main_tree_to_execute,beam,location} 
}
\journal{Journal of Systems and Software}

\begin{document}

\begin{frontmatter}



\title{APTree: Extended Behavior Trees for Adaptive and Scalable Automated Robotic Task Planning in Construction\tnoteref{t1}}

\tnotetext[t1]{Accepted for publication in The Journal of Systems and Software.
\copyright\ 2026. This manuscript version is made available under the CC-BY-NC-ND 4.0 license \url{https://creativecommons.org/licenses/by-nc-nd/4.0/}}


\author[1,2]{Shermin Sherkat\corref{cor1}} 
\ead{shermin.sherkat@icd.uni-stuttgart.de}
\author[1]{Thomas Wortmann}
\author[2]{Andreas Wortmann}
\cortext[cor1]{Corresponding author.}

\affiliation[1]{organization={Institute for Computational Design and Construction, Department for Computing in Architecture (ICD/CA), University of Stuttgart},
    addressline={Keplerstraße 11}, 
    city={Stuttgart},
    postcode={70174}, 
    country={Germany}}

   \affiliation[2]{organization={Institute for Control Engineering of Machine Tools and Manufacturing Units (ISW), University of Stuttgart},
    addressline={Seidenstraße 36}, 
    city={Stuttgart},
    postcode={70174}, 
    country={Germany}}

\begin{abstract}
Task planning for robots in off-site construction prefabrication is primarily manual, leading to planning inefficiencies and reluctance to adopt robotic solutions. Existing approaches provide limited support for scalable, modular automated task planning and localized runtime replanning. This paper proposes APTree, a model-driven framework that integrates automated task planning with modular task execution using behavior trees as the execution model. APTree repeatedly applies and coordinates heterogeneous symbolic planners to automatically generate and compose modular task plans at runtime. To support this approach, we develop a domain-specific language for the formal specification and validation of task plans, together with a compatible execution engine. We evaluate APTree on the real-world livMatS Biomimetic Shell pavilion, where it models the construction robotic fabrication domain, detects injected specification and planning errors at design time and runtime, and automatically generates large-scale task plans by coordinating three symbolic planners across numerous planning modules. A machine-checked formal analysis further shows that composed execution and local replanning are safe under explicitly stated assumptions, including planner soundness and non-interference of generated plans, providing practitioners with explicit conditions for dependable plan composition. APTree thus enables construction practitioners to leverage automated task planning without requiring detailed knowledge of symbolic planning techniques or APTree's internal implementation.
\end{abstract}



\begin{keyword}
domain-specific language~\sep behavior tree~\sep AI planner~\sep task planning~\sep model-driven engineering~\sep symbolic planner~\sep automated planning~\sep robot programming~\sep software language engineering



\end{keyword}

\end{frontmatter}



\section{Introduction}
\label{intro}

The construction industry faces substantial automation challenges that require robotic solutions~\cite{davila_delgado_robotics_2019}, particularly in off-site assembly and prefabrication, where components are fabricated in controlled, factory-like environments for on-site assembly~\cite{chea_integrated_2020}; we refer to this domain as construction robotic fabrication (CRF). Robotic solutions depend on both task planning and task execution~\cite{ghallab_automated_2016}. Task planning generates plans that specify the tasks and their order to achieve a goal~\cite{Rovida17}. Task execution realizes these task plans through execution models that control runtime coordination, including timing and conditional switching~\cite{Colledanchise18}. CRF problems are inherently complex, involving hundreds of heterogeneous tasks, objects, resources, and constraints~\cite{Sakaida06}, making monolithic task planning approaches insufficient and driving the need for modular task plans composed of independently developed modules, potentially generated by different planning methods~\cite{iovino_survey_2022}. Currently, task planning for CRF is typically manual~\cite{sherkat2025}, contributing to planning inefficiency and reluctance to adopt robotic solutions~\cite{davila_delgado_robotics_2019}. Achieving scalable automated planning and execution of modular task plans requires addressing three key challenges (C):

(C1) hidden data dependencies arising from composing independently developed execution models without explicit interfaces, leading to runtime crashes due to data type errors or unpredictable behavior;

(C2) limited support for localized runtime replanning, where existing execution models typically require restructuring or regenerating the entire model rather than replacing only the affected task plan module, making execution structures rigid, limiting runtime changes, and preventing dynamic adaptation to changes in the robot's environment;

and (C3) limited scalability of automated task plan generation, where existing task planning approaches often struggle as planning problems increase in size and complexity, resulting in brittle execution models of limited size.

To address these challenges, this paper proposes APTree, a model-driven approach for scalable and modular automated task planning and execution based on behavior trees (BTs). BTs are tree-like execution models that structure tasks as nodes and control the switching between them to describe the behavior of an autonomous agent, such as a robot~\cite{Colledanchise18}. Among existing execution modeling languages, BTs are selected for their modular structure, in contrast to modeling languages such as finite state machines (FSMs)~\cite{Matteo24}. A BT consists of two main node types: flow nodes, which have one or more child nodes and control the order and conditions under which they are executed, and action nodes, which perform tasks~\cite{iovino_survey_2022}.

Specifically, we introduce a novel method for modeling task plans (tasks and their ordering) together with the symbolic planners that generate them (i.e., planners that derive task sequences through logical reasoning). These task plans are then coordinated and composed by execution models (BTs in this work), with a domain-specific language (DSL) serving as their common interface. Rather than coupling a specific task planner to a fixed execution model, our approach provides a planner-independent abstraction that enables symbolic planners to be repeatedly applied and coordinated at different levels of granularity to generate task plan modules (C3). These modules are validated for consistency, transformed into the APTree DSL, and composed into a unified execution model (C1) that supports scalable, localized runtime replanning (C2).

To realize this approach, we leverage model-driven engineering (MDE), which is a software development approach that focuses on creating and exploiting domain models and metamodels (e.g., BTs)~\cite{cheng16}. Metamodels define the rules, concepts, and relationships that constitute a model~\cite{Mustafa2023}. APTree extends the original BT metamodel with packages for domain types, graph-based plan representation, and planning services. Notably, the concepts captured by these packages are not inherently tied to BTs: the domain types and planning services contain no BT-specific constructs, and the graph-based plan representation is separated from the execution semantics; only their composition into the executable model and the execution engine are BT-specific.  Based on the extended metamodel, a text-based DSL enables the specification and validation of APTree models, and an execution engine loads and executes these models (see Figure~\ref{fig:intro}). Using MDE and software language engineering (SLE), we reduce the complexity of large CRF task planning problems, allowing domain users to focus on the problem rather than on the low-level implementation details of individual task planning methods and BT execution.
Consequently, this paper makes the following contributions:
\begin{enumerate}

\item A metamodel package for representing domain-specific types, predicates, and tasks that automatically generates a complementary metamodel for design-time validation of task interfaces and runtime validation of task preconditions and effects in execution models. The approach is realized by extending the BT metamodel to describe BT action nodes using the proposed task metamodel (C1, shown in green in Figure~\ref{fig:intro}).

\item A metamodel package for representing task plans as directed graphs with explicit temporal relations and runtime configurable execution logic, enabling planner-independent task execution and localized runtime replanning. The approach is realized by extending the BT metamodel to enable a BT flow node to describe a task plan (C2, shown in blue in Figure~\ref{fig:intro}).

\item A metamodel package for representing symbolic planners and their configurations that enables multiple symbolic planners operating at different abstraction levels to be explicitly specified, configured, and invoked within a unified task execution model. The approach is realized by extending the BT metamodel to enable a BT service node to describe a symbolic planner (C3, shown in orange in Figure~\ref{fig:intro}).

\item A text-based DSL built upon the extended BT metamodel packages, ensuring input consistency and validation of planner-generated execution models.

\item A BT-based execution engine compatible with the DSL, supporting model loading, symbolic planner invocation, and modular task planning.

\end{enumerate}

 We use MontiCore (a software language workbench for developing and processing modular and text-based DSLs~\cite{Monticore18}) to define and support APTree's DSL (Figure~\ref{fig:intro}). The MontiCore language workbench enables user input validation, prevents mismatches and errors during planning, and supports automatic code generation for the execution engine. Additionally, MontiCore generates an abstract syntax tree (AST) from the user input model, which represents the hierarchical structure of the source code. This AST is then used to enable context-aware validation of sub-execution models~\cite{Monticore18} modeled by the users or generated by symbolic planners before they are injected into the overall execution model, ensuring type correctness and structural consistency.

The methodology of this research is based on the design science research paradigm (DSR)~\cite{hevner2010design}, focusing on the development and evaluation of artifacts to solve complex challenges such as automation in CRF task planning. Key artifacts include the BT DSL package, the four DSL packages extending it, and the execution engine. 

The APTree method was evaluated using a case study on real-world CRF task planning of the livMatS Biomimetic Shell pavilion~\cite{Wagner20}, whose shell segments were robotically fabricated off-site prior to on-site assembly. The livMatS pavilion is a research demonstrator constructed at the FIT Freiburg Center, Germany, and features a digitally designed segmented shell structure~\cite{Wagner20}. The APTree DSL enabled modeling and validating the livMatS construction robotic fabrication domain, and the APTree execution engine automatically generated the task plan for assembling the segments using three different symbolic automated task planning tools: ENHSP~\cite{scala_subgoaling_2020}, Metric-FF~\cite{hoffmann_metric-ff_2003}, and LAMA-First~\cite{richter_lama_2010}. 

\begin{figure}[t]
    \centering
    \includegraphics[scale=0.7]{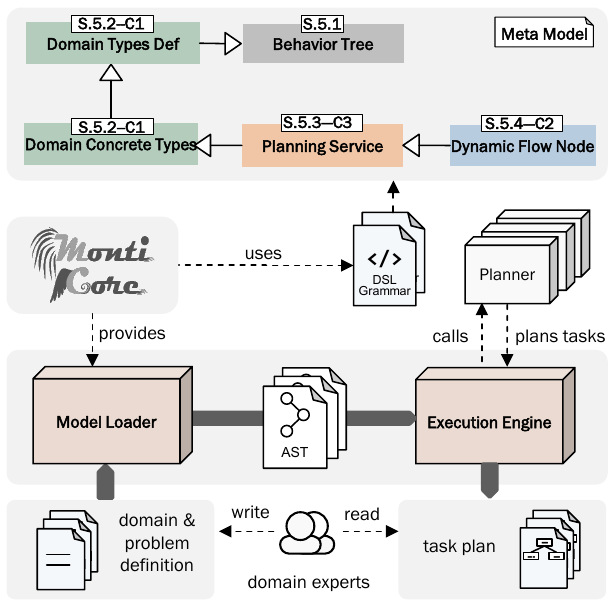}
\caption{Overview of the APTree method discussed in the paper. The APTree method extends the existing BT metamodel with four packages, each marked by its corresponding section (S) number in the paper and the challenge it addresses (C). }
    \label{fig:intro}
\end{figure}
In the remainder of this paper, Section \ref{backg} provides background information on the terminology used. Section \ref{relatedworks} reviews the related work and identifies the challenges, while Section \ref{reqs} presents the research questions and evaluation metrics to evaluate APTree based on the challenges. Section \ref{packages} describes the APTree DSL packages, and Section \ref{sec:casestudy} presents our case study and how the user can use APTree. Finally, Section~\ref{eval} presents the evaluation of APTree based on our case study results. Section~\ref{disc} discusses the findings, and Section~\ref{conc} concludes the paper.  \ref{APPA} and \ref{APPformal} contain supplementary evaluation material, and~\ref{APPB} provides information to facilitate replication.

\section{Background and Terminology}
\label{backg}

This section introduces concepts that are essential for understanding this paper. It begins with BTs, illustrated with a running example, and then provides an overview of MDE and key terminology related to software languages. Since our method uses MontiCore to develop the structure and representation of models, we also give a brief introduction to how grammars are defined in the MontiCore language workbench. Finally, we briefly introduce AI task planning paradigms and symbolic task planning used in our framework.

\subsection{Behavior Trees and The Running Example}
BTs are directed trees originally developed to design and control non-player character behavior in video games~\cite{iovino_survey_2022}. In graph theory~\cite{russell_artificial_2010}, a directed tree is a hierarchical structure with a single root node and directed edges through which all other nodes are reachable. A node with an outgoing edge is a parent, the connected node is its child, nodes with no outgoing edges are leaves, and every node except the root has exactly one parent~\cite{russell_artificial_2010}.
In BTs, leaf nodes are action nodes, while non-leaf nodes are control flow nodes~\cite{iovino_survey_2022}. Graphically, a BT can be drawn with the root at the top and children below~\cite{iovino_survey_2022} (see Figure~\ref{fig:runningexample}).

Execution of a BT begins at the root, which periodically generates tick signals. A tick triggers node execution: action nodes perform tasks, while control flow nodes propagate the tick to one or more children according to their logic. A node executes only when ticked and returns one of three statuses to its parent: \emph{InProgress} if execution is ongoing, \emph{Success} if the goal is achieved, or \emph{Failure} otherwise. In the classical BT formulation, the main control flow nodes are Sequence, Fallback, and Parallel~\cite{iovino_survey_2022}.

A Sequence node ticks its children from left to right and returns \emph{Success} only if all children succeed. A Fallback node also ticks its children from left to right but returns \emph{Success} as soon as one child succeeds, returning \emph{Failure} only if all children fail~\cite{iovino_survey_2022}. A Parallel node ticks all children simultaneously and returns \emph{Success} if at least $M$ of $N$ children succeed, \emph{Failure} if success becomes impossible, and \emph{InProgress} otherwise~\cite{iovino_survey_2022}. Action nodes execute a command when ticked and return \emph{InProgress} during execution, \emph{Success} upon completion, or \emph{Failure} if execution fails. During execution, nodes may read from or write to a shared database typically called the~\emph{blackboard}, which stores information about the BT and allows nodes to communicate and coordinate ~\cite{iovino_survey_2022}.

BT nodes may also include decorators and services, which modify the behavior of attached nodes~\cite{iovino_survey_2022}. Conditions can be modeled as decorators applied to action or control flow nodes (\texttt{GripperEmpty} in Figure \ref{fig:runningexample}). Figure \ref{fig:runningexample} shows an example BT which will be used as a running example in this paper: execution first performs \emph{pick up plate~1} and \emph{place plate~1} sequentially, then executes the \emph{pick up and place beam~1} and \emph{pick up and place beam~2} sequences in parallel (\texttt{GripperEmpty} decorator executes before \texttt{PickUp Beam1}). 

\begin{figure}[t]
    \centering
    \includegraphics[scale=0.5]{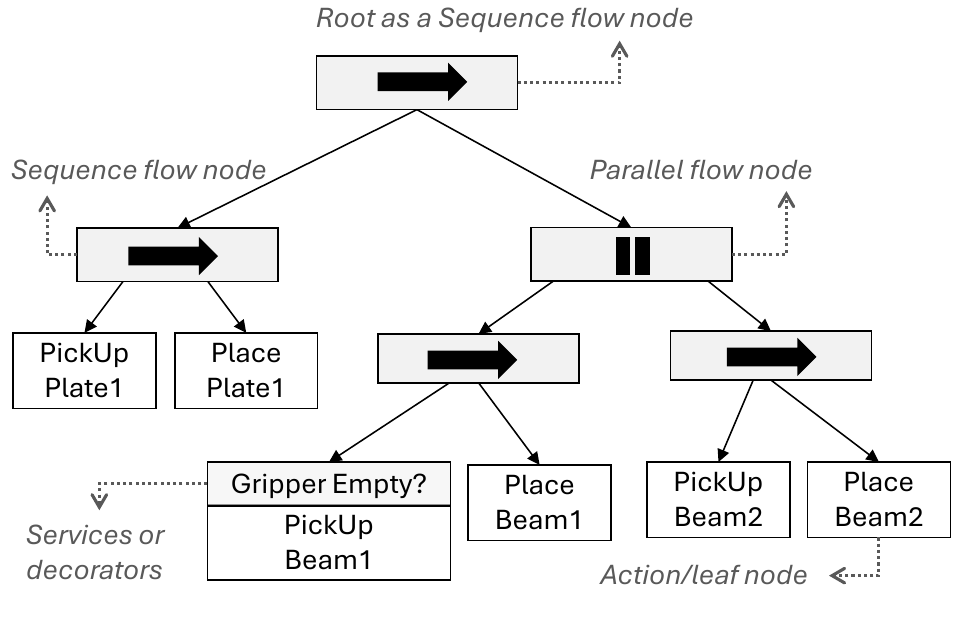}
\caption{Running example of a BT for picking and placing a plate and two beams. Sequence nodes are shown with an arrow icon and parallel nodes with two parallel lines. Decorators and services, when present, are attached to their associated node. Decorators can model node-specific preconditions evaluated before execution (e.g., \texttt{Gripper Empty?}).}

    \label{fig:runningexample}
\end{figure}

\subsection{Model-Driven Engineering and Software Languages}
In MDE, models describe different aspects of the system (e.g., structure, behavior) to reduce complexity and support maintainability~\cite{cheng16}.

\begin{definition} (Model): A model is a purpose-driven abstraction of an original real or intended system ~\cite{saam_2009,Sta73}.
\end{definition}

For example, the BT model in Figure~\ref{fig:runningexample} describes the task execution of one or more agents built to support interaction, monitoring, planning, and evaluation. A model must have an explicitly defined software language to be machine readable~\cite{combemale_2017}. A software language is an artificial language used in software development, including general-purpose languages (e.g., Python) and DSLs~\cite{combemale_2017}, such as BTs. The BT language specifies the syntax and semantics that enable the design and execution of BT models.

Defining a software language begins with its abstract syntax (AS), which captures its essential concepts and relationships (e.g., concept of a flow node or the child-parent relationship) ~\cite{combemale_2017}. AS can be specified via metamodels or grammars (Figure~\ref{fig:mm-bt} and Listing~\ref{lst:bt_revised}). The concrete syntax (CS) defines how users realize and use the AS concepts, either graphically (e.g., Figure \ref{fig:runningexample}), or textually~\cite{combemale_2017}. 
However, abstract and concrete syntax do not define model behavior. For example, tick propagation in a BT is not specified in AS or CS since users do not directly interact with it. Therefore, an additional model, which may extend the abstract syntax, specifies the language's runtime behavior.

Finally, language constraints in this paper are defined as structural well-formedness rules on the abstract syntax (e.g., nodes must have unique names) and behavioral well-formedness rules on the behavior model (e.g., only one database is allowed to store BT data such as beam and plate instances in the running example).

This paper uses the MontiCore language workbench that supports grammar-based and modular language definition of both AS and CS along with the implementation of well-formedness rules.

\subsection{MontiCore}
MontiCore is a language workbench for defining, processing, and integrating text-based DSLs~\cite{Monticore18}. It allows developers to specify concrete and abstract syntax in a single grammar, from which MontiCore generates core infrastructure such as parsers and ASTs~\cite{Monticore18}. An AST is a concrete instance of a model that conforms to the language’s abstract syntax and is used for model analysis and processing~\cite{Monticore18}. MontiCore supports modular language design through composition, inheritance, and embedding of language packages~\cite{Monticore18}. 
Each MontiCore grammar has a head and a body~\cite{Monticore18}. The head contains the package declaration, the grammar’s name, and any grammars it extends (Listing \ref{lst:bt_root}). 

\begin{nolinenumbers}

\begin{lstlisting}[
    language=MontiCore, 
    basicstyle=\scriptsize\ttfamily, 
    basewidth=0.5em,
    lineskip=-1pt,
    caption={Grammar head definition defining \texttt{BehaviorTree} grammar that extends \texttt{MCBasics} grammar}, 
    label={lst:bt_root}
]
grammar BehaviorTree extends de.monticore.MCBasics {   
  // Grammar body containing symbols and productions 
}
\end{lstlisting}
\end{nolinenumbers}
The body defines the language itself. It contains:
\begin{itemize}

    \item Non-terminals, which define the abstract syntax of the language.
    \item Terminals, shown in quotes (e.g., "Action"), which define the concrete syntax.
    \item Productions, which are the main rules that define valid language concepts. Each production corresponds to a non-terminal and specifies how it can be constructed from terminals, properties, and other non-terminals.
    \item Programming concepts, e.g., interface, or enumeration non-terminals, which allow language extension and reuse.
    \item Tokens, such as \texttt{Name} or \texttt{Integer}, can be defined or imported from standard MontiCore grammars like MCBasics. 
\end{itemize}

MontiCore also supports symbols, which identify and reference language entities by name. Declaring a non-terminal as a symbol ensures that the value assigned to this non-terminal's name property corresponds to an existing entity of the specified type. For example, if we have a non-terminal defined as \texttt{symbol Beam}, then another production rule can use \texttt{Name@Beam} to ensure that the name provided in the model refers to a valid \texttt{Beam} instance. After the model is parsed by MontiCore, symbols can be stored in a symbol table and used for enforcing well-formedness rules.
The following example (Figure \ref{fig:grammar_anatomy_dotted}) shows the body of a portion of the BT grammar, and Listing \ref{lst:example} provides a valid and invalid example model based on it.

\begin{figure}[ht]
\centering
\begin{tikzpicture}[
    annot_style/.style={-Stealth, font=\scriptsize\sffamily\itshape, gray, thick, dotted},
    node distance=0.5cm
]

    \node (code) [font=\small\ttfamily] {
        \textcolor{blue}{symbol} PickUp \textcolor{blue}{extends} ActionNode = "Action" "PickUp" name:Name (obj:Name\textcolor{blue}{@Beam})?;
    };

    \coordinate (sym)  at ($(code.west) + (0.6, 0.1)$);
    \coordinate (ext)  at ($(code.west) + (3, 0.1)$);
    \coordinate (nter) at ($(code.west) + (1.9, 0.1)$);
    \coordinate (term) at ($(code.west) + (6.5, 0.1)$);
    \coordinate (prop) at ($(code.west) + (9, 0.1)$);
    \coordinate (suse) at ($(code.west) + (11.5, 0.1)$);
    \coordinate (card) at ($(code.west) + (13.15, 0.1)$);

    \draw [annot_style] (sym)  -- ++(0, 0.5) node[above] {Symbol declaration};
    \draw [annot_style] (nter)  -- ++(0, 0.8) node[above] {Non-terminal};
    \draw [annot_style] (ext)  -- ++(0, 0.5) node[above] {Inherits from};
    \draw [annot_style] (term) -- ++(0, 0.5) node[above] {Terminal};
    \draw [annot_style] (prop) -- ++(0, 0.5) node[above] {Non-terminal property};
    \draw [annot_style] (suse) -- ++(0, 0.5) node[above] {Symbol use};
    \draw [annot_style] (card) -- ++(0, 0.5) node[above] {Cardinality};

\end{tikzpicture}
\caption{A MontiCore grammar production rule with annotated components.}
\label{fig:grammar_anatomy_dotted}
\end{figure}
Figure \ref{fig:grammar_anatomy_dotted} shows an example of a production where \texttt{PickUp} is a non-terminal that extends the \texttt{ActionNode} non-terminal. The terminals \texttt{"Action"} and \texttt{"PickUp"} require that valid model instances begin with these keywords. The \texttt{name:Name} property defines a named attribute of \texttt{PickUp}, while \texttt{obj:Name@Beam} is a symbol reference, ensuring that the provided name corresponds to an existing \texttt{Beam} instance. Cardinalities can be specified using \texttt{?} for zero or one (optional), \texttt{*} for zero or more, and \texttt{+} for one or more occurrences. Based on this grammar and the defined well-formedness rules, a model can be validated as correct or incorrect (see Listing~\ref{lst:example}).

\begin{nolinenumbers}
\begin{lstlisting}[
    language=MontiCore, 
    basicstyle=\scriptsize\ttfamily, 
    basewidth=0.5em,
    lineskip=2pt,
    caption={Syntactic validation examples for PickUp action nodes.}, 
    label={lst:example},
    escapechar=| % Use | to escape to LaTeX for colors
]
// Correct model: pickup action with parameters
"Action" "PickUp" PickUpBeam1 (beam1)
// Correct model: pickup action without parameters (optional in grammar)
"Action" "PickUp" PickUpBeam1 
// Incorrect model: Missing "Action" terminal and incorrect parameter type
 |\color{red}\bfseries"PickUp"| PickUpBeam1 (|\color{red}\bfseries gripper1|)
\end{lstlisting}
\end{nolinenumbers}

\subsection{AI Task Sequence Planning Paradigms: Symbolic and Sub-symbolic}

Several AI methods exist for planning task sequences, varying in their underlying principles. 
Symbolic AI planners generate task sequences from an initial state to a goal state by manipulating symbolic representations of the problem and applying logical reasoning~\cite{Ghallab04}. Unlike data-driven AI, they rely on explicitly defined knowledge and rules, allowing plans to be deduced through inference. Symbolic planners model the world as a directed graph in which nodes represent world states and edges represent actions. World states describe configurations of the environment using predicates, which are logical statements about properties of the world (e.g., \texttt{At(robot, roomA)}). Actions define transitions between states, such as moving a robot from one location to another~\cite{Ghallab04}.

Each action (e.g., \texttt{(travel A B)}) is defined by its parameters (e.g., locations \texttt{A} and \texttt{B}), preconditions that must hold before execution (e.g., the robot is at location \texttt{A}), and effects that describe how the world state changes (e.g., the robot is at location \texttt{B}). At runtime, the planner starts from the initial state and reasons over actions’ preconditions and effects to find a sequence of actions that transforms the world into the goal state.  
Symbolic planners typically use declarative languages such as the Planning Domain Definition Language (PDDL)~\cite{Haslum19} to specify states, actions, predicates, and goals. In PDDL-based planning, the planning problem formulation is typically divided into a \emph{domain file} and a \emph{problem file}~\cite{Ghallab04}. The domain file defines the general planning model, including predicates and action types (with their properties, preconditions, and effects), while the problem file specifies a concrete planning instance by defining the initial state, object instances, and goal state~\cite{Ghallab04}.

Sub-symbolic planners operate on data-driven representations rather than explicitly defined symbolic models~\cite{hitzler2022neurosymbolic}. They use techniques such as neural networks or reinforcement learning to learn action-selection policies from data instead of reasoning over symbolic predicates and logical rules~\cite{hitzler2022neurosymbolic}. Given a numerical representation of the environment state, the learned policy predicts the next action at each step, producing behavior through sequential action selection rather than constructing an explicit symbolic plan~\cite{hitzler2022neurosymbolic}.

\section{Research Challenges and Related Work} \label{relatedworks}

The following subsections review the state of the art in robotic task planning and execution to identify three key challenges motivating this work. Specifically, they examine (C1) managing data dependencies between independently developed task plan modules, (C2) support for localized runtime adaptation and replacement of task plan modules, and (C3) scalable, robust automated generation of executable task plans from heterogeneous planning methods. Due to their modularity and composability, BTs are adopted in this work as the target task execution modeling language, and challenges are particularly discussed for BTs.

\subsection{Type Mismatches and Hidden Dependencies (C1)} \label{btchallenge}

Different execution models, including FSMs, hierarchical state machines, and BTs, can be used to represent robotic task plans~\cite{Matteo24}. State machines explicitly model execution states and state-specific transitions~\cite{Matteo24}. While effective for many robotic applications, they become increasingly difficult to maintain and scale as task complexity grows because transitions are tightly coupled to individual states, and modifying one part of the execution model often requires updating related transitions~\cite{Matteo24}. BTs alleviate this limitation by providing a hierarchical and modular execution model in which every node follows a common execution interface, enabling task execution models to be composed from reusable subtrees~\cite{Matteo24}. Regardless of the execution modeling language, composing independently developed task execution models requires explicit mechanisms for communicating data between tasks. Without explicitly modeled data exchange, type inconsistencies, unintended side effects, and execution failures may occur, even when individual task execution models function correctly in isolation~\cite{iovino_survey_2022,voinov_model_2010}.

In BTs, communication between nodes and subtrees typically relies on a centralized blackboard. When these data exchanges are not explicitly modeled, hidden dependencies arise from implicit data sharing and hardware constraints (e.g., the limited range of motion of a robot arm), leading to type inconsistencies, unintended side effects, execution failures, and unintended robotic behavior~\cite{iovino_survey_2022,Oliver_Biggar_2024,Alexander_SchulzRosengarten_2024}.

To prevent hidden dependencies, mechanisms that decentralize node communication and explicitly model the inputs and outputs of BT nodes have been suggested~\cite{Oliver_Biggar_2024, Alexander_SchulzRosengarten_2024}. For instance, a BT-inspired programming language based on Haskell~\cite{Oliver_Biggar_2024} facilitates modeling tasks, particularly "chattering tasks" that rely on changing conditions, with explicit data transfer details. This method focuses on the reactive selection and monitoring of tasks in autonomous agents and recommends approaching tasks like function calls with distinct inputs and outputs~\cite{Oliver_Biggar_2024}. In a similar approach, a data flow extension to BTs was proposed to explicitly represent data relationships and communication between nodes~\cite{Alexander_SchulzRosengarten_2024}. Each action node has explicit inputs and outputs and points to the subsequent action node to execute upon every tick, ensuring accurate data flow. This approach allows nodes to exchange data through predefined channels~\cite{Alexander_SchulzRosengarten_2024}. 

However, these methods alter the core BT logic by modeling pointers to the next actions instead of using flow nodes for action transition. They also model direct data flows between nodes. This approach makes task interfaces more explicit, but reduces modularity, as in state machines, thereby hindering the modeling of complex execution models. 

Other approaches, such as BehaviorTree.CPP~\cite{Ghzouli20}, allow modeling inputs and outputs for each BT node but lack formal verification methods to automatically check whether the BT behaves as intended under different scenarios~\cite{Uwe_Koeckemann_2023}. Consequently, a method was developed to automatically identify data and sequence constraint violations to facilitate debugging~\cite{Uwe_Koeckemann_2023}. The authors formulate data dependency verification as a symbolic AI planning problem, seeking event sequences that lead to nodes lacking the required data and thereby identifying potential failures.

However, these verification methods are primarily applicable to small BTs or individual subtrees. They do not provide mechanisms for automatically extracting and analyzing data dependencies across large-scale BTs with numerous subtrees. Consequently, the following research challenge arises:

\begin{mdframed}[style=summary, innertopmargin=5pt]
\textbf{C1: Hidden data dependencies and type mismatches hinder modular composition of task execution models.}
As task hierarchies and independently developed task execution models are composed, implicit data dependencies and mismatches arising from shared data and undeclared interfaces may lead to type conflicts, unintended side effects, and non-deterministic runtime behavior~\cite{Matteo24}.

\end{mdframed}

\subsection{Limited Support for Localized Task Plan Replacement (C2)}

CRF task plans are multifaceted and often consist of multiple independently developed plan modules~\cite{sherkat_potentials_2022}. During execution, changes in the environment, resource availability, or task failures may invalidate only a subset of the overall task execution plan~\cite{wang_task_2025}. Ideally, only the affected execution model should be re-planned and replaced, while the remaining execution structure is preserved. Existing task execution modeling languages provide limited support for such localized replanning~\cite{wang_task_2025}. 

State-machine-based approaches require modifying the corresponding transition structures when replacing parts of an execution model, as adding or removing a state requires updating all related transitions across the machine~\cite{Matteo24, ghzouli_behavior_2023}. While some state-machine frameworks, such as FlexBE, do support modifying the system while it is running, this works by reloading and swapping in the entire execution model, rather than a small, targeted fix to just the affected structure~\cite{ghzouli_behavior_2023}. Furthermore, when a plan becomes infeasible during execution, FSM-based systems typically discard the running plan entirely and replan from scratch, as seen in systems where failure outcomes are fed back to an external planner that generates a new state machine for execution~\cite{murata_multi-robot_2026}. Critically, the runtime modification mechanisms that exist in FSM frameworks often require human intervention to initiate structural changes during execution~\cite{schillinger_human-robot_2016}.

BTs typically support flow node types of sequence, fallback, parallel, and utility~\cite{Colledanchise18}. The execution logic for these flow node types is predefined and mixed with ordering semantics. Consequently, in more complex task planning scenarios, it is often necessary to develop multiple flow node types tailored to specific problems~\cite{Matteo24}. This rigid structure of BT flow nodes can lead to many layers of hierarchy, for example, when adjusting the BT nodes dynamically at runtime~\cite{Matteo24}. Hence, while the simplicity of flow node types is important and enhances modularity, it can also result in complex and dysfunctional structures with many layers of hierarchy, making it difficult to scale up and maintain BTs~\cite{Matteo24}. 
Segmenting BTs and using LLMs to explain each subtree has been proposed to make complex BT structures more understandable~\cite{Hang_Su_2023}. This segmentation makes it easier for non-expert users to grasp BT structure and functionality.  This approach, while increasing understandability, does not address the challenge of rigid BT structures. 

The modularity of BTs allows for the programming of problem-specific flow nodes and subtrees, but this can also lead to behavior models that require defining and maintaining many problem-specific flow nodes (e.g., constraints between actions such as temporal constraints), thus limiting flexibility~\cite{Marzinotto_ogren}. Specifically, modeling reactive tasks using BTs can become too complicated, which is why "stateful behavior trees (SBTs)" were introduced, a method that borrows from BTs and state charts~\cite{serbinowska2023formalizing}. SBTs differ from BTs as they execute all nodes in one tick and reduce the reliance on predefined  flow nodes. Each node in this method determines the next node based on the conditions and transitions modeled within that node. On the other hand, unlike state charts, which focus on system states and event-triggered transitions without managing memory, SBTs incorporate both memory (the blackboard) and environmental conditions. This allows SBTs to adapt execution models based on internal and external factors, making them more suitable for dynamic environments, but also limits them to very specific and simple planning problems~\cite{serbinowska2023formalizing}. Another state machine-inspired approach models nodes as states with preconditions and effects and uses these conditions for state transitions, but the method abstracts away the internal structure of BTs, meaning that the relation between actions and states is not specified explicitly as in BTs~\cite{Paxton19}. The authors argue that this method allows for easier programming of complex behaviors compared to hard-coding multiple flow nodes and can quickly react to changing circumstances without the need for extensive manual engineering of tree structure. While these two methods enhance efficiency for simple and specific planning problems, they share scalability issues with state charts, as transitions must be modeled for each state, reducing modularity and scalability. Consequently, the following research challenge arises:

\begin{mdframed}[style=summary, innertopmargin=5pt]
\textbf{C2: Existing execution modeling languages provide limited support for locally replacing or replanning task plans at runtime.} 
 Adapting to execution failures or environmental changes often requires restructuring or regenerating the whole module, limiting flexibility and leading to state loss.

\end{mdframed}

\subsection{Limited Automated Generation of Execution Models (C3)}

Execution modeling languages specify task execution and do not perform task planning. Consequently, execution models must either be constructed manually or generated automatically from planning algorithms~\cite{Matteo24}. Automated generation of task execution models from symbolic planners has been explored for several modeling languages. For example, ROSPlan~\cite{cashmore_rosplan_2015} links PDDL planning to ROS execution by dispatching plan actions sequentially through a centralized dispatcher. Another study~\cite{ridge_rapid_2017} generates state machine code automatically, but from human-authored meta-scripts. FSM synthesis from formal specifications such as linear temporal logic is theoretically well-founded but has proven computationally intractable at scale~\cite{kress-gazit_synthesis_2018}. 

BTs are typically modeled manually. Manual BT design is often effective for simple planning problems due to its efficiency and ease of design~\cite{Matteo24}. For example, modeling task and motion of a robot pushing objects~\cite{Filippo_Bertoncelli_2024}, or modeling multiple behavior strategies as separate subtrees for robots to choose from~\cite{JH_Cloete_2024}.  
Although manual BT design, including designing the initial nodes and BT structure, demonstrates success in simple planning problems, it becomes time-consuming and unmanageable as planning complexity increases, making maintenance, debugging, and verification increasingly challenging~\cite{iovino_survey_2022}. Automated symbolic AI task planners and sub-symbolic task planners have been used to automate BT generation~\cite{iovino_survey_2022}. The following sections analyze the challenges associated with each category.

\subsubsection{Symbolic AI Planners}

Symbolic AI planners offer structured approaches to BT generation and validation, but they face challenges in scalability and flexibility~\cite{iovino_survey_2022}. For example, hierarchical task networks (HTNs) have been used to generate BTs~\cite{ Rovida17,Neufeld18}. HTN’s high-level decomposable nodes (called methods) become sequence nodes in BT, and fallback nodes represent decomposition options~\cite{Neufeld18}. Another approach generates a plan using HTN in a similar way, and then, based on preconditions and effects that are modeled as decorators in BTs, the parallel nodes will be identified to optimize execution~\cite{Rovida17}. They also introduce a shared semantic model where nodes are the objects and edges are the semantic relations between them to be used for communication between the tree and the planner. Our method is inspired by this work, as we also model preconditions and effects, though not as decorators. However, this method for BT generation leads to brittle BTs as the structure of the tree is determined by the HTN planner, and no further information is mentioned on how the planner is synchronized with BT architecture and the semantic world model. Similarly, PlanSys2~\cite{martin_plansys2_2021} generates executable BTs from the output of PDDL-based symbolic planners using BehaviorTree.CPP. However, the generation process is tightly coupled to the underlying planner representation, making it difficult to generalize across different symbolic planning paradigms. 

The back-chaining algorithm, used in AI planners, has also been employed to generate BTs~\cite{Colledanchise23, Tadewos19}. Starting from the goal condition, it finds actions that meet the conditions. Then it looks at the preconditions of those actions and tries to find actions that satisfy them in the next layer of BT hierarchy, and so on~\cite{Colledanchise23}. Another study uses the same algorithm but focuses on multi-agent synchronization using an auction-based approach~\cite{Tadewos19}. 

Other approaches include manually generating the initial BT and automatically generating BT nodes at runtime upon the failure of a node by evaluating a list of actions to identify those whose postconditions contain the preconditions of the failed node~\cite{Ning_Li_2022}. Additionally, a conceptual multi-layered architecture for robot control has been proposed where the top layer handles task planning using symbolic AI planners, the middle layer employs BTs, and the lowest layer directly interacts with the robot hardware~\cite{Marzinotto_ogren}. However, all of these symbolic planning methods are inherently limited to simple task planning problems as they face challenges such as combinatorial explosion, and considering the ease of designing BTs manually, they do not offer any practical use~\cite{Matteo24}.

\subsubsection{Sub-symbolic AI Planners}

Several sub-symbolic AI methods have been used to automatically generate BTs, such as LLMs, learning BTs, genetic algorithms, and case-based reasoning.

Two recent approaches use LLMs to automate the generation of BTs from natural language instructions, with the aim of simplifying robot programming~\cite{J_Liang_2024,Izzo24}.  The DiaGBT method~\cite{J_Liang_2024} receives instructions in natural language, analyzes them using LLMs to identify key components, generates individual BT nodes based on these components, and then assembles them into a coherent BT, again using LLM for the assembly process. The second approach develops a dataset of 600 BTs paired with natural language task descriptions to generate BT models~\cite{Izzo24}. The generated BTs are evaluated across nine robotic tasks, including navigation and manipulation~\cite{Izzo24}.

Another challenge is for the AI agent to effectively learn to represent and manipulate BTs using LLMs, which has been addressed by transforming BTs into vector spaces. For example, a method has been proposed to encode BTs into a compact numerical form, aiming to enhance the learning capabilities and adaptability of robotic tasks~\cite{Y_Chao_2022}. This approach leverages pre-trained word embeddings and a node-aggregation mechanism to capture both the semantic information of node descriptions and the structural hierarchy of BTs~\cite{Y_Chao_2022}.  Similarly, the WE-BTR method also relies on vector embeddings to represent BTs~\cite{Hang_Su_2023}. However, this method focuses on recommending BTs by segmenting them into subtrees, aggregating nodes into vectors, and then evaluating the similarity between BTs based on these vectors.

Other AI methods used for the automatic generation of BTs include learning from demonstration, complemented by genetic algorithms, which enable the robot to understand tasks based on human input and optimize performance metrics~\cite{Matteo_Iovino_2023}. Case-based reasoning is another method in which the system maintains a "memory" of past BT states. When a new situation arises, it queries this memory, compares the current state to previous cases using a similarity function, and retrieves the most similar state~\cite{Florez21, Florez10}. Finally, genetic algorithms are used to evolve BTs by first creating an initial population of BTs with random structures and parameters. Each BT is then evaluated in the population by running simulations or real-world tests~\cite{Scheper16}.

However, sub-symbolic AI-based approaches have no method to accurately verify the task plan, and their performance reduces as the number of tasks increases. Additionally, the performance of these data-based methods is largely based on the quality of the training data. This dependency can compromise consistency and flexibility, particularly in unpredicted situations where robot tasks may change unexpectedly. Consequently, the remaining research challenge is:

\begin{mdframed}[style=summary, innertopmargin=5pt]
\textbf{C3: Limited scalability and robustness of automated task execution model generation.}
Automated generation of execution models with sub-symbolic AI planners is unreliable, and symbolic AI planners have limitations: they focus on niche problem types, struggle with complex planning due to combinatorial explosion, and produce brittle execution models that fail with slight input variations. As a result, scalable and modular generation of task plans that can be reliably transformed into execution models remains an open challenge.
\end{mdframed}

\begin{table*}[t]
\centering
\footnotesize
\caption{Summary of research questions and corresponding metrics.}
\label{tab:rq_metric_summary}
\begin{tabularx}{\textwidth}{p{0.8cm} X p{1.6cm} X}
\toprule
\textbf{Q} & \textbf{Evaluation Question} & \textbf{Metric} & \textbf{Metric Description } \\
\midrule

Q1--Q4 & Detection or prevention of structural, syntactic, type, and logical errors & M1 ($TPR$) & Ratio of injected errors correctly detected \\
 &  & M2 ($PR$) & Ratio of errors prevented by design \\

Q5 & Preservation of structure during local replanning & M3 ($\overline{G}_{\text{tree}}$) & Average structural change after replanning \\

Q6 & Scalability of repeated local replanning & M4 (RF) & Number of replanning events per minute \\
 &  & \hl{M5 ($NRR$)} & \hl{Ratio of nodes retained in the BT to generated nodes} \\

Q7 & Composition of plans from multiple symbolic planners & M6 & Number of distinct symbolic planners applied \\
 &  & M7  & Number of planning and replanning applications \\

Q8 & Decomposition and recomposition of planning problems & M8 ($PL$) & Cumulative and normalized planning latency \\
 &  & M9 ($DF$) & Average executed planning steps per planning module \\

Q9 & Plan correctness & M10 ($PPR$) & Ratio of logically correct task steps \\
 &  & M11 ($SSR$) & Ratio of superfluous task steps \\

 Q10 & Runtime performance 
 &   M12 ($\overline{CAL}$) & Average latency between consecutive action executions \\

\bottomrule
\end{tabularx}
\end{table*}

\section{Research Questions and Evaluation Goals} \label{reqs}
Based on the three challenges identified in the related work section, we briefly describe how each challenge can be addressed, followed by the corresponding evaluation goals, research questions (Q), and evaluation metrics (M) defined using the Goal–Question–Metric (GQM) method. The GQM method provides a structured framework for deriving measurable evaluation criteria from high-level goals~\cite{basili_methodology_1984}. A summary of metrics is provided in Table~\ref{tab:rq_metric_summary}.

\subsection{Error Detection of Task Execution Models}

To address the implicit data dependencies that hinder modular composition and reuse of task execution models (C1), modeling inconsistencies should be detected or prevented before they result in runtime crashes or silent failures. Hence, the \textbf{goal} is to analyze APTree task execution models for the purpose of evaluating their ability to detect or prevent modeling and data inconsistencies before or during execution, from the viewpoint of CRF developers, in the context of symbolic task planner integration.
 \textbf{Questions:}
 \begin{itemize}

 \item \textbf {Q1:}
To what extent does the task execution model detect or prevent invalid structural configurations in the planning and execution models (M1, M2)? For example, can it detect a missing planning domain or problem definition?

\item \textbf {Q2:} To what extent does the task execution model detect or prevent undefined or inconsistent types (M1, M2)? For example, can it detect undefined node types (e.g., \texttt{PickUp} action that is not registered) or interface mismatches between execution model tasks?

\item \textbf {Q3:}
To what extent does the task execution model detect or prevent syntactic inconsistencies in planning and execution models (M1, M2)? For example, can it detect missing parentheses or invalid node declarations?

 \item \textbf {Q4:}
To what extent does the task execution model detect or prevent logical inconsistencies arising from independently generated task execution models (M1, M2)?
For example, can it detect that the execution of plan A invalidates plan B?

 \end{itemize}
To operationalize Q1–Q4, we define two metrics: M1, the true positive rate (TPR), which measures the proportion of injected structural, typing, syntactic, and logical errors detected at design time before causing a runtime crash or silent failure, and M2, the prevention rate (PR), which measures the proportion of injected errors prevented by design.

\textbf{M1, true positive rate (TPR):}
The error detection rate is computed as the true positive rate (recall):

\begin{equation}
\textbf{M1:}\quad
\mathrm{TPR} =
\frac{E_{DT}+E_{PT}}
{E_{DT}+E_{PT}+E_{RT}+E_{SF}+E_{RTC}}
\times100
\end{equation}

where $E_{DT}$, $E_{PT}$, and $E_{RT}$ denote the numbers of injected errors detected at design time, planning time (detected by the planner's parser), and runtime, respectively; $E_{SF}$ and $E_{RTC}$ denote the numbers resulting in silent failures and runtime crashes or failure. Errors classified as not applicable (NA) and prevented by design (PD) are excluded. A higher TPR indicates stronger design time error-detection capability. Because the evaluation is based on injected violations of formally specified well-formedness constraints, false positives are not applicable. Consequently, precision is 100\%, and the F1 score equals $\frac{2\cdot\mathrm{TPR}}{1+\mathrm{TPR}}$.

\textbf{M2, prevention rate (PR):}
In addition to TPR, error classes prevented by design are computed as:

\begin{equation}
\textbf{M2:}\quad
\mathrm{PR} =
\frac{E_{PD}}
{E_{PD}+E_{DT}+E_{PT}+E_{RT}+E_{SF}+E_{RTC}}
\times100
\end{equation}

where $E_{PD}$ denotes the number of errors prevented by design. As with M1, errors classified as not applicable (NA) are excluded. A higher PR indicates that more errors are eliminated by method design.

\subsection{Support for Local Task Replanning}

To address the challenge of inflexible execution models for local replanning (C2), the execution model must support localized runtime modification without reconstructing the entire execution model. Hence, the \textbf{goal} is to analyze APTree for the purpose of evaluating its support for localized runtime replanning of independently developed task execution models, from the viewpoint of construction robotics developers in the context of adaptive automated task execution.
\textbf{Questions:}

\begin{itemize}

\item \textbf{Q5:} To what extent does local replanning preserve the structure of the running task execution model (M3)?
   
\item \textbf{Q6:}  To what extent does the approach sustain execution through repeated local replanning without restarting the execution model or causing execution failures (M4, M5)?

\end{itemize}

To operationalize Q5--Q6, we define three metrics: M3, the structural change ($G_{\text{tree}}$), addressing Q5; M4, replanning frequency in one run (RF); \hl{and M5, node retention rate (NRR), addressing Q6.}

\textbf{M3, structural change ($G_{\text{tree}}$):}
The structural impact of local replanning is quantified across three BT dimensions: size (number of nodes), breadth (maximum width), and depth (longest path)~\cite{GUGLIERMO24}. The relative change in each dimension is computed as:
$
g_n=\left|\frac{N_1-N_0}{N_0}\right|$, $
g_w=\left|\frac{W_1-W_0}{W_0}\right|$, $
g_d=\left|\frac{D_1-D_0}{D_0}\right|,
$
where $N$, $W$, and $D$ denote the number of nodes, maximum width, and depth of the BT before ($N_0$, $W_0$, $D_0$) and after ($N_1$, $W_1$, $D_1$) replanning. The structural change for replanning event $i$ is computed as:
\begin{equation}
\textbf{M3:}\quad
G_{\text{tree},i}=\frac{g_{n,i}+g_{w,i}+g_{d,i}}{3}\times100\qquad
\overline{G}_{\text{tree}}=\frac{1}{R}\sum_{i=1}^{R}G_{\text{tree},i}
\end{equation}
where $R$ is the total number of replanning events. Lower $\overline{G}_{\text{tree}}$ values indicate higher structural preservation.

\hl{\textbf{M4, replanning frequency (RF) and M5, node retention rate (NRR):}
To assess the scalability of repeated local replanning, replanning frequency and node retention rate are computed as:}
\begin{equation}
\textbf{M4:}\quad
\mathrm{RF}=\frac{R}{T_{\mathrm{min}}},\qquad \textbf{M5:}\quad
\mathrm{NRR}=\frac{N_{\mathrm{retained}}}{N_{\mathrm{generated}}}\times100
\end{equation}
\hl{where $R$ is the total number of replanning events triggered, $T_{\mathrm{min}}$ is the execution time in minutes, $N_{\mathrm{generated}}$ is the total number of execution nodes generated across all planning and replanning events, and $N_{\mathrm{retained}}$ is the number of execution nodes that remain in the final BT and are executed to completion (success or failure). All metrics are measured over a single execution run. RF measures how often the execution model is modified at runtime, and NRR the proportion of generated execution nodes retained in the final BT. Together they quantify the degree of runtime plan change the framework sustains without restarting execution; they do not measure task outcomes, failure severity, or recovery performance.}

\subsection{Scalable and Robust Automated Execution Model Generation}

To address the challenge of scalable and robust automated execution model generation (C3), execution models must support the integration of task plan modules generated by multiple heterogeneous symbolic planners while maintaining the logical correctness of the overall execution model at runtime. Hence, the \textbf{goal} is to analyze the APTree framework for the purpose of evaluating its support for scalable and modular automated execution model generation using symbolic planners from the viewpoint of CRF developers, in the context of modular plan composition.
\textbf{Questions:}

\begin{itemize}
   \item \textbf{Q7:} To what extent can the approach compose sub-plans generated by multiple symbolic planners, potentially invoked multiple times across planning modules, to solve a single planning problem (M6, M7)?

    \item \textbf{Q8:} To what extent can large-scale symbolic planning problems be decomposed into smaller sub-problems and their resulting task plans recomposed into a single execution model (M8, M9)?

    \item \textbf{Q9:} To what extent does the automatically generated execution model preserve the logical correctness of the composed task plan (M10, M11)?

   \item \textbf{Q10:} What is the runtime latency between consecutive action executions, and how do runtime planning and replanning affect this latency (M12)?
\end{itemize}

To operationalize Q7--Q10, we define M6, planner diversity, and M7, planning application count, for Q7; M8, planner latency (PL), and M9, decomposition factor (DF), for Q8; M10 and M11 to assess the logical correctness and efficiency of the generated task execution model for Q9; and M12 to measure the runtime latency between consecutive action executions for Q10.

\textbf{M6-M7, planner diversity and planner application count:}
Planner diversity measures the number of distinct planners successfully invoked during a single execution, demonstrating support for heterogeneous planner application. The planner application count measures the total number of planning and replanning applications at runtime. Higher values indicate stronger support for repeated autonomous planning in independently executed planning modules.

\textbf{M8-9, \hl{p}lanner latency (PL) and decomposition factor (DF):}
These metrics evaluate the efficiency and modularity of planning. Planner latency (PL) measures the overhead of applying a symbolic planner, including planner invocation, model transformation, and plan integration, and is reported as both cumulative and normalized latency during execution. The decomposition factor (DF) quantifies the average number of executed planning steps contributed by each planning submodule. 
\begin{equation}
\textbf{M8:}\quad
\mathrm{PL}_{\mathrm{cum}}=\sum_{i=1}^{N} t_i\qquad,\qquad
\mathrm{PL}_{\mathrm{norm}}=\frac{\mathrm{PL}_{\mathrm{cum}}}{T_{\mathrm{exec}}}\times100\qquad
\textbf{M9:}\quad
\mathrm{DF}=\frac{\mathrm{FPS}}{B_{\mathrm{parent}}}
\end{equation}
Here, $t_i$ is the planning latency of the $i$-th planning or replanning invocation, $N$ is the total number of planning and replanning invocations, $T_{\mathrm{exec}}$ is the total execution time expressed in the same time unit as $t_i$, $B_{\mathrm{parent}}$ is the number of planning submodules contributing at least one executed action to the final composed plan, and $\mathrm{FPS}$ is the final executed plan length. PL measures the ratio of planning time to full execution time, whereas lower DF values indicate finer decomposition of the final plan across planning submodules, reflecting greater structural modularity and support for localized planning.

\textbf{M10-11, plan parity ratio (PPR) and superfluous step ratio (SSR):}
These metrics evaluate the logical correctness of automatically generated task plans against a reference plan. Logical correctness is assessed by preserving task dependencies and execution ordering, rather than exact action names or the number of planning steps. Accordingly, the PPR measures the proportion of logically equivalent reference task steps preserved in the generated plan, while the SSR measures the proportion of generated task steps that are not required by the reference plan: 
\begin{equation}
\textbf{M10:}\quad
\mathrm{PPR}=\frac{S_{\mathrm{matched}}}{S_{\mathrm{reference}}}\times100,\qquad
\textbf{M11:}\quad
\mathrm{SSR}=\frac{S_{\mathrm{extra}}}{S_{\mathrm{reference}}}\times100
\end{equation}
Here, $S_{\mathrm{matched}}$ is the number of reference task steps whose logical dependencies and execution ordering are preserved in the generated plan, $S_{\mathrm{reference}}$ is the total number of task steps in the reference plan, and $S_{\mathrm{extra}}$ is the number of generated task steps that are not required by the reference plan. Higher PPR values indicate greater preservation of the reference plan's logical structure, whereas lower SSR values indicate fewer unnecessary planning steps.

\textbf{M12, consecutive action latency (CAL):}
This metric measures the time elapsed between the completions of consecutive actions during execution, including any planning or replanning delay incurred between them:
\begin{equation}
\textbf{M12:}\quad
\mathrm{CAL}_i=c_{i+1}-c_i,\qquad
\overline{\mathrm{CAL}}=\frac{1}{N-1}\sum_{i=1}^{N-1}\mathrm{CAL}_i
\end{equation}

\section{APTree Domain Specific Language} \label{packages}
This section defines our DSL in separate packages. Each package begins with the definition of the abstract syntax using a metamodel, which specifies the core concepts of the DSL package and their relationships, followed by structural well-formedness rules. The concrete syntax is then defined using MontiCore grammars, together with an example of a valid model. Finally, we describe the behavior model and the behavioral well-formedness rules related to each package. This structure is applied consistently across all DSL packages.

\subsection{Behavior Tree DSL Package } \label{bt}

\begin{figure*}[t]
    \centering
    \includegraphics[width= 0.55\linewidth]{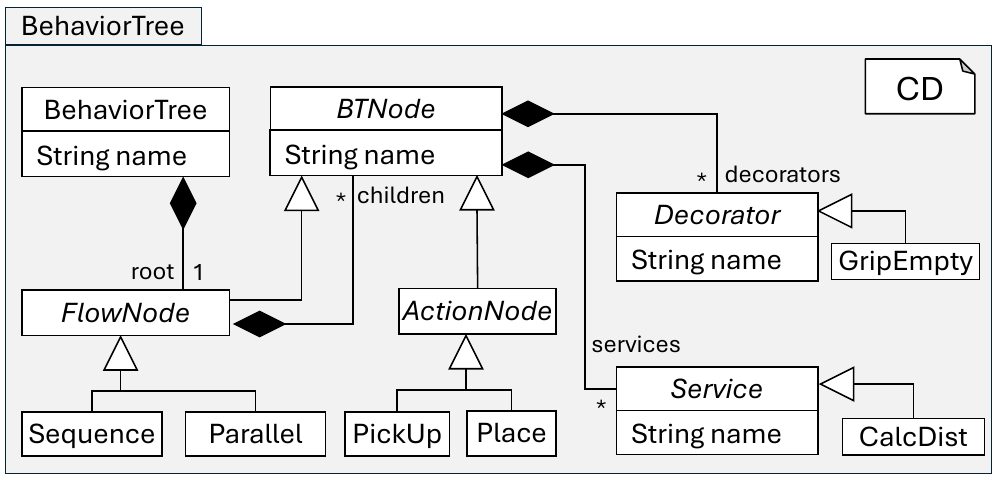}
\caption{BT metamodel used and extended in this paper for the proposed DSL, highlighting nodes, decorators, services, and their relations.}
    \label{fig:mm-bt}
\end{figure*}

The BT metamodel defines the traditional BT structure used by some existing BT implementations. We define a concrete text-based syntax for it and present the metamodel, as BT metamodels may vary slightly across tools. 

\textbf{Abstract syntax}: Figure~\ref{fig:mm-bt} presents the metamodel of the BT language. The BT language includes:
\begin{itemize}
   \item BT (\texttt{BehaviorTree}). Allows defining a BT instance model with exactly one flow node, which is always \texttt{root}.
   
    \item Nodes (\texttt{FlowNode} and \texttt{ActionNode}). These represent the two main types of nodes in a BT. Action nodes model executable actions (e.g., \texttt{PickUp}, \texttt{Place}), while flow nodes model control logic (e.g., \texttt{Sequence}, \texttt{Parallel}). 
    
\item Decorators and services (\texttt{Decorator}, \texttt{Service}). Decorators modify node execution or its outcome (e.g., \texttt{GripEmpty}), while services provide auxiliary functionality (e.g., \texttt{CalcDist}).

\end{itemize}

The BT metamodel is constrained by the following structural well-formedness rules. Names of nodes, BTs, services, and decorators must be unique within the model scope. Flow nodes may contain child nodes, whereas action nodes are leaf nodes and cannot have children. Furthermore, the first node in a BT is always designated as "Root".

\textbf{Concrete syntax}: Listing~\ref{lst:bt_revised} shows the concrete syntax defined using MontiCore grammar, and Listing~\ref{lst:robot_tree_instance} provides a valid instance of the BT from our running example (Figure~\ref{fig:runningexample}) using our textual DSL. We define BTs and nodes as symbols, enabling trees and nodes to reference each other by name when the grammar is extended in the subsequent packages. Concrete instances of actions, services, and decorators (e.g., PickUp and GripEmpty) are not restricted to the types defined in the metamodel and grammar; rather, they serve only as examples.

\begin{nolinenumbers}

\begin{lstlisting}[
    language=MontiCore, 
    basicstyle=\scriptsize\ttfamily, 
    basewidth=0.5em,
    lineskip=-1pt,
    keywordstyle=\color{blue}\bfseries,
    caption={The behavior tree MontiCore grammar defining the textual concrete syntax for behavior trees, nodes, services, and decorators.}, 
    label={lst:bt_revised}
]
 grammar BehaviorTree extends  de.monticore.types.MCBasicTypes {
      symbol BehaviorTree = "BehaviorTree" name:Name "{" "Root" root:FlowNode "}";
      // abstract node class
      abstract symbol BTNode = name:Name (Decorator | Service)*;   
      // flow nodes and action nodes
      abstract  FlowNode extends BTNode = name:Name  (Service | Decorator | children:BTNode)* ;
      Sequence extends FlowNode = "Sequence" name:Name "{" (Service | Decorator | children:BTNode)* "}";
      Parallel extends FlowNode = "Parallel" name:Name "{" (Service | Decorator | children:BTNode)* "}";
      abstract  ActionNode extends BTNode = "Action" name:Name ("{" (Decorator | Service)* "}")? ;
      PickUp extends ActionNode = "Action" name:Name ("{" (Decorator | Service)* "}")?;
      Place extends ActionNode = "Action" name:Name ("{" (Decorator | Service)* "}")?; 
      // services and decorators
      abstract Service = "Service" name:Name;
      abstract Decorator = "Decorator" name:Name;
      GripEmpty extends Decorator = "Decorator" name:Name;
      CalcDist extends Service = "Service" name:Name;
  }
\end{lstlisting}

\begin{lstlisting}[
    language=MontiCore, 
    basicstyle=\scriptsize\ttfamily, 
    basewidth=0.5em,
    lineskip=-1pt,
    keywordstyle=\color{blue}\bfseries,
    morekeywords={BehaviorTree, Root, Sequence, Parallel, Action, Decorator},
    caption={Example model describing the behavior tree from Figure~2 using our textual concrete syntax.}, 
    label={lst:robot_tree_instance}
]
BehaviorTree MyRobotTree {
  Root Sequence MainSequence {    
    Sequence HandlePlate {
      Action PickUpPlate1 
      Action PlacePlate1
    }   
    Parallel BeamPlacement {
      Sequence HandleBeam1 {
        Action PickUpBeam1 {
          Decorator GripEmpty
          }
        Action PlaceBeam1
      }
      Sequence HandleBeam2 {
        Action PickUpBeam2 
        Action PlaceBeam2 
            }
        }
    }
}
\end{lstlisting}

\end{nolinenumbers}

\begin{figure*}[h]
    \centering
    \includegraphics[width= 0.7\linewidth]{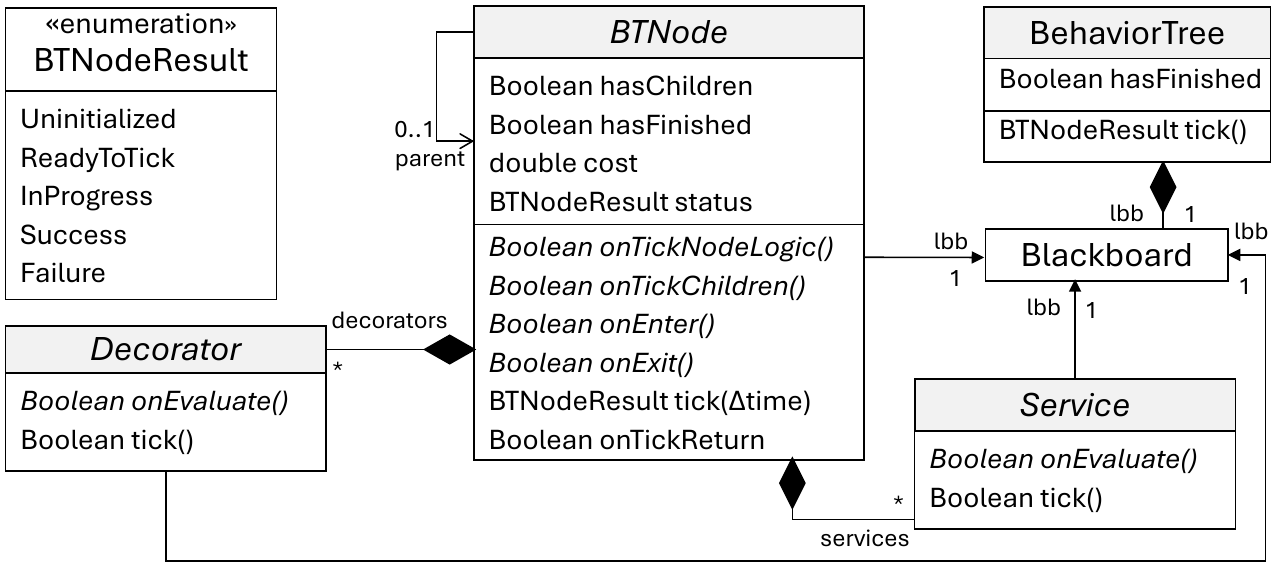}
\caption{Behavior model defining the runtime semantics of the BT metamodel. Gray boxes correspond to metamodel classes. \texttt{lbb} abbreviates \texttt{linkedBlackboard} to reduce diagram clutter.}
    \label{fig:exe-bt}
\end{figure*}

\textbf{Behavior:} Figure~\ref{fig:exe-bt} illustrates the execution model of BTs by extending the BT metamodel (Figure~\ref{fig:mm-bt}) with runtime concepts. Classes highlighted in gray correspond to the abstract syntax elements of the BT metamodel. 

When a BT node’s \texttt{tick} operation is invoked, it first ticks its services, then its decorators, and finally executes its node-specific logic through \texttt{onEnter}, \texttt{onTickNodeLogic}, and \texttt{onExit} operations (Algorithm~\ref{alg:tick_function}). The \texttt{onEnter} operation defines behavior that must occur before the main logic of the node, while \texttt{onExit} specifies behavior after the node finishes execution. If the node is a flow node with children, the tick is propagated to them using the \texttt{onTickChildren} operation. For action nodes, \texttt{onTickChildren} always returns true or delegates execution to another subtree, while for flow nodes, it defines how children are iterated. For example, a Sequence flow node ticks children sequentially from left to right, whereas a Parallel node ticks all children at the same time.

Regarding the \texttt{onTickNodeLogic}, action nodes implement their specific actions (e.g., a robot picking up a plate), while flow nodes evaluate their execution status based on the last status of their children (e.g., a Sequence node returns success only if all children succeed and returns failure as soon as one child fails). The \texttt{onTickReturn} operation handles status updates, post-processing decorators, and invokes \texttt{onExit} operation (Algorithm~\ref{alg:ontickreturn}).

Services and decorators implement a simpler tick logic (Algorithm~\ref{alg:tick_function-serv}). Their tick operation only invokes \texttt{onEvaluate} operation specific to each service or decorator type (e.g., \texttt{GripEmpty} checks the state of the gripper).

As described in Section~\ref{backg}, each time a node is ticked, it returns its last status (\texttt{BTNodeResult}). In addition to the standard statuses \emph{Success}, \emph{Failure}, and \emph{InProgress}, we include \emph{Uninitialized} and \emph{ReadyToTick}, which represent nodes that have never been ticked and nodes that are allowed to receive a tick, respectively.

The BT models used in this paper use a single shared database (\texttt{Blackboard}), which is accessible to all nodes, decorators, and services. The blackboard is modeled as a key–value data structure.
The well-formedness rules of the behavior model specify that there must be exactly one blackboard shared across the entire BT, and only nodes with the \emph{ReadyToTick} status may receive a tick. The \texttt{hasFinished} flag in \texttt{BTNode} only turns true if the node's last status is success or failure.

\begin{figure}[h]
\centering

\begin{minipage}[t]{0.48\textwidth}
\vspace{0pt} 
\begin{algorithm}[H]
\footnotesize
\DontPrintSemicolon
\caption{ operation tick() of BTNode}
\label{alg:tick_function}
\KwIn{$\Delta$ time}
\KwOut{BTNodeResult status}

\If{$status = \text{Uninitialized}$}{
    $status \gets \text{ReadyToTick}$\;
}

\tcp{1. Ticking services}
\ForEach{$svc \in node.services$}{
    \If{$\neg svc.\text{tick}()$}{
        $status \gets \text{Failure}$\;
        \Return{$\text{onTickReturn}(status)$}\;
    }
}

\tcp{2. Ticking decorators}
\ForEach{$dec \in node.decorators$}{
    \If{$\neg dec.\text{tick}()$}{
        $status \gets \text{Failure}$\;
        \Return{$\text{onTickReturn}(status)$}\;
    }
}

\tcp{Check if node finished}
\If{$\text{hasFinished}$}{
    \Return{$\text{onTickReturn}(status)$}\;
}

\tcp{3. Readiness check}
\If{$status = \text{ReadyToTick}$}{
    $\text{onEnter}()$\;
    \If{$\text{hasFinished}$}{
        \Return{$\text{onTickReturn}(status)$}\;
    }
}

\tcp{4. Node execution logic}
\If{$\neg \text{onTickNodeLogic}()$}{
    \Return{$\text{onTickReturn}(status)$}\;
}

\If{$node.hasChildren$}{
    \If{$\neg \text{onTickChildren}()$}{
        \Return{$\text{onTickReturn}(status)$}\;
    }
}

\Return{$\text{onTickReturn}(status)$}\;
\end{algorithm}
\end{minipage}
\hfill
\begin{minipage}[t]{0.48\textwidth}
\vspace{0pt}

\begin{algorithm}[H]
\footnotesize
\DontPrintSemicolon
\SetKwFunction{FExit}{onExit}
\caption{operation onTickReturn() of BTNode}
\label{alg:ontickreturn}
\KwIn{BTNodeResult $status$}
\KwOut{BTNodeResult $finalResult$}

$finalResult \gets status$\;

\tcp{Extension Point: Post-processing decorators}
\tcp*[h]{Implement decorator-based result modification here}\;

\If{$finalResult = \text{Success} \lor finalResult = \text{Failure}$}{
    $\text{hasFinished} \gets \text{true}$\;
    \FExit{}\;
}

\Return{$finalResult$}\;
\end{algorithm}
\vspace{1em} 
\begin{algorithm}[H]
\footnotesize
\DontPrintSemicolon
\SetKwFunction{FEval}{onEvaluate}
\caption{operation tick() of Decorator and Service }
\label{alg:tick_function-serv}
\KwOut{boolean $result$}

\tcp{Execute evaluation logic with time delta}
$result \gets$ \FEval{}\;

\Return{$result$}\;
\end{algorithm}
\end{minipage}
\label{fig:bt_logic_main}
\end{figure}

\subsection{Domain Types DSL Packages} \label{CRF}

\begin{figure}[h]
    \centering
    \includegraphics[width=\linewidth]{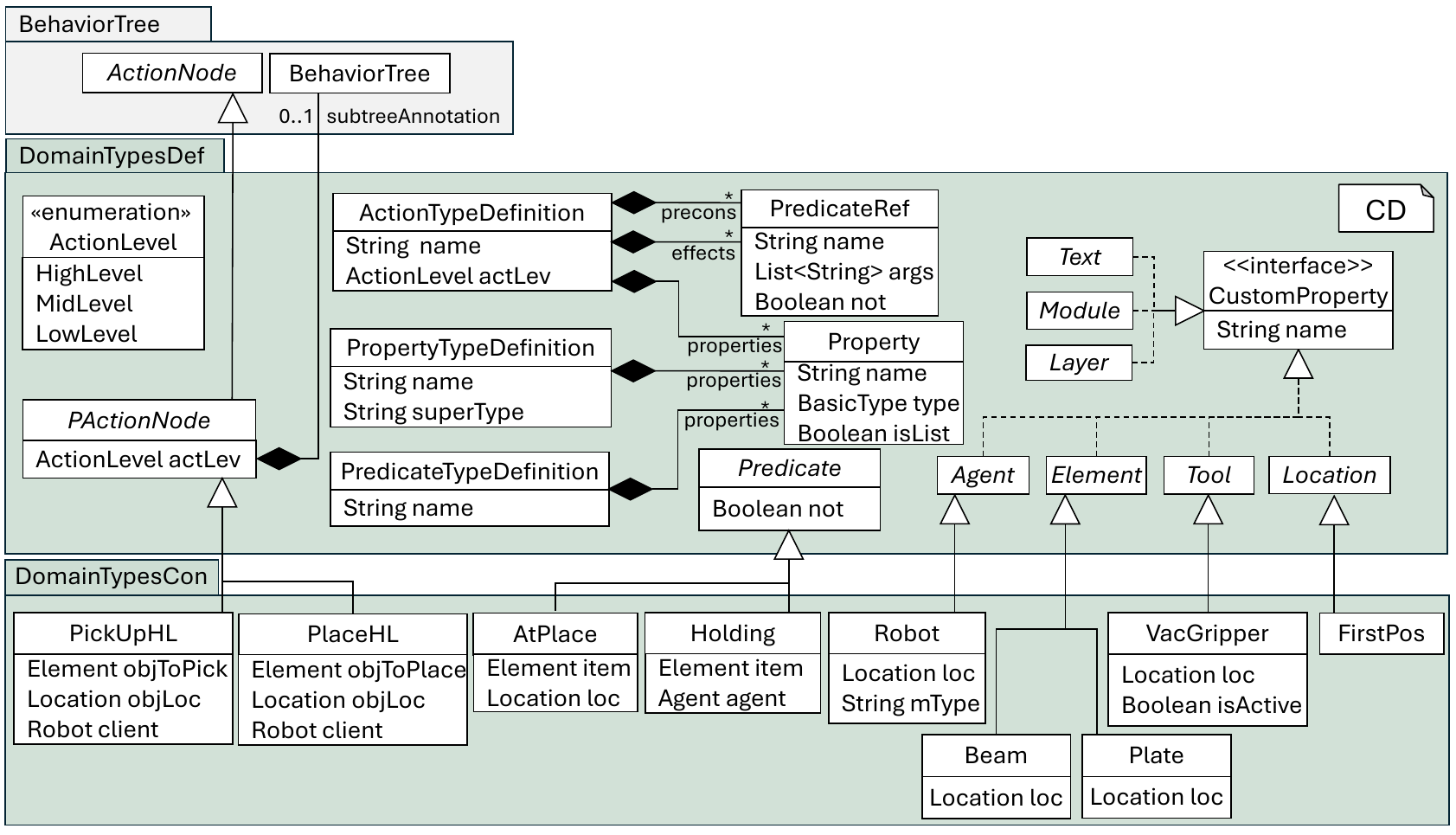}
    \caption{DomainTypesDef metamodel highlighting concepts for defining new property, predicate, and action types. The DomainTypesCon metamodel extends DomainTypesDef and is automatically generated from user-defined DomainTypesDef models. \texttt{BasicType} includes types such as String, Boolean, Integer, etc.}
    \label{fig:mm-types}
\end{figure}

The \texttt{DomainTypesDef} and \texttt{DomainTypesCon} metamodels (Figure~\ref{fig:mm-types}) support the explicit modeling of tasks within a task plan by representing task properties, planning preconditions, and effects. This representation enables design-time validation of task interfaces and runtime logical validation of task execution across dynamically composed task plans. To support different robotic fabrication domains, where robots, equipment, and fabrication setups vary considerably between applications, the metamodels define and instantiate extensible domain-specific types. Specifically, \texttt{DomainTypesDef} specifies the abstract type structure and type definitions, while \texttt{DomainTypesCon} instantiates these definitions as concrete domain types. This separation keeps the core DSL domain-independent while enabling domain-specific grammars to be generated automatically for different robotic fabrication applications. 

\textbf{Abstract syntax}: Figure~\ref{fig:mm-types} presents the metamodel of the domain types definition (\texttt{DomainTypesDef}) grammar, which extends the \texttt{BehaviorTree} grammar. The domain types definition grammar introduces the following concepts:

\begin{itemize}
    \item High-level supertypes (\texttt{CustomProperty}). High-level supertypes distinguish semantically different planning entities (e.g., agents, elements, and locations), enabling decisions on planning or replanning to depend on their roles rather than treating all task properties uniformly. For example, the unavailability of an agent to execute a task does not necessarily invalidate the current plan or require replanning, but may instead only require reassigning the task to another agent.
  
    \begin{itemize}
        \item \texttt{Element:} Objects within the environment (what), e.g., plates or beams.
        \item \texttt{Agent:} Actors performing tasks (who), e.g., robots or humans.
        \item \texttt{Location:} Spatial locations (where), e.g., stacking positions.
        \item \texttt{Tool:} Tools required to execute actions (how), e.g., robot gripper.
        \item \texttt{Module, Layer:} Organizational structures (how it is organized), e.g., groups of beams arranged in layers.
        \item \texttt{Text:} Textual types such as messages.
    \end{itemize}

    \item Type definitions (\texttt{PropertyTypeDefinition}, \texttt{PredicateTypeDefinition}, \texttt{ActionTypeDefinition}). Allow defining new domain types and extending the \texttt{DomainTypesCon} metamodel, while enforcing adherence to the grammar structure. For example, one can define new predicate types or new action types with their properties.

    \item Properties (\texttt{Property}). Allows the addition of planning-relevant properties for new action and predicate types. 

    \item Predicate references (\texttt{PredicateRef}). In symbolic task planning, actions have preconditions and effects as a list of predicates for each. Predicate references allow one to define these predicates using action properties as their variables (see lines 24 to 30 in Listing \ref{lst:type_model}). Modeling predicates separately from action properties enables validation of planning assumptions independently of task properties during composed plan execution. 

    \item Action levels (\texttt{ActionLevel}). The APTree method supports hierarchical action decomposition, where different levels of hierarchy represent varying degrees of action granularity and determinism. The three action levels separate design intent, adaptive planning, and execution control. This hierarchy localizes uncertainty to the ML level while preserving deterministic design constraints and bounded execution complexity.

    \begin{itemize}
        \item (\texttt{High-level or HL}). Deterministic actions that represent abstract design decisions and remain unchanged despite uncertainty. For example, \texttt{PlaceHL(beam1, pr2)} specifies placing \texttt{beam1} at its final location according to the planned design. A high-level action can reference one subtree for more granular mid-level planning (\texttt{PActionNode.subtreeAnnotation}).
        \item (\texttt{Mid-level or ML}). Non-deterministic actions that are more granular. Mid-level actions may require replanning and adjustments based on the changing state of the world.  For example, to place \texttt{beam1} at \texttt{pr2}, the robot might or might not need to travel to \texttt{pr2} depending on its current location. A mid-level action can reference one subtree for more granular low-level execution.
        \item (\texttt{Low-level or LL} ). Fine-grained non-deterministic actions that implement detailed execution. For example, a low-level action sequence stemming from \texttt{PickUpML} could include:
        
        \texttt{MoveEndEffectorToPosLL --> OpenGripperLL --> MoveEndEffectorBackLL}.
    \end{itemize}

    \item Planning actions (\texttt{PAction}). Allows the definition of action nodes compatible with APTree and the integration of symbolic task planners. These actions include:
    \begin{itemize}
        \item Planning level (HL, ML, LL) to distinguish between the level of granularity (\texttt{ActionLevel)}. 
        \item Optional references to subtrees to refine action granularity. For example, \texttt{PickUpHL} may reference a subtree that defines ML actions required to pick up an object.
        \item Preconditions, effects, and properties.
    \end{itemize}
\end{itemize}

The \texttt{DomainTypesCon} metamodel (Listing~\ref{lst:crfcongrammar}) is automatically generated as a grammar from the model defined using \texttt{DomainTypesDef}. For example, the types used in our running example are defined in the \texttt{DomainTypesCon} metamodel shown in Figure~\ref{fig:mm-types} and are automatically derived from the model in Listing~\ref{lst:type_model}.  
The following \textbf{structural well-formedness rules} must be satisfied when using these grammars:
\begin{itemize}

    \item The name of the supertype from which new properties are inherited (\texttt{PropertyTypeDef.superType}) must be one of the predefined supertypes: \texttt{Agent}, \texttt{Element}, \texttt{Location}, \texttt{Tool}, \texttt{Layer}, \texttt{Module}, or \texttt{Text} (see lines 2-6 in Listing \ref{lst:type_model} for valid example models defined to describe the types in our running example).

    \item Actions may only use predicate types for their preconditions and effects that are defined (e.g. in Listing \ref{lst:type_model}, action \texttt{PlaceHL} can have a predicate \texttt{Holding}, but not \texttt{Clear}).

    \item Actions may only use properties in their preconditions and effects that are defined for that action. Introducing new properties in predicates is not allowed (e.g., the predicate \texttt{Holding} in the action \texttt{PlaceHL} in Listing~\ref{lst:type_model} can use only \texttt{objToPlace}, \texttt{objLoc}, and \texttt{client} as its variables).   

    \item Subtree references must respect the action hierarchy: Low-level actions cannot reference subtrees; Mid-level actions may reference Low-level subtrees; High-level actions may reference Mid-level subtrees.

\end{itemize}

\textbf{Concrete syntax:} 
Listing~\ref{lst:crf_grammar} presents the concrete syntax for defining new property, predicate, and action types using the \texttt{DomainTypesDef} grammar. Listing~\ref{lst:crfcongrammar} shows the concrete syntax for defining instances of these types using the \texttt{DomainTypesCon} grammar, which is automatically generated from a \texttt{DomainTypesDef} model (Listing \ref{lst:type_model}).

The \texttt{DomainTypesDef} grammar extends the \texttt{BehaviorTree} grammar (Section \ref{bt}), and \texttt{DomainTypesCon} extends \texttt{DomainTypesDef}. Valid models for each grammar describing the types in our running example are shown in Listings~\ref{lst:type_model} and~\ref{lst:concrete_model}, respectively. 
For action types such as \texttt{PickUpHL} and \texttt{PlaceHL}, a small piece of Java code (e.g., lines 14 and 17)  is used to automatically set the value of the action level (\texttt{PAction.actLevel}).

\begin{nolinenumbers}

\begin{lstlisting}[language=MontiCore, caption={The DomainTypesDef MontiCore grammar defining the textual concrete syntax for specifying new types.},
label={lst:crf_grammar}]
grammar DomainTypesDef extends BehaviorTree, de.monticore.types.MCBasicTypes {
//  All the allowed custom super types
    interface CustomProperty = name:Name; 
    abstract symbol Layer    implements CustomProperty = name:Name;
    abstract symbol Module   implements CustomProperty = name:Name;
    abstract symbol Text     implements CustomProperty = name:Name; 
    abstract symbol Location implements CustomProperty = name:Name;
    abstract symbol Element  implements CustomProperty = name:Name;
    abstract symbol Tool     implements CustomProperty = name:Name;
    abstract symbol Agent    implements CustomProperty = name:Name;
// property definition - accepts any Name as type 
    Property = name:Name ":" type:BasicType isList:["+"]?;
    PropertyTypeDefinition = superType:Name  name:Name  ("{" (Property)* "}")?;
// predicate definition
    abstract Predicate = not:["!"]?;
    PredicateTypeDefinition = "Predicate" name:Name  "{" (Property)* "}";
// Action definition
    abstract symbol PActionNode extends ActionNode = 
        name:Name 
        actLevel: ActionLevel
        ("@" subtreeAnnotation:Name@BehaviorTree)? 
        ("{" (Decorator | Service)* "}")?
        ;
    enum ActionLevel = "HighLevel" | "MidLevel" | "LowLevel";
    ActionTypeDefinition = 
    "Define" "Action" name:Name "{"
        "ActLevel:" actLevel:ActionLevel
        "Properties" "{" Property* "}"
        "Preconditions" "{" precons:PredicateRef* "}"
        "Effects" "{" effects:PredicateRef* "}"
        "}";
// predicate reference 
    PredicateRef = not:["!"]? name:Name  "(" (args:Name ("," args:Name)*)? ")";
}
\end{lstlisting}

\begin{lstlisting}[language=MontiCore, 
caption={The DomainTypesCon grammar extending DomainTypesDef and including types automatically generated from DomainTypesDef models.},
  numbers=left,
  numberstyle=\tiny,
  numbersep=6pt,
   morekeywords={key, astrule, return},
label={lst:crfcongrammar}]
grammar DomainTypesCon extends DomainTypesDef {

// GENERATED PROPERTY RULES 
symbol Beam       extends Element  = "Beam" name:Name "(" loc:Name@Location ")";
symbol Plate      extends Element  = "Plate" name:Name "(" loc:Name@Location ")";
symbol Robot      extends Agent    = "Robot" name:Name "(" loc:Name@Location mType:Name ")";
symbol VacGripper extends Tool   = "VacGripper" name:Name "(" loc:Name@Location isActive:Boolean ")";
symbol FirstPos   extends Location = "FirstPos" name:Name "(" ")";
// GENERATED PREDICATE RULES 
Holding extends Predicate = not:["!"]? "Holding" "(" item:Name@Element agent:Name@Agent ")";
AtPlace extends Predicate = not:["!"]? "AtPlace" "(" item:Name@Element loc:Name@Location ")";
// GENERATED ACTION RULES 
PickUpHL extends PActionNode = "Action" "PickUpHL" name:Name "(" objToPick:Name@Element objLoc:Name@Location client:Name@Robot ")" ("{" (Decorator | Service)* "}")? ("@" subtreeAnnotation:Name)? ;
astrule PickUpHL = method public domaintypesdef._ast.ASTActionLevel getActLevel() { if(actLevel==null){ actLevel=domaintypesdef._ast.ASTActionLevel.HIGHLEVEL; } return actLevel; } ;

PlaceHL extends PActionNode = "Action" "PlaceHL" name:Name "(" objToPlace:Name@Element objLoc:Name@Location client:Name@Robot ")" ("{" (Decorator | Service)* "}")? ("@" subtreeAnnotation:Name)? ;
astrule PlaceHL = method public domaintypesdef._ast.ASTActionLevel getActLevel() { if(actLevel==null){ actLevel=domaintypesdef._ast.ASTActionLevel.HIGHLEVEL; } return actLevel; } ;
}
\end{lstlisting}

\begin{figure*}[h] 
\centering
\begin{minipage}[t]{0.48\textwidth}
\begin{lstlisting}[
    language=MontiCore, 
    caption={Example model of DomainTypesDef grammar describing the types used in our behavior tree running example from Figure 2}, 
    morekeywords={Define, as, Element, Location, Agent, Tool, boolean, Predicate, Action, ActLevel, Properties, String, Boolean, Preconditions, Effects, !},
    label={lst:type_model},
    basicstyle=\scriptsize\ttfamily
]
// property type model
Element  Beam {loc:Location}
Element  Plate {loc:Location}
Location FirstPos
Agent    Robot {loc:Location mType:String}
Tool     VacGripper {loc:Location isActive:Boolean}
// predicate type model
Predicate Holding {
  item : Element
  agent : Agent
}
Predicate AtPlace {
  item : Element
  loc : Location
} 
// action type model
Define Action PlaceHL {
    ActLevel: HighLevel
    Properties {
        objToPlace: Element
        objLoc: Location
        client: Robot
    }
    Preconditions {
        Holding(objToPlace, client)
    }    
    Effects {
        AtPlace(objToPlace, objLoc)
        !Holding(objToPlace, client) 
    }   
}
\end{lstlisting}
\end{minipage}
\hfill
\begin{minipage}[t]{0.48\textwidth}
\begin{lstlisting}[
    language=MontiCore, 
    caption={Example model of DomainTypesCon grammar describing the instances used in our behavior tree running example from Figure 2}, 
      morekeywords={Plate, Beam, Robot, FirstPos, VacGripper, Action, Root, AtPlace, BehaviorTree, Sequence, Holding, PickUpHL, PlaceHL},
    label={lst:concrete_model},
    basicstyle=\scriptsize\ttfamily
]
// property model
Plate      plate1 (fp1)
Beam       beam1 (fp2)
Beam       beam2 (fp3)
Robot      r1 (rp1 ur10)
FirstPos   fp1 ()
FirstPos   fp2 ()
FirstPos   rp1 ()
VacGripper vg1 (fp1 True)
// predicate model
AtPlace(beam2 fp1)
Holding (beam1 r1)
// actions in BT model
BehaviorTree MyRobotTree {
  Root Sequence MainSequence {    
    Sequence HandlePlate {
      Action PickUpHL PickUpPlate1 (plate1 fp1 r1)  @PickUpMLSubtree    
      Action PlaceHL PlacePlate1 (plate1 fp2 r1)    @PlaceMLSubtree
        }
    }
}
\end{lstlisting}
\end{minipage}

\end{figure*}

\begin{figure}[h]
    \centering
    \includegraphics[width=0.7\linewidth]{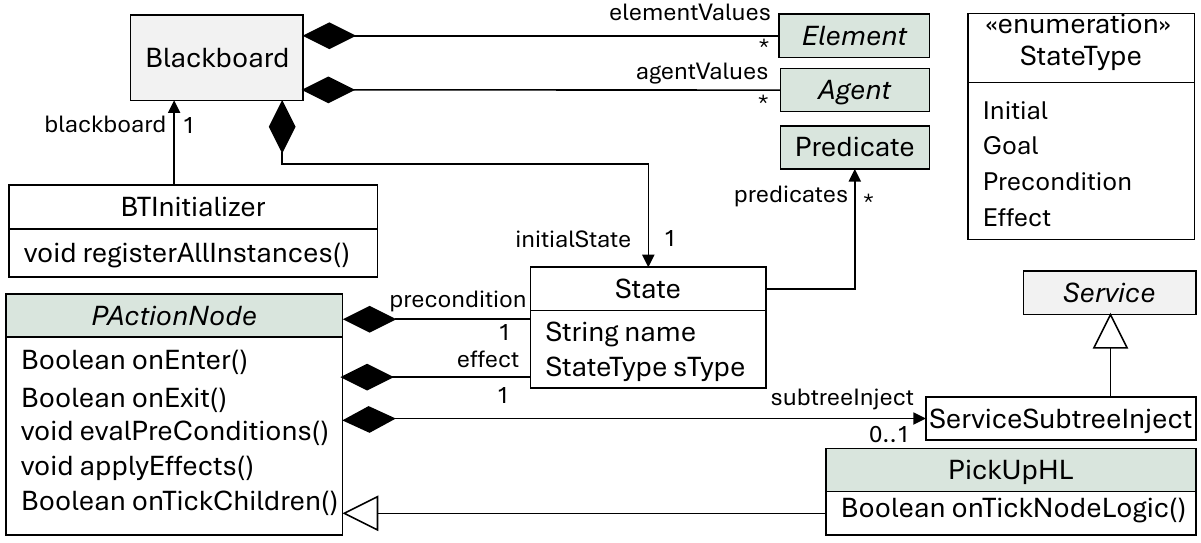}
    \caption{Behavior model extending the metamodels defined in this section to describe runtime behavior. It highlights concepts such as blackboard registration. Green boxes correspond to the \texttt{DomainTypesDef} or \texttt{DomainTypesCon} metamodel elements and gray boxes to the \texttt{BehaviorTree} metamodel elements. Custom types other than \texttt{Element} and \texttt{Agent} have similar blackboard relations but are omitted for clarity.}
    \label{fig:execrftypes}
\end{figure}
\end{nolinenumbers}

 \textbf{Behavior}: Figure~\ref{fig:execrftypes} illustrates the behavior model of the domain type metamodels. Classes highlighted in green correspond to classes of the \texttt{DomainTypesDef} and \texttt{DomainTypesCon} metamodels, while gray classes correspond to the \texttt{BehaviorTree} metamodel. 
A \texttt{State} has a list of predicates describing a world state and can be of four kinds: an initial state or goal state, which define the planning problem, or a precondition or effect, which define the conditions and outcomes of an action. Accordingly, each action type has two associated states: preconditions and effects. The blackboard maintains a global \texttt{initialState}, which holds a set of predicates representing the current world state, and acting as the single source of truth for planning modules. 

The execution semantics extend the standard BT behavior model (Figure~\ref{fig:exe-bt}) by implementing the \texttt{onEnter} and \texttt{onExit} operations for planning actions. During \texttt{onEnter}, the action's preconditions are checked (\texttt{evalPreConditions}) by querying the initial state from the blackboard. Predicate evaluation follows PDDL-style semantics: a predicate holds if it exists in the initial state, and does not hold otherwise.
After the node-specific logic is executed successfully with the \texttt{onTickNodeLogic} operation, the \texttt{onExit} operation applies the action’s effects by updating the blackboard through an \texttt{applyEffects} operation, which adds or removes predicates from the current state.

If an action references a refinement subtree, the subtree is ticked within the \texttt{onTickNodeLogic} operation because the higher-level action's effects must only be applied after the refinement subtree has been successfully executed. This enables hierarchical action decomposition across planning levels.
 It is not necessary for every task plan to include all levels of granularity. The subtree injection service (\texttt{ServiceSubtreeInject}) generates subtrees for higher-level actions. The effects of the higher-level action are used as planning goals to preserve the logical consistency of the task plan. The initial state is retrieved from the blackboard to automatically generate the problem file for the lower-level flow node.

The execution can be divided into two phases. The first phase performs input validation and instance generation using MontiCore and its generated ASTs. The second phase handles BT execution, which is implemented separately in a C\#-based execution engine. 
Algorithm~\ref{typeparsing} describes the first phase, which parses the user-defined type models. Property types are parsed first, followed by predicate and action types. After each successful parsing step, new types are written on the \texttt{DomainTypesCon} grammar and the grammar is recompiled (\texttt{generateMCGrammars} in Algorithm \ref{typeparsing}) to ensure that subsequent models are validated against the updated type definitions. In addition, C\# code is generated for the newly introduced types to support execution in the BT engine.

The same incremental process is applied when validating the initial BT model input, as shown in Algorithm~\ref{alg:typeparsing}. First, the property model is parsed to declare the scene objects (Listing~\ref{lst:concrete_model}, lines~1–8), and its symbols are added to MontiCore’s symbol table (\texttt{APTreeParser.parseAndGenSyms}). Next, predicates defining the initial state of the full planning problem are validated (Listing~\ref{lst:concrete_model}, lines~11–12). Finally, the initial input BT model is parsed using the fully constructed symbol table for properties and predicates (Listing~\ref{lst:concrete_model} lines~16–21). The initial BT model might or might not have any actions as planners can provide actions at runtime. This process ensures that all symbols referenced in the BT are validated at design-time. For example, if the BT references a symbol such as \texttt{beam3} that is not defined in the property model, a design-time error is raised. To ensure consistency between the types registered in the BT model and the execution engine, Algorithm~\ref{alg:typeparsing} also invokes the \texttt{writeInstancesAsJson} operation, which serializes all generated instances from the input models into JSON files for the BT execution engine.
Finally, the \texttt{BTInitializer} in Figure \ref{fig:execrftypes} reads these JSON files, instantiates the corresponding instances using factories, and registers them on the blackboard via the \texttt{registerAllInstances} operation. After the BT starts running, the BT model is validated each time a planner generates a new runtime plan for insertion into the tree.

\begin{algorithm}[h]
\DontPrintSemicolon
\footnotesize
\SetKwFunction{FParse}{parse}
\SetKwFunction{FWrite}{writeRulesToGrammar}
\SetKwFunction{FGen}{generateMCGrammars}
\SetKwFunction{FGenCS}{generateTypeInCSharp}

\caption{Domain model type parsing and grammar generation using MontiCore infrastructure}
\label{typeparsing}
\KwIn{Paths: $propPath$, $predPath$, $actPath$}
\KwOut{Boolean success or Error}

\tcp{1. Property Type Parsing}
\If{$\neg \text{DomainPropertyTypeParser}.\FParse{propPath}$}{
    \Return{\text{Error}}\;
}
\text{DomainPropertyTypeParser}.\FWrite{}\;
\text{DomainPropertyTypeParser}.\FGenCS{}\;
\FGen{}\;

\tcp{2. Predicate Type Parsing}
\If{$\neg \text{DomainPredicateTypeParser}.\FParse{predPath}$}{
    \Return{\text{Error}}\;
}
\text{DomainPredicateTypeParser}.\FWrite{}\;
\text{DomainPredicateTypeParser}.\FGenCS{}\;
\FGen{}\;

\tcp{3. Action Type Parsing}
\If{$\neg \text{DomainActionTypeParser}.\FParse{actPath}$}{
    \Return{\text{Error}}\;
}
\text{DomainActionTypeParser}.\FWrite{}\;
\text{DomainActionTypeParser}.\FGenCS{}\;
\FGen{}\;

\Return{\text{Success}}\;
\end{algorithm}

\begin{algorithm}[h]
\DontPrintSemicolon
\footnotesize
\SetKwFunction{FParseSym}{parseAndGenSyms}
\SetKwFunction{FSetBB}{writeInstancesAsJson}
\SetKwFunction{FSetAllBB}{writeInstancesAsJson}
\SetKwFunction{FWriteBT}{writeBTStructureAsJson}

\caption{Validation and symbol generation for scene items, initial state, and BT models}
\label{alg:typeparsing}
\KwIn{Paths: $propPath$, $predPath$, $btPath$}
\text{$propPath, predPath, btPath \neq \emptyset$ }
\KwOut{Boolean success or Error}

\tcp{1. Property Model Validation}
\If{$\neg \text{ConcretePropertyParser}.\FParseSym{propPath}$}{
    \Return{\text{Error}}\;
}
\FSetBB{properties}\;

\tcp{2. Predicate Model Validation}
\If{$\neg \text{ConcretePredicateParser}.\FParseSym{predPath}$}{
    \Return{\text{Error}}\;
}
\FSetAllBB{predicates}\;

\tcp{3. BT Structure Validation}
\If{$\neg \text{APTreeParser}.\FParseSym{btPath}$}{
    \Return{\text{Error}}\;
}

\tcp{4. Initial BT Structure Serialization}
\FWriteBT{}\;

\Return{\text{Success}}\;
\end{algorithm}

\subsection{Planning Service DSL Package} \label{plansrv}

\begin{figure}[h]   
\centering
    \includegraphics[width= 0.4\linewidth]{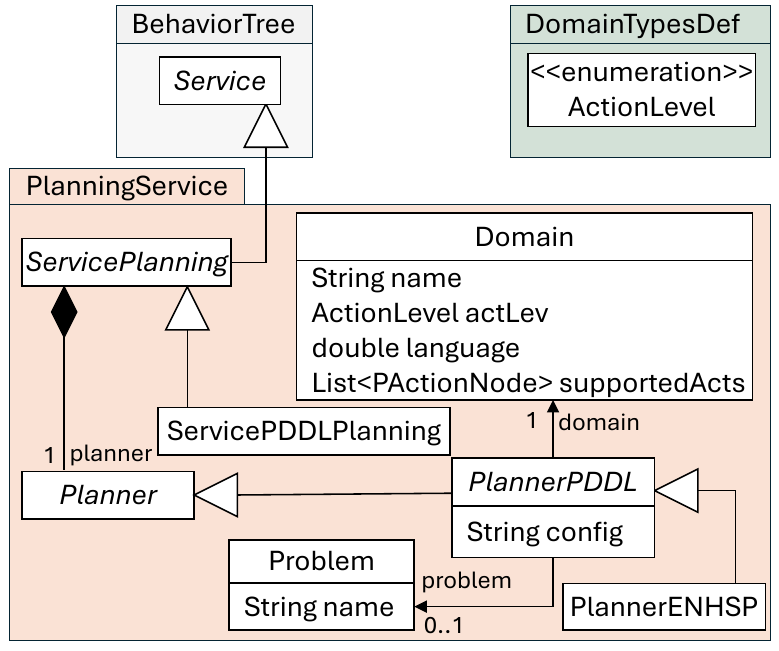}   
   \caption{Planning service metamodel defining automated planning as a BT service, highlighting concepts such as planning services, planners, and PDDL-based planners.}

    \label{fig:mm-plansrv}
\end{figure}

Figure~\ref{fig:mm-plansrv} presents the planning service metamodel (\texttt{ServicePlanning}), which provides a formal representation of symbolic planners and their configurations, enabling their specification, validation, and invocation by the execution model. Flow nodes reference these planner services to generate task plans as BT nodes. 

\textbf{Abstract syntax:} The planning service metamodel extends the BT service metamodel with the following concepts:
\begin{itemize}
   \item Planning services (\texttt{ServicePlanning}, \texttt{ServicePDDLPlanning}). Specify planning services for BT node generation by providing a planner name; \texttt{ServicePDDLPlanning} invokes a PDDL-based symbolic planner.
    
   \item Planners (\texttt{Planner}). Define planner tools by type and configuration, enabling multiple planners to be invoked through a common execution model.
    
   \item PDDL planners (\texttt{PlannerPDDL}). Model PDDL-based symbolic planners with a planning domain (\texttt{Domain}), problem file (\texttt{Problem}), and domain action level, ensuring compatibility between planning domains and action granularity (e.g., a domain containing mid-level actions such as \texttt{Travel} is not used for high-level planning).
\end{itemize}

  Structural well-formedness rules for this metamodel include the following: Only flow nodes may be connected to planning services; action nodes are not permitted. The action level of a planning domain must match the hierarchy level of the actions it generates. The planner used for the PDDL planning service must be of type \texttt{PlannerPDDL}.

\textbf{Concrete syntax}: The concrete syntax is shown in Listing \ref{lst:planner} with examples of valid models for planner definitions (lines 1-2 in Listing \ref{lst:model planner}) and for the use of planners in a part of our running example (lines 4-10 in Listing \ref{lst:model planner}). Planner tools other than ENHSP can similarly be added to the grammar.
\begin{nolinenumbers}
    
\needspace{12\baselineskip}

\begin{lstlisting}[language=MontiCore, 
caption={The planning service MontiCore grammar defining the textual concrete syntax for planning services and for planners.}, 
label={lst:planner}]
grammar PlanningService extends DomainTypesCon {
//planning service
abstract ServicePlanning  extends Service = name:Name planner:Planner;
ServicePDDLPlanning extends ServicePlanning = "ServicePlanning"  name:Name planner:Planner;
// Planners
abstract Planner ;
abstract PlannerPDDL extends Planner =  "Domain" ":" domain:Name@Domain   ("Problem" ":" problem:Name@Problem)? ("Config" ":" config:String)?;
PlannerENHSP extends PlannerPDDL = "Enhsp"  "Domain" ":" domain:Name@Domain   ("Problem" ":" problem:Name@Problem)? ("Config" ":" config:String)?;
symbol Domain = "Domain" name:Name  "LanguageVersion" ":" language:Double "Level" ":" actLev:ActionLevel "{" supportedActs:Name@PActionNode ("," supportedActs:Name@PActionNode)* "}";
symbol Problem = "Problem" name:Name;
}

\end{lstlisting}

\begin{lstlisting}[language=MontiCore,  morekeywords={BehaviorTree, Root, FlowNode, Nodegraph, Action, Service, ServicePlanning,Sequence, Domain, Problem, PddlPlanner, Enhsp}, 
caption={Example model showing the definition of planning service for the \texttt{HandlePlate} sequence node in our running example (lines 5 to 10) }, 
label={lst:model planner}]
ServicePlanning MyPlanner1 Enhsp Domain:PickAndPlaceHL Problem:PlacePlate1
Domain PickAndPlaceHL LanguageVersion:2.1  Level:HighLevel  {pickUphl, placehl}
Problem PlacePlate1

BehaviorTree MyRobotTree {
  Root Sequence MainSequence {    
    Sequence HandlePlate {
      ServicePlanning MyPlannerSrv1 Enhsp Domain:PickAndPlaceHL Problem:PlacePlate1
        Action PickUpHL PickUpPlate1 (plate1 fp1 r1) @PickUpMLSubtree    
        Action PlaceHL PlacePlate1 (plate1 pr1 r1) @PlaceMLSubtree
    }}}
\end{lstlisting}

\end{nolinenumbers}

\begin{figure}[h]   
\centering
    \includegraphics[width= 0.8\linewidth]{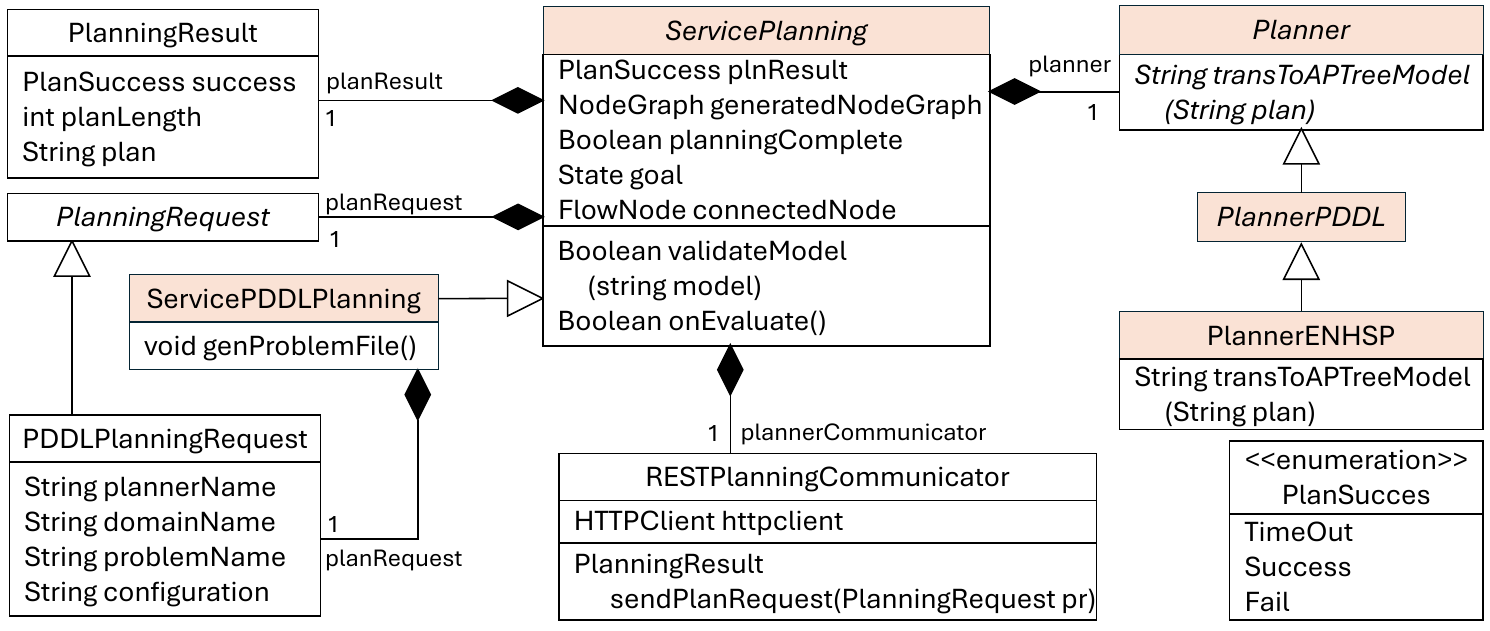}   
   \caption{Behavior model extending the planning service metamodel, highlighting components such as the \texttt{RESTPlanningCommunicator}, which sends planning requests and receives planning results. Classes highlighted in orange correspond to the same classes in the \texttt{PlanningService} metamodel.}
    \label{fig:exe-plansrv}
\end{figure}

\textbf{Behavior}:
Figure~\ref{fig:exe-plansrv} illustrates the behavior model for planning services. Boxes highlighted in orange correspond to the elements defined in the abstract syntax metamodel shown in Figure~\ref{fig:mm-plansrv}.
A PDDL planning service performs three main functions: invoking planners by providing the required information (e.g., domain and problem files for PDDL planners), receiving the resulting task plan, and transforming this plan into APTree DSL for validation.

Planning services interact with external planners through a REST-based communication architecture. As a result, each planning service maintains both a planning request (\texttt{PlanningRequest}) and a planning result (\texttt{PlanningResult}).

The execution logic of the \texttt{onEvaluate} operation of the planning service is as follows: 

\texttt{plannerCommunicator.sendPlanRequest} sends a planning request to the selected planner and, if a plan is found, receives it. If planning succeeds, the resulting plan model is transformed into an APTree model using \texttt{planner.transToAPTreeModel} operation, and then parsed and validated using APTree's full DSL. If successful, the resulting plan is assigned to the attached flow node, which then executes it. If any step fails, the \texttt{onEvaluate} operation returns \texttt{false}.
If a problem file is not provided, or replanning is required for mid-level and low-level actions, the \texttt{genProblemFile} operation automatically generates one. This operation queries the initial state from the blackboard, refines predicates for the planner domain, and sets the higher-level action effects as the goal in the generated problem file. Therefore, each planning service maintains a reference to its connected flow node (\texttt{connectedNode}).

\subsection{Dynamic Flownode DSL Package} \label{FN}

This section introduces the dynamic flow node metamodel (Figure~\ref{fig:mm-fn}) and the corresponding method. The metamodel represents complete task plans, including task nodes and their ordering, as directed graphs. It further separates task plan representation from task execution semantics (e.g., success criteria), allowing task plans to be modified independently of the execution model that orchestrates planning, replanning, and task execution. 
\begin{figure}[h]
    \centering
    \includegraphics[width=0.5\linewidth]{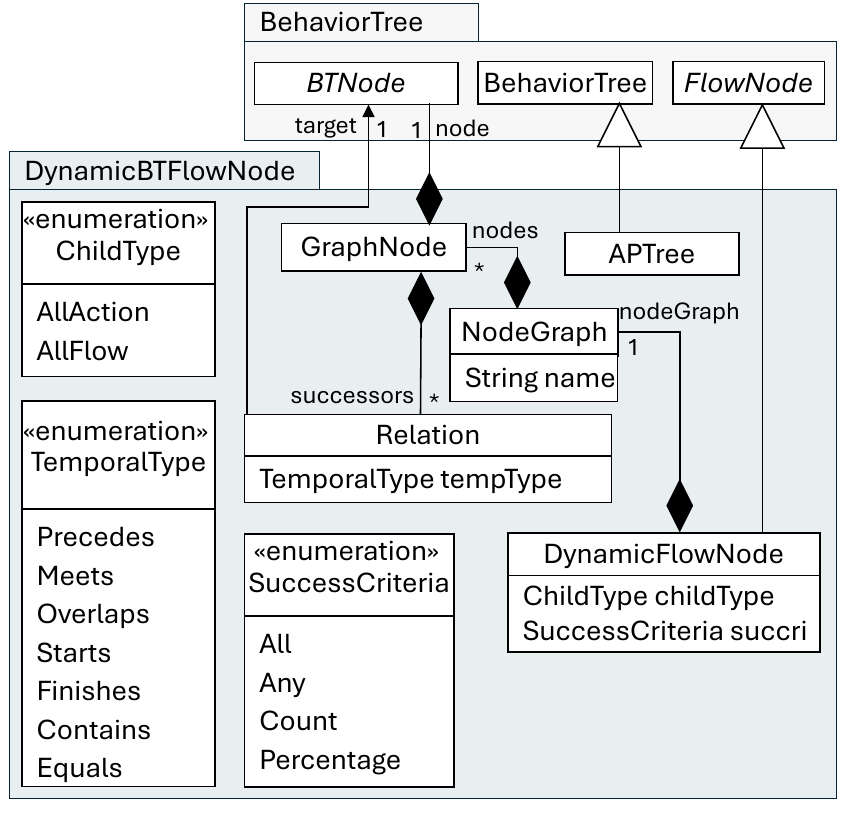}
  \caption{Dynamic flow node metamodel defining flexible and configurable flow nodes, highlighting core concepts such as node graphs, success criteria, and temporal types.}
    \label{fig:mm-fn}
\end{figure}

\textbf{Abstract syntax:}
The dynamic flow metamodel extends the previous metamodels with the following concepts:
\begin{itemize}
\item Dynamic flow node (\texttt{DynamicFlowNode}). Extends BT flow nodes by replacing hardcoded execution logic (e.g., sequence, parallel) with a single configurable logic. A dynamic flow node may have children of only one type (\texttt{AllAction} or \texttt{AllFlow}), thereby clarifying the hierarchy and separating planning from execution in the BT.

\item Success criteria (\texttt{SuccessCriteria}). Allow configuration of when a flow node returns \texttt{SUCCESS} after receiving a tick. Each dynamic flow node must define exactly one success criterion:
\begin{itemize}
    \item \texttt{All}: All child nodes must succeed.
    \item \texttt{Any}: At least one child node must succeed.
    \item \texttt{Count}: A specified number of child nodes must succeed.
    \item \texttt{Percentage}: A specified percentage of child nodes must succeed, enabling partial success thresholds.
\end{itemize}

\item Node graph (\texttt{NodeGraph}). Replaces the linear child list of traditional BTs with a graph of \texttt{GraphNode}s, each wrapping a BT node. Relations between graph nodes define execution order, while the absence of a relation indicates no ordering constraint, enabling partially ordered task plans. For example, in Listing~\ref{lst:bt_model_instance}, there is no relation between \texttt{PickUpBeam1} and \texttt{PickUpBeam2}, avoiding the need for a separate parallel flow node. This separation allows a complete subplan, generated by a planner to be represented and managed by a single \texttt{DynamicFlowNode}, rather than distributing it across multiple BT flow nodes. Consequently, planning, execution, and success management remain encapsulated within a single execution submodule.

\item Temporal relationships (\texttt{TemporalType}). Relations between graph nodes are annotated with a temporal type (\texttt{TemporalType}) defining execution constraints using Allen's interval algebra~\cite{Tomi19}, without replacing the execution logic (flow node). For example, the \texttt{Meets} relation allows a successor node to execute immediately after its predecessor finishes. 

\item APTree. Extends \texttt{BehaviorTree} to support all extended DSL concepts defined in this paper that are not supported by the base BT metamodel.

\end{itemize}
Structural well-formedness rules include the following: An \texttt{APTree} instance accepts only dynamic flow nodes; traditional flow nodes are not permitted. In an \texttt{APTree} model, all dynamic flow nodes that have action nodes must have at least one planning service except for the root node.

\textbf{Concrete syntax:} Listing~\ref{lst:dynamic_bt_grammar} presents the concrete syntax of the dynamic flow nodes, followed by a valid model of the running example BT in Listing~\ref{lst:bt_model_instance}. Since the \texttt{DynamicFlowNode} grammar is the final grammar in the hierarchy, the model in Listing~\ref{lst:bt_model_instance} represents the complete BT in our running example using the APTree DSL.

\begin{nolinenumbers}

\begin{lstlisting}[
    language=MontiCore, 
    caption={The DynamicBTFlowNode MontiCore grammar defining the textual concrete syntax for graph-based dynamic flow nodes.}, 
    label={lst:dynamic_bt_grammar}
]
grammar DynamicBTFlowNode extends PlanningService {
APTree extends BehaviorTree = "BehaviorTree" name:Name "{" "Root" root:FlowNode "}";
    // Dynamic BTflow node definition 
    DynamicFlowNode extends FlowNode = 
      "FlowNode" name:Name "{" 
          (Decorator | Service)*     
          succri:SuccessCriteria
          childType:ChildType
          nodeGraph:NodeGraph 
      "}";
    // enums
    enum SuccessCriteria = "All" | "Any" | "Count" | "Percentage";
    enum TemporalType = "Meets" | "Precedes" | "Overlaps" |"Starts" | "Finishes" | "Contains" | "Equals";
    enum ChildType = "AllAction" | "AllFlow"; 
    // NodeGraph definition 
    NodeGraph = "NodeGraph" "{"
    (nodes:GraphNode)*
    "}";
    GraphNode = node:BTNode ("{"
        (successors:Relation)+
        "}")?;
    Relation = "--[" temptype:TemporalType "]-->" target:Name@BTNode ";";   
}
\end{lstlisting}

\begin{lstlisting}[
    language=MontiCore, 
    basicstyle=\scriptsize\ttfamily, 
    basewidth=0.5em,
    lineskip=-1pt,
    keywordstyle=\color{blue}\bfseries,
    commentstyle=\color{darkgray}\bfseries\itshape,
    morekeywords={BehaviorTree, Root, FlowNode, Nodegraph, Action, Service, NodeGraph, ServicePlanning, PickUpHL, PlaceHL, Domain, Enhsp, Problem}, 
    caption={Example model defining the running example behavior tree using the full DSL.}, 
    label={lst:bt_model_instance}
]
BehaviorTree APTree {
    Root FlowNode Main {
        All
        AllFlow
        NodeGraph {    
            FlowNode  HandlePlate {
                ServicePlanning MyPlannerSrv1 Enhsp Domain:PickAndPlaceHL Problem:PlacePlate1
                All
                AllAction
                NodeGraph {
                    Action PickUpHL PickUpPlate1 (plate1 fp1 r1) @PickUpMLSubtree {
                       --[Meets]--> PlacePlate1;
                    }
                    Action PlaceHL PlacePlate1 (plate1 pr1 r1) 
                }
            } 
            FlowNode BeamPlacement {
                ServicePlanning MyPlannerSrv1 Enhsp Domain:PickAndPlaceHL Problem:handleBeams
                All
                AllAction
                NodeGraph {
                            Action PickUpHL PickUpBeam1 (beam1 fp2 r1) { 
                                --[Meets]--> PlaceBeam1;
                            }
                            Action PlaceHL PlaceBeam1 (beam1 pr2 r1) 
                            Action PickUpHL PickUpBeam2 (beam2 fp3 r1) { 
                                --[Meets]--> PlaceBeam2;
                            }
                            Action PlaceHL PlaceBeam2 (beam2 pr3 r1) 
                }   
            }
        }
    }
} 
\end{lstlisting}

\end{nolinenumbers}
\textbf{Behavior}:
The execution of flow nodes follows the same logic introduced in the BT behavior model shown in Figure~\ref{fig:exe-bt}. It is defined by two main operations: \texttt{onTickNodeLogic} and \texttt{onTickChildren}. The \texttt{onTickNodeLogic} operation evaluates the node’s status using the configured success criteria through the \texttt{checkSuccessCriteria} operation (Figure~\ref{fig:exe-fn}). This evaluation is based on the current \texttt{BTNodeResult} of the node’s children (Figure~\ref{fig:exe-bt}). For example, in the model shown in Listing~\ref{lst:bt_model_instance}, the \texttt{All} success criterion ensures that the flow node \texttt{HandlePlate} returns \texttt{Success} only if both \texttt{PickUpPlate1} and \texttt{PlacePlate1} complete successfully.

\begin{figure}[h]
    \centering
    \includegraphics[width=0.6\linewidth]{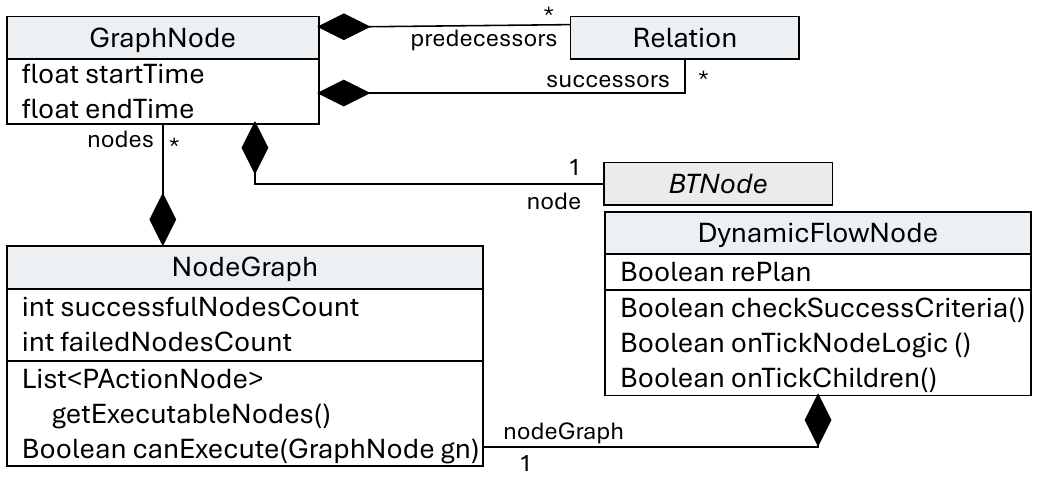}
  \caption{Behavior model extending the \texttt{DynamicFlowNode} metamodel, highlighting operations such as \texttt{getExecutableNodes} and \texttt{canExecute} that determine executable graph nodes. Blue classes correspond to the metamodel (Figure \ref{fig:mm-fn}).}
    \label{fig:exe-fn}
\end{figure}

The \texttt{onTickChildren} operation determines which nodes may execute and when they execute. It uses the \texttt{getExecutableNodes} operation, which iterates over the node graph and evaluates each node using the \texttt{canExecute} operation to identify executable nodes. The \texttt{canExecute} operation excludes nodes that have completed execution, retains nodes that are currently executing, and marks nodes without predecessors as immediately executable. For nodes with predecessors, \texttt{canExecute} evaluates temporal relations and returns \texttt{true} only if all required predecessor conditions are satisfied. For example, under a \texttt{Meets} constraint between \texttt{PickUpPlate1} and \texttt{PlacePlate1} in Listing~\ref{lst:bt_model_instance}, \texttt{PlacePlate1} may execute only after \texttt{PickUpPlate1} has completed.

While success criteria determine when a flow node returns \texttt{Success}, a child failure initially causes the flow node to return \texttt{Failure}. However, each dynamic flow node has a post-processing decorator in its list of decorators that intercepts this failure in the \texttt{onExit} operation (Section~\ref{bt}) and resets the node’s status to \texttt{InProgress}, thereby triggering replanning in the next round of tick signals. A dynamic flow node returns \texttt{Failure} only if the maximum number of replanning attempts is reached and the node remains unsuccessful.

Upon replanning, nodes that have already executed remain in the graph but become inactive, while nodes that have not yet received a tick may be removed. New nodes generated during replanning can be appended to the right side of the node graph as active nodes. Inactive nodes are excluded from the evaluation of flow node success criteria.

\section{Case Study: livMatS Biomimetic Shell Pavilion Prefabrication Process} \label{sec:casestudy}

The APTree method was evaluated through a real-world CRF task planning case study of the livMatS Biomimetic Shell pavilion~\cite{Wagner20}, where we describe the domain in APTree DSL and plan tasks for it. 
The pavilion consists of modular cassettes assembled by an industrial robotic arm. The robotic fabrication process involves picking up cassette elements (plates and beams) from designated staging areas, placing them on a work table, and manipulating them with interchangeable end-effectors~\cite{Wagner20} (Figure \ref{fig:livMatS Biomimetic Shell2}). 

\begin{figure*}[t]
\centering
\includegraphics[width= 0.8\linewidth]{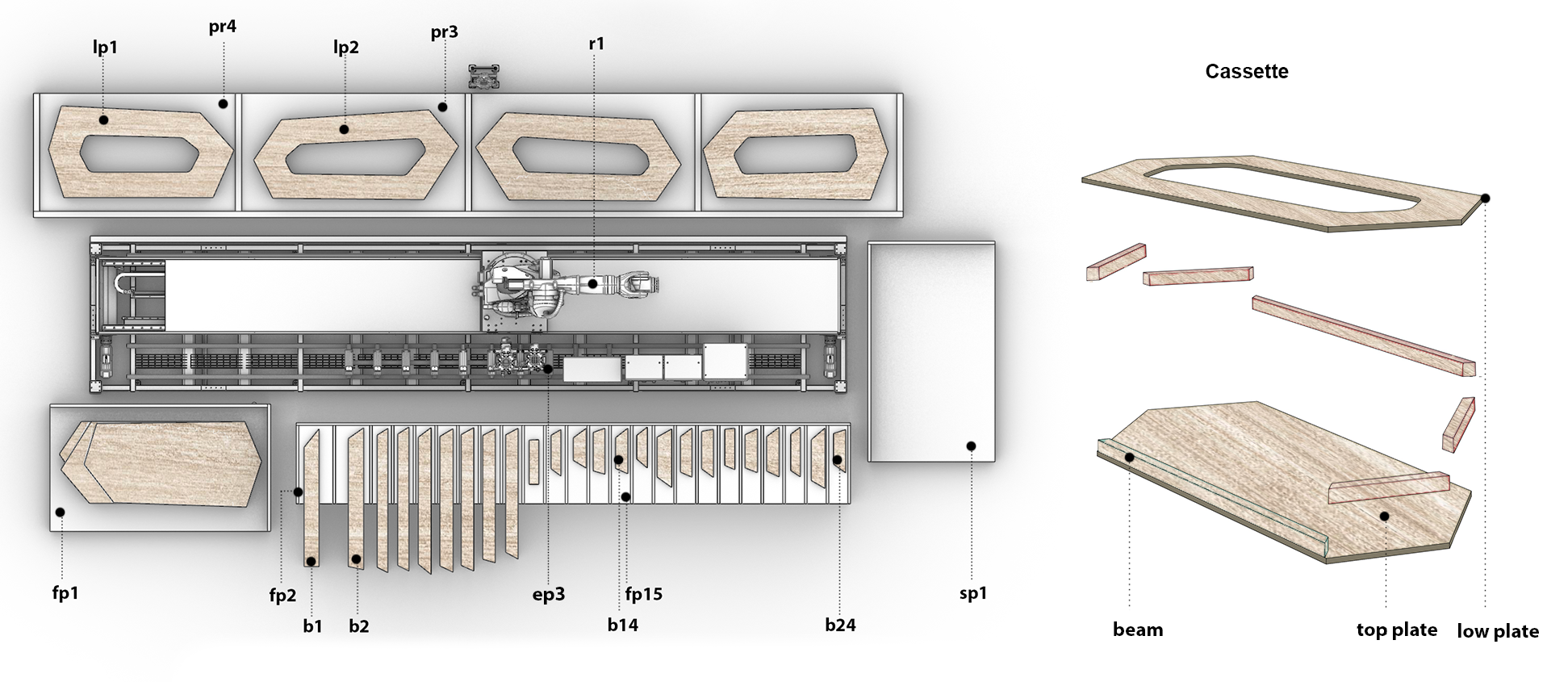}

\centering
\includegraphics[width=\linewidth]{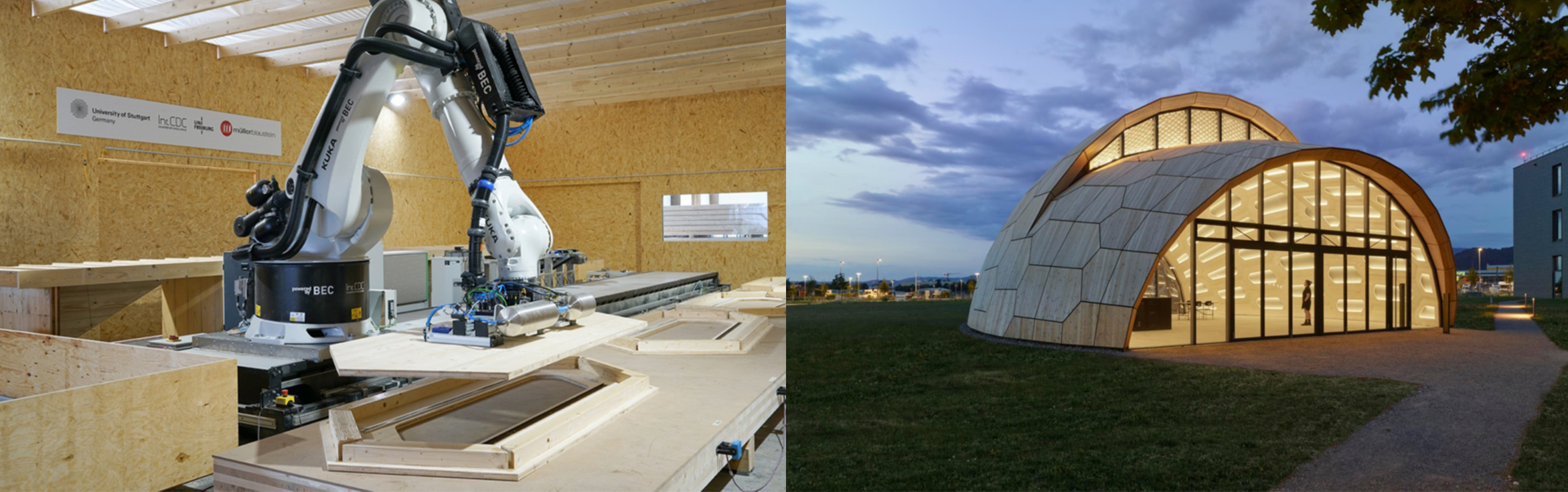}
\caption{Robotic prefabrication setup for the livMatS Biomimetic Shell project~\cite{Wagner20}. Shown are the main types and instances: raw element storage \texttt{(FirstPos, fpn)}, rail positions \texttt{(PositionOnRail, prn)}, assembled cassette storage \texttt{(StackPosition, sp1)}, industrial robot \texttt{(Robot, r1)}, and equipment positions \texttt{(EquipPosition, epn)}. Each cassette comprises a low plate \texttt{(lpn)}, a top plate \texttt{(tpn)}, and six beams \texttt{(bn)}, where n denotes the instance number.}
\label{fig:livMatS Biomimetic Shell2}
\end{figure*}

The industrial robotic arm is mounted on a linear axis, enabling access to four work table locations: raw element storage, rail positions, assembled cassette storage, and end-effector equipment positions. Raw element storage (\texttt{FirstPos}) serves as a staging area for wooden plates and beams, rail positions (\texttt{PositionOnRail}) support robotic manipulation and assembly, assembled cassette storage (\texttt{StackPosition}) is used for accumulating completed cassettes, and equipment positions (\texttt{EquipPosition}) store end-effectors.

The robotic arm is equipped with three distinct end-effectors: a vacuum gripper for picking up and placing elements, a nail gripper for mechanical fastening, and a glue gun for adhesive bonding. Within this workspace, the robotic arm assembles three to four cassettes per operational cycle. For this study, the focus is on the assembly of three batches of four cassettes each, with each cassette consisting of six beams and two plates, as illustrated in Figure~\ref{fig:livMatS Biomimetic Shell2}.

\subsection{APTree Usage for the livMatS Pavilion} \label{aptreeusage}
Users must provide the input models required to use APTree. These input models include\footnote{All input models can be found in our Git repository, see \ref{APPB}.}:

\begin{itemize}
    \item \textbf{Property types}, such as location types, element types, and end-effectors (partially shown in Listing~\ref{lst:type_model}).
    \item \textbf{Predicate types} and the \textbf{initial state} describing the fabrication setup (partially shown in Listings~\ref{lst:type_model} and~\ref{lst:concrete_model}).
  \item \textbf{Action types} required for the case study task planning. The defined action types are summarized in Table~\ref{tab:planner_actions}, based on the symbolic planner domains for HL and ML planning. An example of an action type definition is shown in Listing~\ref{lst:type_model}.
    \item \textbf{Symbolic planner initial domain and problem files} used by APTree. For this case study, two PDDL domain files (for ML and HL planning) and 12 initial problem files (one for each cassette) are manually developed.
   \item \textbf{Initial BT and planner configurations}, shown in Listing~\ref{lst:LiveMat_model_instance}. This configuration specifies three batch branches, each containing four cassette planning branches. ENHSP is assigned as the HL planner for each cassette branch.
\end{itemize}

\begin{nolinenumbers}

\begin{lstlisting}[
    language=MontiCore, 
    basicstyle=\scriptsize\ttfamily, 
    basewidth=0.5em,
    lineskip=-1pt,
    keywordstyle=\color{blue}\bfseries,
    commentstyle=\color{darkgray}\bfseries\itshape,
    morekeywords={BehaviorTree, Root, FlowNode, Nodegraph, Action, Service, NodeGraph, ServicePlanning, Domain, Problem, Enhsp, Config}, 
    caption={Excerpt of the initial APTree model for task planning across 12 cassettes in 3 production batches}, 
    label={lst:LiveMat_model_instance}
]
BehaviorTree livMat {
  Root FlowNode Main {
    All
    AllFlow
    NodeGraph {
      FlowNode Batch1_C1toC4 {
        All
        AllFlow
        NodeGraph {
          FlowNode Cassette1 {
            ServicePlanning mypddlplanner1 Enhsp Domain:DomainHL Problem:ProblemC1 Config: "sat-hadd"
            All
            AllAction
            NodeGraph { }
          }
          FlowNode Cassette2 { ... }
          FlowNode Cassette3 { ... }
          FlowNode Cassette4 { ... }
        }
      } {
        --[Meets]--> Batch2_C5toC8; // sequential execution between batches
      }
      FlowNode Batch2_C5toC8 { ... } {
        --[Meets]--> Batch3_C9toC12;
      }
      FlowNode Batch3_C9toC12 { ... }
    }
  }
}
\end{lstlisting}

 \end{nolinenumbers}
 
\begin{table}[h]
    \centering  
    \footnotesize
    \begin{tabular}{>{\raggedright\arraybackslash}p{4cm} >{\raggedright\arraybackslash}p{11.5cm}}
        \toprule
        \textbf{Action Level and Planner} & \textbf{Action Types} \\
        \midrule
        HL-ENHSP & PickUpHL, PlaceHL, StackHL, StackOnMultipleHL, GluingPlateHL, GluingBeamHL, NailingHL, NailingBeamHL, MoveCassetteToStackHL \\
        ML-LAMA-First and Metric-FF & TravelML, EquipeML, DeequipML, InitializeML, CloseToolML, PickUpML, PlaceML, StackML, StackOnMultipleML, GluingML, NailingML, PickUpCassetteML, PlaceCassetteML \\
        \bottomrule
    \end{tabular}
    \caption{Action node types representing tasks defined in each planner domain.}
    \label{tab:planner_actions}
\end{table}

Four planners are used in our case study evaluation: ENHSP~\cite{scala_subgoaling_2020}, Metric-FF~\cite{hoffmann_metric-ff_2003}, LAMA-First~\cite{richter_lama_2010}, and temporal fast downward (TFD)~\cite{eyerich_using_2009}. As shown in Table~\ref{tab:planner_actions}, ENHSP is used for HL planning, while Metric-FF and LAMA-First are used for ML planning\footnote{We used the Planutils Docker image to invoke the planners~\cite{muise-et-al}.}. The TFD planner is used only for performance comparison against PlanSys2, as PlanSys2 only supports temporal planners~\cite{martin_plansys2_2021}.

The HL planning domain uses numeric fluents, universal quantifiers, and conditional effects to model assembly constraints, including the variable attachment capacity of components. ENHSP was selected because it supports these language features and can generate plans for our case study. The ML planning domains correspond to classical planning, as the HL planner has already determined the assembly decisions; the ML
planners therefore only need to determine the robotic traveling and tool-handling tasks required to realize those decisions. Metric-FF and LAMA-First were selected as representative classical planners: Metric-FF prioritizes planning speed, whereas LAMA-First employs landmark-based heuristics that typically yield higher-quality plans at the expense of
increased planning time~\cite{richter_lama_2010}. Unless stated otherwise, reported results use ENHSP at the HL and Metric-FF at the ML, as this combination is considerably faster; results with LAMA-First are reported where explicitly indicated.

The usage of APTree is also illustrated in Figure~\ref{fig:AD1} using a UML activity diagram. The model loader accepts user-defined input models and validates them according to the defined DSL. If the input is invalid, error messages are returned to the user. If the input is valid, the model loader registers all types and instances on the blackboard and signals the BT execution engine to initiate ticking the BT.

The execution engine starts the BT by sending a tick signal to the first dynamic flow node (the root). APTree operates in two modes: planning mode, in which only planner services associated with flow nodes are permitted to generate nodes (the \texttt{ServicePlanning} instances in Listing \ref{lst:LiveMat_model_instance}), and action execution mode, in which action nodes are executed (actions in Listing \ref{lst:LiveMat_model_instance-final}). On the initial tick, planning mode is activated. After all planner services that receive a tick complete HL plan generation (i.e., plan nodes are generated for each of the four initially empty flow nodes shown in Listing~\ref{lst:LiveMat_model_instance}), the action execution mode is triggered.

If a planner service fails to generate a plan or if the generated model violates the DSL constraints, an error message is reported to the user, and BT execution is paused (Figure \ref{fig:AD1}). Upon successful plan generation, the resulting action nodes either execute directly or are further expanded into a lower level of granularity. In the presented case study, mid-level actions are derived from high-level actions; no low-level actions are produced because no low-level motion planner is introduced.

\begin{figure}[h]
    \centering
    \includegraphics[width= 0.8\linewidth]{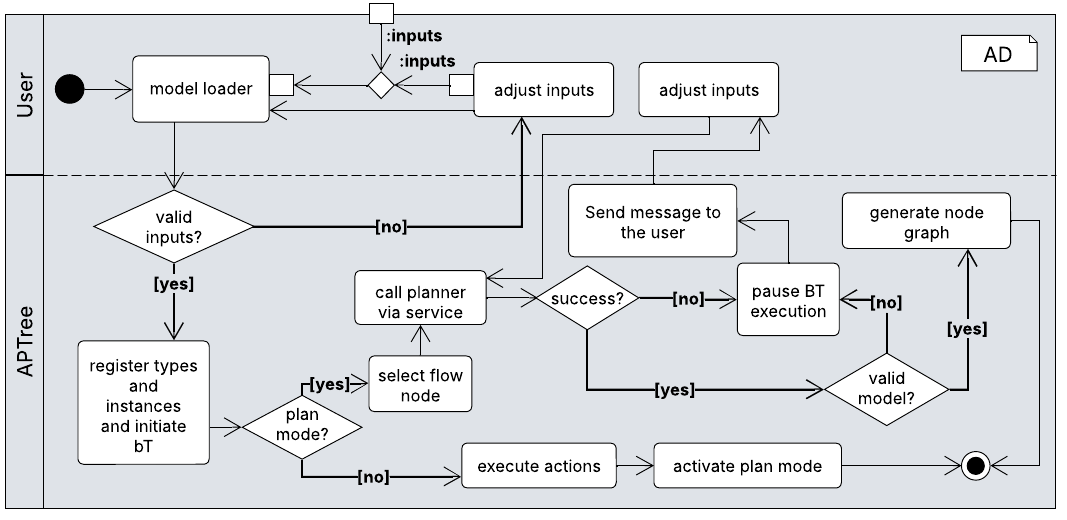}
\caption{Activity diagram illustrating the process from input provision to node generation and planning modes. }
    \label{fig:AD1}
\end{figure}

In action execution mode, BT decorators coordinate execution across the four branches of each batch. The \texttt{Lock} decorator is applied automatically to all ML subtrees, guaranteeing their atomic execution and preventing conflicting interleavings between partially ordered plans that share resources (e.g., the robot). The remaining decorators define the execution policy for coordinating the HL independent cassette assembly branches. We evaluate two configurations: execution with replanning and execution without replanning. Both produce the final BT shown in Listing~\ref{lst:LiveMat_model_instance-final}.

The four cassette assembly branches are independent and unordered. Their execution policy therefore affects only execution order, not plan correctness, and can reduce superfluous actions (e.g., allowing the robot to complete all tasks requiring the same end-effector across unordered branches before switching tools). Two policies are considered: \texttt{FairProgress}, which coordinates balanced branch execution without forcing replanning, and \texttt{CostEvaluation}, which deliberately maximizes replanning required for the evaluation of local replanning:

\textbf{Configuration with replanning}

In this configuration, planning is first performed for all four flow nodes in a batch, generating the corresponding high-level actions. The executable action in each branch is then selected (e.g., lines 15, 25 in Listing~\ref{lst:LiveMat_model_instance-final}), and the corresponding ML subtrees are generated, with their ML actions planned (e.g., lines 50–53 in Listing~\ref{lst:LiveMat_model_instance-final}). Among all executable branches, the cost evaluation decorator selects the ML subtree with the lowest estimated cost (number of actions in the branch).
The \texttt{Lock} decorator then ensures exclusive execution of the selected subtree until all its ML actions are completed. After execution, action effects are applied to the blackboard, and a \texttt{Reset} decorator removes the remaining obsolete planned subtrees in other branches and triggers replanning based on the updated world state. This cycle continues until all tasks are completed.

\textbf{Configuration without replanning (fair progress)}

In the second configuration, execution is coordinated using the \texttt{FairProgress} decorator, which enforces balanced progression across branches instead of selecting the globally cheapest subtree.
After executing a high-level action in one branch, priority shifts to available actions in other branches. If preconditions are not satisfied (e.g., the robot is already holding an object), execution continues within the same branch. As in the first approach, the \texttt{Lock} decorator guarantees atomic execution of the selected ML subtree. However, replanning is not triggered after each subtree execution; coordination is achieved solely through controlled branch progression.

These two decorator configurations are representative execution policies used to evaluate APTree's planning and replanning capabilities rather than the only possible coordination mechanisms, and for simplicity, are omitted from the models in Listings \ref{lst:LiveMat_model_instance} and \ref{lst:LiveMat_model_instance-final}. Since APTree extends standard BTs, it preserves their execution flexibility and other decorators or execution policies can be used as required.

\section{Evaluation Using Case Study Results} \label{eval}

This section evaluates APTree using the CRF case study described in Section~\ref{sec:casestudy}. The evaluation follows the GQM framework defined in Section~\ref{reqs}, addressing research questions in sequence.
All experiments were performed on a Vivobook ASUS K6604JI laptop equipped with a 13th Gen Intel(R) Core(TM) i9-13980HX CPU and an NVIDIA GeForce RTX 4070 GPU. We compare APTree with PlanSys2, as both use symbolic planning to generate BT-based execution models. PlanSys2 follows a two-stage architecture in which a symbolic planner first generates a task plan that is then transformed into a BT execution model using BT.CPP~\cite{martin_plansys2_2021}. In contrast, APTree integrates planning directly into the execution model, allowing task plans to be generated during execution rather than in a separate planning stage.

\textbf{Answering Q1--Q4.}
Representative structural, typing, syntactic, and logical errors are injected into the plan models generated for a planning problem from the CRF case study (\ref{APPA}). For APTree, errors are injected into the integrated planning and execution model. For PlanSys2, errors are injected for the symbolic planner and, when applicable, the generated execution model. Due to PlanSys2's two-stage architecture, an error is counted as detected only if it is identified before runtime. Accordingly, M1 (TPR) measures the proportion of injected errors detected at design time, while M2 (PR) measures the proportion prevented by design.

For M1, both APTree and PlanSys2 achieve a TPR of 100\% for structural errors, though at different stages. APTree detects all structural errors at design time through its grammar-based validation. PlanSys2 detects structural errors using the symbolic planner parser, at planner invocation time, while structural errors in the BT execution model are prevented by design, as the BT is generated automatically from the plan.
For typing errors, APTree achieves a TPR of 100\% by enforcing type consistency between the planning model and the execution model at design time. In PlanSys2, action types and their properties must be independently redefined in the BT execution layer (as BT.CPP action nodes), with no mechanism to verify that these definitions remain consistent with those in the PDDL planning domain. This separation is the source of potential type mismatches, resulting in a TPR of 0\%. Typing errors are largely inapplicable to the symbolic planner itself, since generated plans inherently conform to the action signatures defined in the domain.

Syntactic errors are detected trivially, as both APTree and PlanSys2 use DSLs that enforce syntactic correctness. However, APTree requires an initial BT to be authored before planning and therefore validates it at design time, while PlanSys2 automatically generates the execution BT, preventing syntax errors by design.
Regarding logical errors, PlanSys2 generates a single plan for each planning problem. Therefore, logical errors involving interactions between multiple execution models cannot be detected during planning and manifest as silent failures during execution. APTree likewise cannot detect these errors
at design time; however, it detects all injected logical errors at runtime through integrated reasoning over its planning and execution models. Of these,
50\% are resolved automatically (error~15 by replanning and error~18 by shifting execution to other branches until the preconditions are met again), while the remaining 50\% (errors~16 and~19) halt BT execution due to an infeasible plan. Overall, across all error categories, APTree achieves a TPR of 76\%, with the remaining errors detected at runtime, while PlanSys2 achieves a TPR of 25\%.

Regarding M2, APTree prevents 10\% of the injected errors by construction, as the initial empty BT is not automatically generated. In contrast, PlanSys2 prevents 36.8\% of injected errors, mainly syntactic errors, because both the symbolic plan and the BT execution model are generated automatically.

\begin{table}[h]
\centering
\footnotesize
\renewcommand{\arraystretch}{1.15}
\begin{tabularx}{\textwidth}{c p{1.1cm} X c c c}
\toprule
\multirow{2}{*}{\textbf{ID}} &
\multirow{2}{*}{\textbf{Error Type}} &
\multirow{2}{*}{\textbf{Error Description}} &
\multirow{2}{*}{\textbf{APTree}} &
\multicolumn{2}{c}{\textbf{PlanSys2}} \\
\cmidrule(lr){5-6}
& & & & \textbf{BT.CPP} & \textbf{Symbolic Planner} \\
\midrule

1  & \multirow{5}{*}{Structural}
   & Mismatch of planning model artifacts (e.g., planner and its configuration) & DT & NA & PT \\
2  & & Missing planning model artifact (e.g., lacking problem or domain) & DT & NA & PT \\
3  & & Mismatch of planner and planning models (e.g., unsupported language) & DT & NA & PT \\
4  & & Execution model involves invalid component hierarchy & DT & PD & NA \\
5  & & Execution model inconsistent with the symbolic plan & PD & PD & NA \\
\midrule
\multicolumn{3}{r}{Structural errors (\%)}
& \textbf{100\%} & \multicolumn{2}{c}{\textbf{100\%}} \\

\midrule
6  & \multirow{5}{*}{Typing}
   & Undefined parameter instance in task execution model & DT & RTC & NA \\
7  & & Undefined action type  & DT & RTC & NA \\
8  & & Undefined parameter type in actions & DT & RTC & NA \\
9  & & Planned action parameter count does not match the execution action signature  & DT & RTC & NA \\
10 & & Planned action parameter type does not match the execution action signature & DT & RTC & NA \\
\midrule
\multicolumn{3}{r}{Typing errors (\%)}
& \textbf{100\%} & \multicolumn{2}{c}{\textbf{0\%}} \\

\midrule
11 & \multirow{5}{*}{Syntax}
   & Delimiter mismatch in the model syntax & DT & PD & PD \\
12 & & Invalid ordering of execution model constructs & DT & PD & PD \\
13 & & Unexpected token in the model specification & DT & PD & PD \\
14 & & Missing required syntactic element or keyword & DT & PD & PD \\
\midrule
\multicolumn{3}{r}{Syntax errors (\%)}
& \textbf{100\%} & \multicolumn{2}{c}{\textbf{NA}} \\

\midrule
15 & \multirow{4}{*}{Logical}
   & World-state changes introduced by a preceding execution model invalidate the current plan. & RT & SF & NA \\
16 & & Execution model regresses a goal achieved by another execution model. & RT & SF & NA \\
17 & & Execution model cannot achieve its goal due to broken causal links. & PD & PD & NA \\
18 & & Action preconditions are invalidated by another execution model. & RT  & SF & NA \\
19 & & Two execution models pursue logically incompatible goals. & RT & SF & NA \\
\midrule
\multicolumn{3}{r}{Logical errors  (\%)}
& \textbf{0\%} & \multicolumn{2}{c}{\textbf{0\%}} \\

\bottomrule
\end{tabularx}
\caption{Comparison of error detection and prevention between APTree and PlanSys2. DT = detected at design-time, PT = detected at planning time, RT = detected at runtime, RTC = detection at runtime leads to crash, SF = silent failure, NA = not applicable, PD = prevented by design}
\label{tab:error_comparison}
\end{table}

\textbf{Answering questions Q5--Q6.}
We next evaluate the locality and scalability of APTree replanning, using the configuration with replanning, based on metrics measuring average structural change (M3), replanning frequency (M4), \hl{and node retention rate (M5).}
When replanning is required, PlanSys2 invokes the symbolic planner on the updated planning problem and regenerates the entire BT, replacing the previous execution model ($g_n = 100\%$). In contrast, over 1{,}098 replanning events, APTree's $g_n$ averaged 0.80\%. The $g_n$ values ranged from 0.00\% to 6.50\%, indicating that individual replanning operations consistently affect only a small fraction of the overall tree structure (see Figure \ref{prr}). The low average confirms that APTree supports runtime local replanning, replacing only the specific subtree requiring adaptation without disrupting the broader execution flow.

As APTree flow nodes can represent alternative execution orders within a single graph, replanning of a subtree does not alter the BT hierarchy regardless of the number of planned tasks. Consequently, the execution hierarchy remains unchanged throughout replanning. As shown in Table~\ref{tab:typesandinstances}, the final execution model contains 1{,}149 action nodes while maintaining a constant depth of six levels (Listing~\ref{lst:LiveMat_model_instance-final}), yielding $D_0=D_1=6$ and $g_d=0$. In contrast, the depth of a PlanSys2-generated BT increases with the number of sequential tasks in the plan.

The $g_w$ metric (Figure~\ref{prr}) averaged 0.38\% (range
0.00--4.84\%), indicating that tree width remains largely stable
despite continuous replanning. Across 1{,}098 replanning events, the
structural change metric (M3) is $\overline{G}_{\text{tree}} = 0.39\%$. Notably, the structural change per replanning event is bounded by the size of the replanned ML subtree, so its relative magnitude decreases as the execution model grows; replanning thus remains localized for increasingly large task plans.

Table~\ref{tab:typesandinstances} further shows that repeated replanning generated 6{,}507 action nodes across 1{,}098 replanning events, while only 1{,}149 nodes remained in the final BT and were executed to completion, identical to the configuration without replanning. This corresponds to a replanning frequency (M4) of 81.4 replans/min over 13 minutes and 29 seconds execution and a \hl{node retention rate (M5)} of 17.7\%. \hl{Because replanning in this configuration is deliberately induced by the \texttt{CostEvaluation} policy, these values reflect the replanning load APTree sustains while preserving the running execution model, rather than fault-driven recovery; they indicate that APTree supports frequent, localized replanning as the execution model grows. Complementing these plan-change metrics with task outcomes: in both configurations, all 372 high-level tasks achieved their goals, all 1{,}098 replanning invocations returned valid plans with no planner failures, and no action failures occurred. As actions in our setup always return success once they are ticked, these outcomes reflect valid plan generation and composition rather than physical execution performance.}

\begin{figure}[h]
    \centering
    \includegraphics[width=\linewidth]{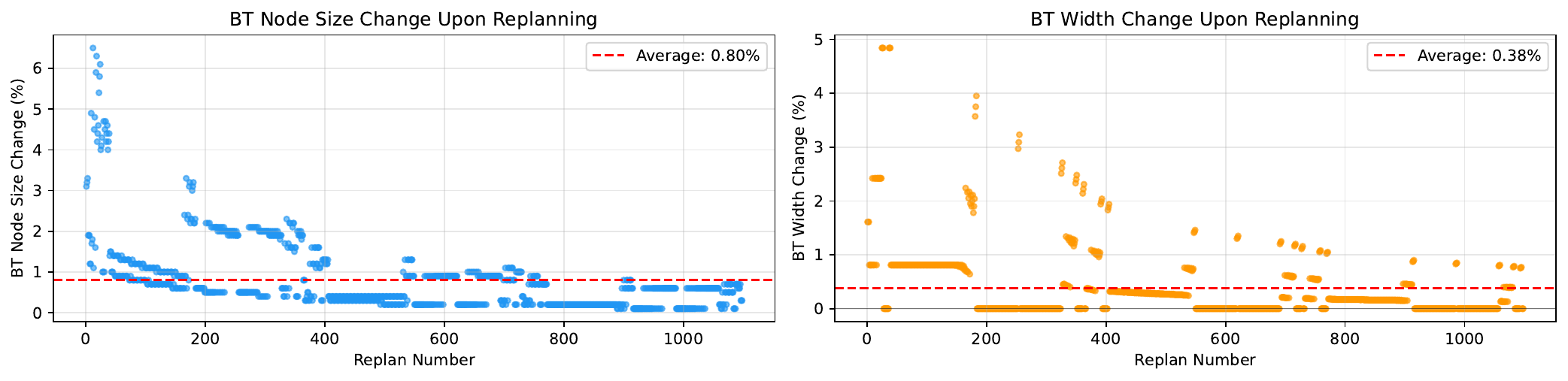}
    \caption{BT Node Change ($g_n$,left) and BT Width Change ($g_w$,right) across 1{,}098 replanning events. Each dot represents a single replan; the dashed red line indicates the average (0.80\% and 0.38\%, respectively).}
    \label{prr}
\end{figure}

\begin{table}[h]
\centering
\footnotesize
\begin{tabularx}{\textwidth}{l ccc ccc}
\hline
 & \multicolumn{3}{c}{With Replanning \_ 13 min 29 sec} 
 & \multicolumn{3}{c}{Without Replanning \_ 5 min 6 sec} \\
\cline{2-7}
Category &
Final &
Before Ticking &
After Ticking &
Final &
Before Ticking &
After Ticking \\ \hline

Action Instances (ML \& HL)  & 1{,}149 & 0 & 6{,}507 & 1{,}149 & 0 & 1{,}149 \\
Action Instances (HL)        & 372 & 0 & 372  & 372 & 0 & 372 \\
Replanning triggered        & 1{,}098 & 0 & 1{,}098  & 0 & 0 & 0 \\
PDDL Problem files           & 384 & 12 & 1{,}470  & 384 & 12 & 372 \\
FlowNode Instances           & 388 & 16 & 372  & 388 & 16 & 372 \\ \hline
\end{tabularx}
\caption{Comparison of BT structure metrics between execution with dynamic replanning (1{,}098 replans over 13 min 29 s) and without replanning (5 min 6 s), showing node counts at final state, before ticking, and after ticking. }
\label{tab:typesandinstances}
\end{table}

\textbf{Answering questions Q7--Q10.}
This evaluation examines the ability of APTree to apply and coordinate symbolic sub-plans generated by multiple automated planners and to decompose and recompose large-scale planning problems into a single executable and step-efficient task plan, as defined by Metrics M6--M12. Experiments were conducted under both execution configurations (Section~\ref{aptreeusage}): with and without replanning. Unless otherwise stated, all experiments use ENHSP for high-level planning and Metric-FF for mid-level planning to provide a consistent evaluation baseline. LAMA-First is additionally included in the evaluation of planner diversity (M6) and planner application latency (M8) to demonstrate heterogeneous planner integration and quantify the impact of planner choice on execution performance.

Metrics M6--M8 evaluate the framework's ability to coordinate heterogeneous symbolic planners and repeatedly apply them during execution. As shown in Table~\ref{tab:plannercomparison}, APTree can employ three distinct symbolic planners (ENHSP, Metric-FF, and LAMA-First), yielding a planner diversity of three and demonstrating heterogeneous planner integration. During execution, Metric-FF and LAMA-First are each invoked 735 times
(each handling half of the ML planning modules), whereas ENHSP is
invoked 12 times, as high-level actions do not require replanning in our deterministic planning setup. In total, the framework automatically generates 1{,}470 and 372
planning problem instances during execution for the configurations with
and without replanning, respectively (Table~\ref{tab:typesandinstances}),
reducing manual problem specification to the 12 initial problem files (Listing~\ref{lst:LiveMat_model_instance}). In contrast, PlanSys2 invokes a single planner once for the entire execution. These results demonstrate that APTree supports distributed, repeated, and heterogeneous symbolic planning within a single execution model.

The M8 metric evaluates the cumulative planning latency (PL). Without replanning, using ENHSP + Metric-FF, the system accumulates 2\,min\,20\,s of planning latency over a total execution time of 5\,min\,6\,s, yielding a normalized PL of 45.8\%. With replanning enabled, the same planner combination accumulates 8\,min\,53\,s over 13\,min\,29\,s, corresponding to a normalized PL of 65.9\%. Replacing half of the ML subtree planners with LAMA-First increases the cumulative planning latency to 43\,min\,8\,s over 48\,min of total execution, yielding a normalized PL of 89.9\%. Table~\ref{tab:plannercomparison} summarizes the per-planner latency breakdown for this three-planner replanning configuration.

Figure~\ref{plannerlatency} shows that the average planning latency per invocation remains stable throughout 1{,}482 planner calls in the replanning configuration --- the configuration with the highest number of planning invocations. The average latency is approximately 360\,ms with ENHSP + Metric-FF, and 1{,}746\,ms with ENHSP + Metric-FF + LAMA-First. These results indicate that cumulative planning latency is determined primarily by the choice of symbolic planner, while repeated planning and replanning do not introduce additional per-invocation overhead.

\begin{nolinenumbers}
\begin{lstlisting}[
    language=MontiCore, 
    basicstyle=\scriptsize\ttfamily, 
    basewidth=0.5em,
    lineskip=-1pt,
    keywordstyle=\color{blue}\bfseries,
    commentstyle=\color{darkgray}\bfseries\itshape,
    morekeywords={BehaviorTree, Root, FlowNode, Nodegraph, Action, Service, NodeGraph, ServicePlanning, PickUpHL, PlaceHL, Domain, Problem, Enhsp, FF}, 
    escapechar=§,
    caption={Excerpt from the final generated APTree model. Ellipses (...) denote omitted actions or additional behavior tree fragments. The highlighted subtree reference is defined below.},
    label={lst:LiveMat_model_instance-final}
]
BehaviorTree livMat {
  Root FlowNode Main {
    All
    AllFlow
    NodeGraph {
      FlowNode Batch1_C1toC4 {
        All
        AllFlow
        NodeGraph {
          FlowNode Cassette1 {
            ServicePlanning mypddlplanner1 Enhsp Domain:DomainHL Problem:ProblemC1
            All
            AllAction
            NodeGraph {
              Action PickUpHL PickUpHL_lp1_fp1_r1 (lp1 fp1 r1) §\textcolor{purple}{@PickUpHL\_lp1\_fp1\_r1Subtree}§ {
                --[Meets]--> PlaceHL_lp1_pr1_r1;
              }  ...             
            }
          }
          FlowNode Cassette2 {
            ServicePlanning mypddlplanner2 Enhsp Domain:DomainHL Problem:ProblemC2
            All
            AllAction
            NodeGraph {
              Action PickUpHL PickUpHL_lp2_fp9_r1 (lp2 fp9 r1) §\textcolor{mygreen}{@PickUpHL\_lp2\_fp9\_r1Subtree}§ {
                --[Meets]--> PlaceHL_lp2_pr2_r1;
              }  ...            
            }
          }
          FlowNode Cassette3 { ... }
          FlowNode Cassette4 { ... }
        }
      } {
        --[Meets]--> Batch2_C5toC8;
      }
      FlowNode Batch2_C5toC8 { ... } {
        --[Meets]--> Batch3_C9toC12;
      }
      FlowNode Batch3_C9toC12 { ... }
    }
  }
}

BehaviorTree §\textcolor{mygreen}{PickUpHL\_lp2\_fp9\_r1Subtree}§ {
  Root FlowNode FF_Default_DynamicFlow_PickUpHL_lp2_fp9_r1 {
    ServicePlanning subtreeSrv_PickUpHL_lp2_fp9_r1 FF Domain:DomainML Problem:problemPickUpHL_lp2_fp9_r1
    All
    AllAction
    NodeGraph {
      Action TravelML TravelML_r1_pr1_fp9 (r1 pr1 fp9) {
        --[Meets]--> PickUpML_lp2_fp9_r1_vg1;
      }
      Action PickUpML PickUpML_lp2_fp9_r1_vg1 (lp2 fp9 r1 vg1)
    }
  }
} ...
\end{lstlisting}
\end{nolinenumbers}

\begin{table}[h]
\resizebox{\textwidth}{!}{%
\begin{tabular}{lcccccc c}
\hline
PlannerType &
  Call Count &
  Success Rate &
  \begin{tabular}[c]{@{}c@{}}Average Planner Time (ms)\end{tabular} &
  \begin{tabular}[c]{@{}c@{}}Average Service Time (ms)\end{tabular} &
  Average Plan Size &
  \begin{tabular}[c]{@{}c@{}}Cumulative Planner\\Time (s)\end{tabular} &
  \begin{tabular}[c]{@{}c@{}}Cumulative Service\\Time (s)\end{tabular} \\ \hline
ENHSP      & 12   & 100.00\% & 1364.30 & 1556.77 & 31   & 16.37   & 18.68   \\ \hline
Metric-FF  & 735  & 100.00\% & 270.10  & 349.96  & 3.9  & 198.52  & 257.22  \\ \hline
LAMA-First & 735  & 100.00\% & 3082.30 & 3145.78 & 4.5  & 2265.49 & 2312.15 \\ \hline
TOTAL      & 1482 & 100.00\% & 1673.67 & 1746.32 & 4.4  & 2480.38 & 2588.05 \\ \hline
\end{tabular}%
}
\caption{Average planning time, planning service time (including parsing and action instance generation), and average plan size by planner type. The TOTAL row reports call-weighted averages. These values were obtained using the active replanning configuration.}
\label{tab:plannercomparison}
\end{table}

\begin{figure}[h]
    \centering
    \includegraphics[width=\linewidth]{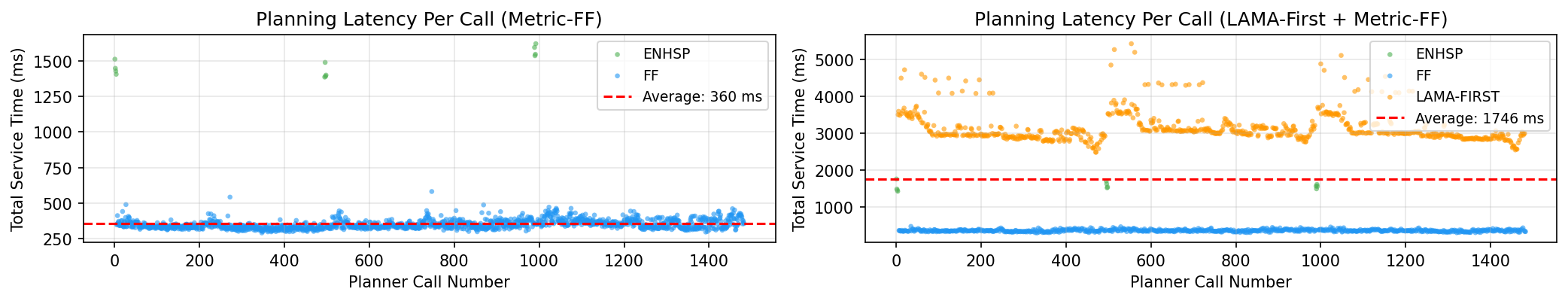}
   \caption{Per-call planning service latency for Metric-FF only (left) and LAMA-First + Metric-FF (right) as ML planners, ENHSP as HL planner, with replanning. Per-call latency remains stable across all invocations, with no degradation from repeated calls.}
    \label{plannerlatency}
\end{figure}

To compare the scalability and performance of APTree against PlanSys2, we defined five planning problems of increasing complexity derived from the livMatS case study: P0 (single pick-up), P1 (pick-and-place), P2 (pick-place-glue), P3 (P2 plus one stacking beam task), and P4 (P3 plus a second stacking beam task). To ensure a fair comparison, both approaches use the same temporal planner (TFD): APTree applies TFD to each decomposed ML planning problem, whereas PlanSys2 applies TFD once to the complete planning problem. \hl{Unlike the livMatS case study, each comparison problem (P) is specified to APTree as a single HL planning problem without manual decomposition; the ML planning problems are generated automatically by APTree, which refines each planned HL action into a separate ML problem whose goal is the effects of that action.}  Figure~\ref{plansys} compares APTree's modular planning approach with PlanSys2's monolithic planning. For the simplest problems (P0--P1), PlanSys2 is faster (168--850\,ms vs. 1.4--2.1\,s) because it performs a single planning call without decomposition overhead. As problem complexity increases, however, APTree's modular planning scales more effectively. At P2, APTree's cumulative planning time is 3.0\,s versus 5.7\,s for PlanSys2. At P3 and P4, PlanSys2 fails with out-of-memory errors because the entire planning problem must be grounded simultaneously (full problems available in our public repository,~\ref{APPB}), whereas APTree solves both by decomposing them into bounded ML-level subproblems that are planned and executed incrementally. \hl{This automatic hierarchical decomposition enables APTree to solve problems beyond the reach of monolithic planning while reducing time to first action. These experiments therefore establish the benefit of APTree-enabled decomposition, rather than directly demonstrating that APTree itself is intrinsically more scalable than PlanSys2 under matched decompositions.} 

\begin{figure}[h]
    \centering
    \includegraphics[width= \linewidth]{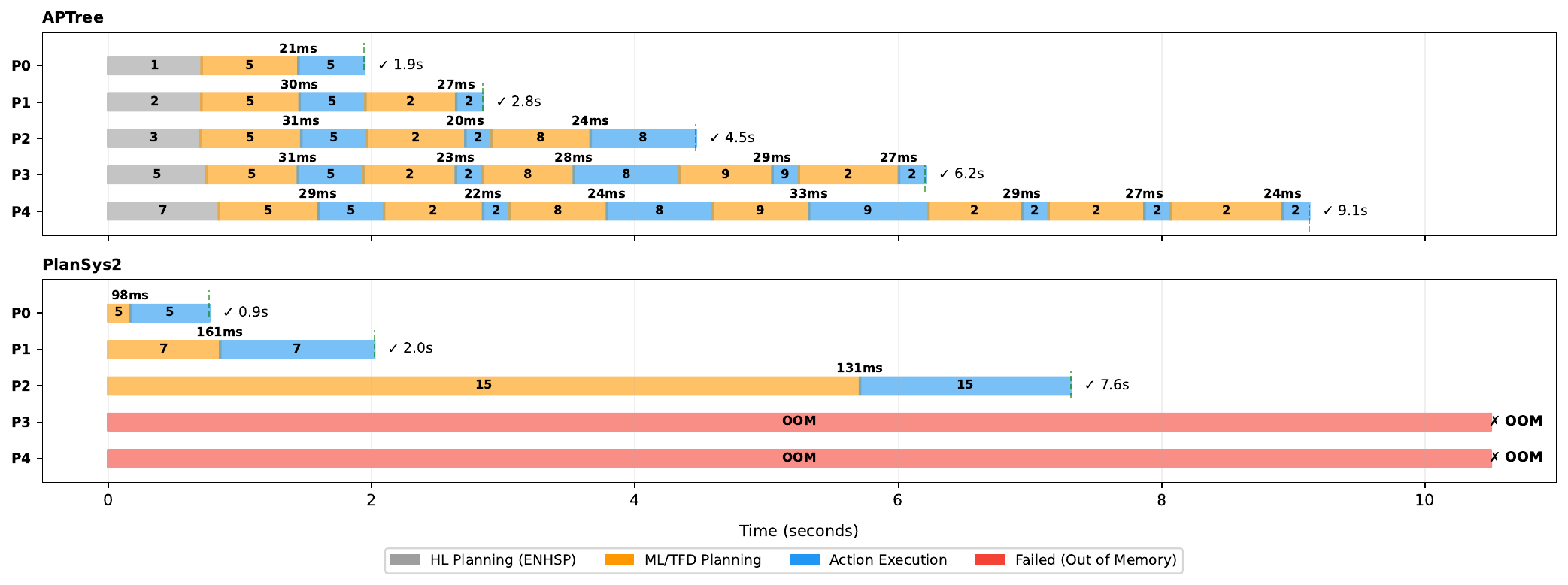}
  \caption{Comparison of APTree and PlanSys2 on progressively larger subsets of the livMatS case study. The values above the planning-to-execution transitions indicate the BT generation overhead. The values inside ribbons are the number of actions per planning instance.}
    \label{plansys}
\end{figure}

Metric M9 evaluates the framework's ability to decompose large symbolic planning problems into smaller subproblems and recompose them into a single executable task plan. In the livMatS case study, the final executed plan comprises 1{,}149 action nodes, contributed by 372 planning submodules for both configurations (Table~\ref{tab:typesandinstances}), yielding a decomposition factor of 3.1. In contrast, a monolithic plan generated by PlanSys2 would yield a decomposition factor equal to the total number of executed task steps. These results demonstrate effective decomposition and recomposition, supporting modular and scalable task planning.

Metrics M10 and M11 evaluate plan correctness and step efficiency; a formal analysis establishing the conditions under which APTree preserves plan correctness is given in~\ref{APPformal}. We compared the generated plan against the manually authored livMatS task plan~\cite{Skoury_Wortmann_2024} as reference. As shown in Table~\ref{tab:plan_comparison}, all logically required steps and their ordering were preserved, giving a PPR of 100\% (M10). By coordinating balanced progression across the four independent branches, the proposed execution policies also minimize superfluous actions: whereas purely sequential execution of the branches would incur redundant tool changes and travel, yielding an SSR (M11) of 42\%, the proposed policies reduce it to 0\% with both of our execution policies. Together, these results show that APTree preserves the reference plan's logical structure while producing highly step-efficient execution models.

\begin{nolinenumbers}

\begin{table}[t]
\centering
\footnotesize
\begin{tabularx}{\textwidth}{>{\raggedright\arraybackslash}X | >{\raggedright\arraybackslash}X}
\hline
Manual Plan (JSON, Excerpt) & Generated Plan (APTree, Excerpt) \\ 
\hline
\texttt{"names": "Move", "actors": "Robot"} 
& \texttt{TravelML\_r1\_pr2\_ep1} \\

\texttt{"names": "Take Tool", "tools": "VG1"} 
& \texttt{EquipeML\_r1\_vg1\_ep1} \\

\texttt{"names": "Initiate Tool"} 
& \texttt{InitializeML\_r1\_vg1} \\

\hline
\end{tabularx}
\caption{Comparison and mapping between the manual expert plan (JSON) and the automatically generated task plan from APTree logs.}
\label{tab:plan_comparison}
\end{table}
\end{nolinenumbers}

To evaluate runtime performance (Q10), we compute the consecutive action latency (CAL), defined by M12. Figure~\ref{consecutiveactions}
shows the interval between consecutive action completions throughout execution alongside the cumulative net number of action nodes in the execution model. Although
replanning repeatedly adds and replaces nodes, the net node count remains bounded and finishes at the same level with and without replanning (1{,}149 nodes), showing that obsolete plan branches are effectively removed rather than accumulating. Average CAL is 383\,ms without replanning and 1{,}028\,ms with replanning; despite more planner invocations and execution model nodes, latency stays
stable throughout, indicating no cumulative execution overhead from repeated planning. Higher latencies occur only at inter-branch
transitions, where replanning is required: 706\,ms without replanning versus 2{,}207\,ms with replanning. Thus, the replanning configuration's longer execution time stems from repeated symbolic planning triggered by the execution policy, not from execution model
overhead. These results indicate that APTree adds negligible overhead beyond the intrinsic cost of symbolic planning, while maintaining low intra-branch latency suitable for online robotic execution.

\begin{figure}[h]
    \centering
    \includegraphics[width=\linewidth]{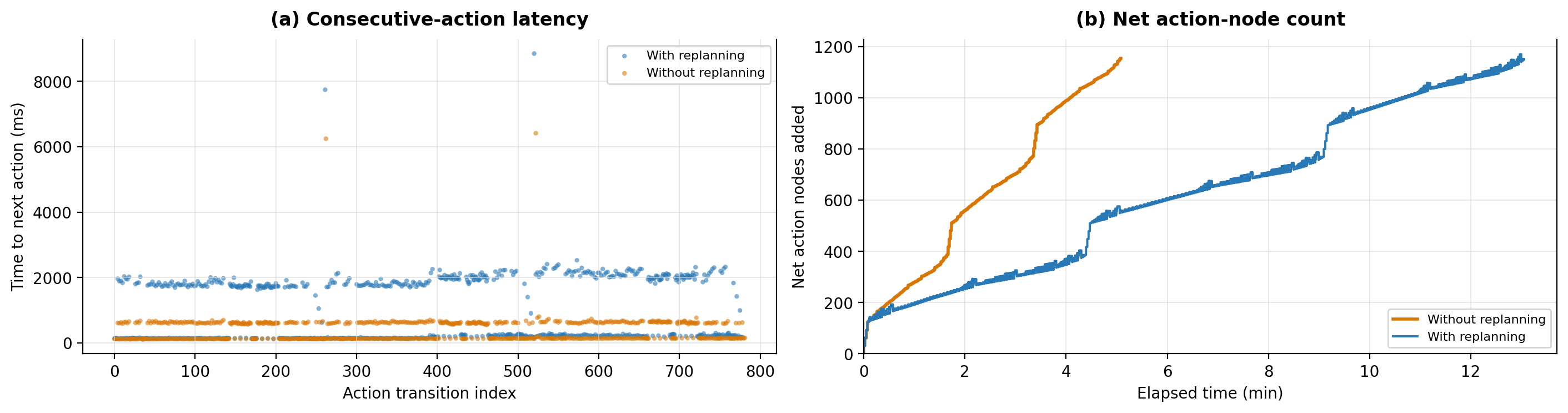}
    \caption{Execution sensitivity under the two branch-selection policies.
(a) Time between consecutive ML actions, with (blue) and without
(orange) replanning. (b) Cumulative net action nodes over execution
time; upward steps are generated nodes, downward steps are nodes
removed during plan replacement (replanning only).}
    \label{consecutiveactions}
\end{figure}

\section{Discussion} \label{disc}

Using a real-world, full-scale CRF task planning case study, we show as a proof of concept that APTree detects structural, syntactic, and typing errors at design time and logical errors at runtime (C1); supports localized runtime replanning via modular execution model reconfiguration (C2); and repeatedly applies and coordinates symbolic planners to compose scalable task execution models (C3). The DSL enables automatic validation of task plans and execution models, integrating error detection, planning, replanning, and execution within a single framework.

Working with BTs as the target execution model revealed that, while they enable significant modularity, they conflate planning and execution semantics. A sequence node, for instance, encodes both task ordering (a planning concept) and task success criteria (an execution concept) in one construct. This became a problem for modular replanning: replacing one module's plan often required replacing its entire
control-flow node, disrupting the execution flow, since ordering and
success criteria could not be changed independently. Likewise, even BT.CPP's explicit input/output ports do not distinguish a precondition from a plain runtime task property, and since BTs have no native representation for preconditions, effects, or world state, planners must encode these indirectly, e.g., via decorator nodes, which
can be modified or removed unintentionally. APTree addresses this by making planning semantics first-class grammar elements, separate from execution semantics, so replanning can target only the task plan rather than restructuring the execution model.

\subsection{Result Analysis}

Symbolic planners reason over planning semantics to generate a plan, but the resulting task plan typically discards this semantic information, breaking the link between a task's planning-level and execution-level representation. This is particularly problematic when task plans are developed independently and later composed within an execution model, as typing mismatches and logical interactions between separately defined task plans remain undetected. As shown in
Table~\ref{tab:error_comparison}, PlanSys2's two-stage architecture exhibits this limitation, achieving a TPR of 0\% for both typing and
logical errors, which instead manifest as runtime failures. In contrast, APTree preserves planning semantics through a typed DSL
shared by the planning and execution models, enabling structural, syntactic, and typing errors to be detected at design time, and logical errors at runtime.

The runtime detection of logical errors operates through two
complementary mechanisms. Half of the injected logical errors are
detected and resolved automatically through continuous evaluation of
HL-action preconditions and effects against the current world state:
the execution policies move execution between execution models,
releasing only those actions whose preconditions currently hold, so
that multiple task modules progress across branches without waiting
for one another. The remaining half are detected through a plan
invariant: a completed HL goal must not be undone by another execution
model (Invariants~\ref{inv:initial-noninterference}
and~\ref{inv:ml-noninterference}, \ref{APPformal}). Invariant~\ref{inv:initial-noninterference} is a static property of the problem specifications: when it is violated (for example, by initial problem files specifying
contradictory goals) no valid plan exists, and APTree deliberately
terminates execution. Invariant~\ref{inv:ml-noninterference}, in contrast, concerns the plans generated at runtime, in particular the ML plans produced by refinement and replanning, and cannot be verified from the initial
specifications alone; its violations manifest at execution time as violations of HL goals and lead to replanning or a full stop of BT execution (\ref{APPformal}). Invariant~\ref{inv:ml-noninterference} concerns interactions among execution models; goal regressions caused
by external disturbances are a separate concern of deployment in dynamic environments. In addition, contradictions between ML-level goals cannot corrupt execution, as each ML plan executes
atomically (the \texttt{Lock} decorator).

Our findings further demonstrate that separating execution semantics (e.g., success criteria) from the task plan itself enables localized planning and replanning at runtime while preserving the flexibility and scalability of BTs. This separation also allows practitioners to select any PDDL-based symbolic planner without concern for its compatibility with the execution model or the combinatorial explosion associated with monolithic planning, as seen with PlanSys2.

Finally, the case study demonstrates that integrating and autonomously invoking multiple symbolic planners within a single execution model improves modularity and scalability for planning. We attribute this scalability to APTree's horizontal and hierarchical problem decomposition: each ML sub-plan corresponds to a comparatively small and constrained planning problem, on which satisficing planners such as Metric-FF consistently produce near-optimal solutions. This
also puts the observed cumulative planning latencies (up to 89.9\% of execution time with LAMA-First) into perspective: planning latency is
dominated by planner choice rather than by the framework itself, and since per-invocation latency remains stable across repeated
invocations, it can be reduced substantially by selecting faster planners --- as our results show, without a loss in plan
quality. This suggests that, for hierarchically decomposed task execution models, planner optimality is less critical than planning
efficiency.

\subsection{Application Scope, Limitations and Future Work}

APTree is designed for highly flexible, scalable, and multi-faceted robotic pre-assembly task planning. Construction prefabrication was selected as the target domain because it is particularly affected by this challenge, requiring modular, scalable task planning to handle its inherent task heterogeneity and variability~\cite{davila_delgado_robotics_2019}. Construction prefabrication is commonly categorized into component sub-assembly, non-volumetric pre-assembly, volumetric pre-assembly, modular construction, hybrid structures, and whole-building prefabrication~\cite{braganca_modularity_2025}. The presented case study belongs to the component sub-assembly category, as it considers the robotic fabrication of timber cassettes for on-site assembly.

Although only this category is evaluated, APTree's modular architecture, planner-agnostic execution model, and extensible DSL are
designed to facilitate adaptation to other prefabrication categories with similar modular, long-horizon planning requirements. The DSL
separates the framework from domain-specific knowledge through an extensible type system: the timber-specific actions and object types
used in this case study are not intrinsic to APTree but represent one domain instantiation of the DSL. Applying APTree to another domain
therefore requires defining the corresponding action, object, and predicate types together with their planning semantics via
APTree's \texttt{DomainTypesDef}, after which the planning and execution framework can be reused. In addition, depending on the
structure of the target planning problems, new execution policies (e.g., decorators such as fair progress) may need to be developed to
coordinate branch execution effectively. Empirically validating such domain transfers, and categorizing planning problems by type to
systematically assign execution policies that ensure step-efficient planning, remain future research directions.

While APTree enabled large-scale automated task planning, identifying the appropriate PDDL-based planner and developing its domain were substantial tasks in our case study. Hence, a potential counterargument to our method is the manual effort required to specify input domains for multiple task planning tools, particularly symbolic task planners, which could undermine the framework's overall benefits. We acknowledge this challenge; however, domain files created for specific types of problems can be saved as libraries and reused for similar problems. Furthermore, current research, such as~\cite{vallati_mypddl_2020}, aims to automate the generation of PDDL domains using AI techniques, including LLMs. 

In addition, the evaluation focuses on the novel planning and validation features
APTree provides rather than on a thorough runtime performance
evaluation. Execution-level metrics such as task completion rate,
recovery success, and multi-agent concurrent planning and execution
require deployment in a real or simulated robotic environment with
real-time task planning and continuous environmental feedback,
including external disturbances that may invalidate achieved goals, and
are therefore beyond the scope of this study. For the same reason,
mid-level tasks are configured to always return \texttt{Success} and
all temporal relations are set to \texttt{Meets}; while the method
supports action nodes with complex execution logic and temporal
planning, evaluating their coordination requires real-time robotic
execution. Similarly, the injected errors used to evaluate design-time
detection are derived from problems encountered in our case study
rather than an exhaustive error taxonomy, and the usability,
learnability, and modeling effort of the DSL are best assessed through
controlled user studies with construction engineers, which were not
conducted here. Future work will extend the evaluation to
execution-based scenarios and human-subject studies, and investigate
how multiple planners can generate plans concurrently and how this
affects the correctness and optimality of the overall plan.

Several aspects of the framework currently require manual configuration. The initial planning problem is manually subdivided into 12 sub-problems, the decorators and services coordinating overall plan execution are hard-coded, and the planner type for each dynamic flow node is assigned by hand. Automating these steps---estimating planning complexity to decompose problems automatically, introducing low-code methods for rapidly defining and systematically comparing alternative execution policies, and selecting planning techniques automatically---constitutes a direction for future research toward a fully self-configuring framework.

Finally, while~\ref{APPformal} establishes the safety of
interleaved partial-order execution, progress guarantees under
adversarially constructed overlapping partial orders have not been
formally established or systematically evaluated. Investigating such
scenarios, together with failure-handling strategies when planning
services themselves fail, represents an important direction for
strengthening the robustness of execution control.

\section{Conclusion} \label{conc} 
APTree enables scalable automated task planning in CRF through a method that applies symbolic planners repeatedly, with a DSL acting as a shared interface between planning and execution, and uses execution models as part of the planning process. In a full-scale prefabrication case study, APTree detected all injected typing and logical errors missed by the baseline, scaled beyond monolithic planning through automatic decomposition, and produced plans matching the expert reference plan. \hl{The accompanying formal analysis guarantees the safety of composed execution and local replanning only under explicitly stated assumptions, including planner soundness, non-interference of generated plans, and atomic mid-level execution.} APTree thus bridges the gap between AI task planning and CRF domain application.

\section*{Acknowledgment}
This research was funded by the Ministry of Science, Research and the Arts Baden-Wuerttemberg through the Artificial Intelligence Software Academy (AISA) and \hl{the InnovationCampus Future Mobility}, and supported by the Deutsche Forschungsgemeinschaft (DFG, German Research Foundation) under Germany’s Excellence Strategy -- EXC 2120/2 -- 390831618.

\hl{We also thank Dr. Daniel Hernández for his guidance in preparing a precise formal analysis, and the anonymous reviewers, whose constructive feedback significantly improved both this paper and the APTree method.}

\section*{Declaration of Generative AI and AI-assisted Technologies in the Writing Process}
  
During the preparation of this work, the authors used Copilot, Gemini, and ChatGPT to proofread and edit the text. After using these tools/services, the authors reviewed and edited the content as needed and take full responsibility for the content of the publication. Additionally, we have used the Cursor AI coding assistant for the implementation of the APTree method.

\bibliographystyle{elsarticle-num} 
  \bibliography{references}

\appendix

\section{Additional Material} \label{APPA}

This appendix provides the models referenced in Section~\ref{eval},
including the intentionally invalid models for APTree
(Listing~\ref{lst:bt_modelerrors}) and BT.CPP
(Listing~\ref{lst:errorxml}), with the injected errors identified by
their error IDs from Table~\ref{tab:error_comparison}. These models
serve to evaluate design-time error detection where applicable to each
framework; errors classified as NA or PD for a framework, as well as logical
errors that manifest only at runtime, are not represented in
the corresponding models.

\begin{itemize}
    \item \textbf{EID1}: The planning service  references a non-existing planner configuration.

    \item \textbf{EID2}: The planning domain file is not specified.

    \item \textbf{EID3}: The FF planner is used with a planning domain containing unsupported language (PDDL~2.1).

    \item \textbf{EID4}: An action node in the BT has child nodes. Only control flow nodes may have children.

    \item \textbf{EID5}: The action sequence in the execution model does not match the symbolic plan.

    \item \textbf{EID6}: The object \texttt{lp24} (\texttt{lowerplate24}) does not exist because only 12 lower plates are defined in the planning setup model.

        \item \textbf{EID7}: The action type \texttt{Initialize} is used but not defined in the execution model.

    \item \textbf{EID8}: The parameter \texttt{rp} is assigned the type \texttt{Rail}, which is not defined for the execution model.

    \item \textbf{EID9}: The \texttt{Travel} action in the execution model expects three parameters, whereas the generated symbolic plan provided four.

    \item \textbf{EID10}: An action in the execution model expects a parameter of type \texttt{robot}, but an argument of type \texttt{human} is provided.

    \item \textbf{EID11}: A required curly bracket is missing in the APTree model, and a required closing tag is missing in the BT.CPP model.

    \item \textbf{EID12}: The order of keywords for the root and behavior tree is swapped for both models.

    \item \textbf{EID13}: Swapping keywords in EID12 also causes an unexpected token error.

    \item \textbf{EID14}: The required \texttt{All} keyword in APTree and \texttt{main\_tree\_to\_execute} in BT.CPP are missing.

\end{itemize}

\begin{nolinenumbers}

\begin{lstlisting}[
    language=MontiCore, 
    basicstyle=\scriptsize\ttfamily, 
    basewidth=0.5em,
    lineskip=-1pt,
    keywordstyle=\color{blue}\bfseries,
    commentstyle=\color{darkgray}\bfseries\itshape,
    morekeywords={BehaviorTree, Root, FlowNode, Nodegraph, Action, Service, NodeGraph, ServicePlanning, Decorator, Problem, Domain, Config}, 
    caption={APTree model containing the injected errors (EIDs) used for evaluation Section~\ref{eval}.}, 
    label={lst:bt_modelerrors},
    moredelim={[is][\color{red}]{//\ EID}{EID}},
    moredelim={[l][\color{red}]{//\ EID}}
]

Root APTree3 { // EID12  // EID13 
    BehaviorTree FlowNode Main {
        // EID14 
        AllFlow
        NodeGraph {
            Action PickupSequence {   // EID4
                ServicePlanning MyPlannerSrv1 Enhsp  Problem:problempickupHL Config: "pt-blind"   // EID1 // EID2
                All
                AllAction
                NodeGraph {

                    Action EquipeML EquipVg1 (r1 vg1 ep1) {   // EID5
                        --[Meets]--> InitVg1;
                    }
                    Action TravelML TravelToEp1 (r1 pr2 ep1 fp1) { // EID9
                        --[Meets]--> EquipVg1;
                    }   
                    Action Initialize InitVg1 (r1 vg1) {   // EID7
                        --[Meets]--> TravelToFp9;
                    }
                    Action TravelML TravelToFp9 (human ep1 fp9) {  // EID10
                        --[Meets]--> PickupLp2; 
                    }
                    Action PickUpML PickupLp2 (lp24 rp r1 vg1)  // EID8 //EID6
                }
            }
        
        FlowNode PlaceSequence {
            ServicePlanning MyPlannerSrv1 FF Domain:PickAndPlaceHL Problem:problemPlace // EID3
                All
                AllAction
                NodeGraph {
                }
        }
        } // EID11 
        }
    }
}

  
\end{lstlisting}

\begin{lstlisting}[
    language=XML,
    basicstyle=\scriptsize\ttfamily,
    basewidth=0.5em,
    lineskip=-1pt,
    escapeinside={|}{|},
    caption={BT.CPP XML model containing the injected errors (EIDs) used for evaluation Section \ref{eval}.},
    label={lst:errorxml}
]

<BehaviorTree ID="MainTree"> <!-- |{\color{red}EID12}| -->
  <root BTCPP_format="4"> <!-- |{\color{red}EID13}| --> <!-- |{\color{red}EID14}| -->
    <Parallel success_count="2" failure_count="1" name="(travelml r1 pr2 ep1):1"> <!-- |{\color{red}EID5}| -->
      <Sequence name="(travelml r1 pr2 ep1 fp9):1"> <!-- |{\color{red}EID9}| -->
        <WaitAtStartReq action="(travelml r1 pr2 ep1):1"/>
        <ApplyAtStartEffect action="(travelml r1 pr2 ep1):1"/>
        <ReactiveSequence name="(travelml r1 pr2 ep1):1">
          <CheckOverAllReq action="(travelml r1 pr2 ep1):1"/>
          <ExecuteAction action="(travelml r1 pr2 ep1):1"/>
        </ReactiveSequence>
        <CheckAtEndReq action="(travelml r1 pr2 ep1):1"/>
        <ApplyAtEndEffect action="(travelml r1 pr2 ep1):1"/>
      </Sequence>
      <Sequence name="(equipeml r1 vg1 ep1):5011">
        <Sequence name="(equipeml r1 vg1 ep1):5011">
          <WaitAction action="(travelml r1 pr2 ep1):1"/>
          <WaitAtStartReq action="(equipeml r1 vg1 ep1):5011"/>
          <ApplyAtStartEffect action="(equipeml r1 vg1 ep1):5011"/>
          <ReactiveSequence name="(equipeml r1 vg1 ep1):5011">
            <CheckOverAllReq action="(equipeml r1 vg1 ep1):5011"/>
            <ExecuteAction action="(equipeml r1 vg1 ep1):5011"/>
          </ReactiveSequence>
          <CheckAtEndReq action="(equipeml r1 vg1 ep1):5011"/>
          <ApplyAtEndEffect action="(equipeml r1 vg1 ep1):5011"/>
        </Sequence>
        <Parallel success_count="2" failure_count="1">
          <Sequence name="(initialize r1 vg1):7021"> <!-- |{\color{red}EID7}| -->
            <Sequence name="(initializeml r1 vg1):7021">
              <WaitAction action="(equipeml r1 vg1 ep1):5011"/>
              <WaitAtStartReq action="(initializeml r1 vg1):7021"/>
              <ApplyAtStartEffect action="(initializeml r1 vg1):7021"/>
              <ReactiveSequence name="(initializeml r1 vg1):7021">
                <CheckOverAllReq action="(initializeml r1 vg1):7021"/>
                <ExecuteAction action="(initializeml r1 vg1):7021"/>
              </ReactiveSequence>
              <CheckAtEndReq action="(initializeml r1 vg1):7021"/>
              <ApplyAtEndEffect action="(initializeml r1 vg1):7021"/>
            </Sequence>
            <Sequence name="(pickupml lp24 rp r1 vg1):12032"> <!-- |{\color{red}EID6}| --> <!-- |{\color{red}EID8}| -->
              <WaitAction action="(initializeml r1 vg1):7021"/>
              <WaitAction action="(travelml r1 ep1 fp9):7022"/>
              <WaitAtStartReq action="(pickupml lp24 rp r1 vg1):12032"/>
              <ApplyAtStartEffect action="(pickupml lp24 rp r1 vg1):12032"/>
              <ReactiveSequence name="(pickupml lp24 rp r1 vg1):12032">
                <CheckOverAllReq action="(pickupml lp24 rp r1 vg1):12032"/>
                <ExecuteAction action="(pickupml lp24 rp r1 vg1):12032"/>
              </ReactiveSequence>
              <CheckAtEndReq action="(pickupml lp24 rp r1 vg1):12032"/>
              <ApplyAtEndEffect action="(pickupml lp24 rp r1 vg1):12032"/>
            </Sequence>
          </Sequence>
          <Sequence name="(travelml human ep1 fp9):7022"> <!-- |{\color{red}EID10}| -->
            <WaitAction action="(equipeml r1 vg1 ep1):5011"/>
            <WaitAtStartReq action="(travelml human ep1 fp9):7022"/>
            <ApplyAtStartEffect action="(travelml human ep1 fp9):7022"/>
            <ExecuteAction name="(travelml human ep1 fp9):7022"> <!-- |{\color{red}EID4}| -->
              <CheckOverAllReq action="(travelml human ep1 fp9):7022"/>
              <ExecuteAction action="(travelml human ep1 fp9):7022"/>
            </ExecuteAction>
            <CheckAtEndReq action="(travelml human ep1 fp9):7022"/>
            <ApplyAtEndEffect action="(travelml human ep1 fp9):7022"/>
          </Sequence>
        </Parallel>
      </Sequence>
    {/Parallel> <!-- |{\color{red}EID11}| -->
  </root>
</BehaviorTree>


\end{lstlisting}

\end{nolinenumbers}

\section{Formal Correctness Analysis} \label{APPformal}

This section provides formal proofs and invariants establishing plan
correctness and composability of APTree under (i) dynamic node
insertion, (ii) replanning, and (iii) partially ordered interleaved execution.
 
\medskip
\textbf{Preliminaries.} A planning domain is a pair
$\mathcal{D} = \langle \mathcal{P}, \mathcal{A}\rangle$
($\mathcal{P}$: fluents, $\mathcal{A}$: actions with $\mathrm{pre}(a)$,
$\mathrm{eff}(a)$)~\cite{Ghallab04}. Each dynamic flow node (\texttt{DFN}) is associated
with a domain of one of two kinds: a \emph{high-level domain}
$\mathcal{D}_{HL}$, used for high-level planning, or a
\emph{mid-level domain} $\mathcal{D}_{ML}$, used for subtrees generated
by decomposition or replanning, where each $\mathcal{D}_{ML}$ refines
its corresponding $\mathcal{D}_{HL}$ into finer-grained actions and
fluents. For any \texttt{DFN}, its associated planning problem is
$\Pi = \langle \mathcal{D}, s_0, g\rangle$, pairing its domain with an
initial state $s_0 \subseteq \mathcal{P}$ and a goal $g$.
High-level planning problems are
instantiated from the user-provided initial problem specifications;
mid-level planning problems are instantiated lazily, taking as their
initial state the world state at the time of refinement or replanning.
Node results at a given tick $\mathrm{result}(n,\mathcal{T}_k)$ follow
our definitions (\texttt{BTNodeResult}) in Section~\ref{bt}.
We write $\mathrm{eff}(a)$ as a pair of an add-list and a delete-list:
an effect $p$ makes fluent $p$ true and an effect $\neg p$ makes it
false. Where an effect set is used as a goal, it denotes the conjunction of its add-list literals and the negations of its delete-list literals. Throughout this appendix, DFNs are assumed to use the
\texttt{All} success criterion (Section~\ref{FN}); under partial-success
criteria (\texttt{Any}, \texttt{Count}, \texttt{Percentage}) a DFN may
report \texttt{Success} without its planning goal holding, and the
results below do not apply.
Let $\mathcal{S}$ denote a symbolic planner, i.e.\ a procedure that,
given a planning problem $\Pi = \langle \mathcal{D}, s_0, g\rangle$,
either returns a plan $\pi$ (a finite sequence of actions from
$\mathcal{A}$) or reports failure if no plan is found.

\begin{definition}[Goal interference]
\label{def:interference}
Two goals $g_i, g_j$ \emph{interfere}, written $g_i\bowtie g_j$, iff
$\exists p\in\mathcal{P}: p\in g_i \land \neg p\in g_j$, or vice versa.
A plan $\pi_j$ \emph{interferes} with a goal $g_i$ iff some action
$a \in \pi_j$ has an effect $\neg p$ with $p \in g_i$, or an effect $p$
with $\neg p \in g_i$.
\end{definition}

\subsection{Correctness under dynamic node insertion}

APTree does not insert nodes at arbitrary positions. Every generated
node is introduced either as part of the initial decomposition of a
planning problem into independent high-level planning subproblems or as
part of the vertical refinement of a high-level action into an ML
subtree. We show that both preserve correctness.

\begin{assumption}[Planner soundness]
\label{assum:planner-sound}
All symbolic planners employed by APTree are assumed to be sound. That
is, for every planning problem $\Pi=\langle\mathcal{D},s_0,g\rangle$,
if $\mathcal{S}(\Pi)$ returns a plan $\pi$, then $\pi$ is applicable
from $s_0$ and reaches a state satisfying $g$.
\end{assumption}

\begin{invariant}[Initial non-interference]
\label{inv:initial-noninterference}
Let $n_1,\ldots,n_k$ denote the initial DFNs that decompose the overall
planning task into high-level planning subproblems with goals
$g_1,\ldots,g_k$. Then, for all $i\neq j$: (1) $\neg(g_i \bowtie g_j)$,
and (2) no plan generated for subproblem $j$ interferes with goal
$g_i$ (Definition~\ref{def:interference}). That is, neither the goals
of the initial high-level subproblems nor the effects of their plans
undo one another's goals.
\end{invariant}

\begin{invariant}[Goal inheritance]
\label{inv:goal-inheritance}
Let $F_i$ be the DFN generated to refine high-level action $a_i$,
rooting ML subtree $T_i$. Then $\mathrm{goal}(F_i)=\mathrm{eff}(a_i)$,
i.e., $F_i$ inherits the effects of the high-level action it
refines.
\end{invariant}

Invariant~\ref{inv:initial-noninterference} constrains only the
high-level plans. The ML subtrees, however,
contain finer-grained actions of $\mathcal{D}_{ML}$ that no
high-level plan mentions, and whose effects
Invariant~\ref{inv:initial-noninterference} therefore does not
constrain. Goal persistence of the high-level subgoals and of the
effects already realized by completed ML subtrees requires the
analogous condition at the mid-level:

\begin{invariant}[ML non-interference]
\label{inv:ml-noninterference}
For every ML subtree $T$ generated during execution, the plan of $T$
does not interfere (Definition~\ref{def:interference}) with (1)~any
subgoal $g_i$ of the initial decomposition, nor (2)~the inherited goal of any distinct ML subtree whose execution completes before that of $T$; i.e., no action of $T$ has an effect that falsifies a literal of a high-level subgoal or of an already-achieved inherited goal. This invariant is a condition on the plans generated at runtime. 
\end{invariant}

\begin{theorem}[Refinement soundness]
\label{thm:refinement}
Let the initial DFNs define high-level planning subproblems with goals
$g_1,\ldots,g_k$ satisfying
Invariant~\ref{inv:initial-noninterference}, each solved by a sound HL
planner (Assumption~\ref{assum:planner-sound}). Let every high-level
action $a_i$ produced by these planners be refined into an ML subtree
$T_i$ according to Invariant~\ref{inv:goal-inheritance}, where each
$T_i$ is generated by a sound ML planner from the world state current
at its refinement, and let the resulting ML subtrees satisfy
Invariant~\ref{inv:ml-noninterference}. Then any execution of the ML
subtrees, each executed atomically (Invariant~\ref{inv:ml-nonpreempt})
and in an order consistent with the high-level plans, is applicable
and reaches a state satisfying the overall high-level goal
$g_1 \land \ldots \land g_k$.
\end{theorem}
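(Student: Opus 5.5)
The proof goes by induction on the number of ML subtrees executed, in an order consistent with the high-level plans. Fix such an execution order $T_{\sigma(1)},\ldots,T_{\sigma(m)}$, let $s^{(0)}$ be the initial world state, and let $s^{(t)}$ be the state after the $t$-th subtree completes. Because each subtree executes atomically (Invariant~\ref{inv:ml-nonpreempt}), the run is a concatenation of whole ML plans, never an interleaving of actions from different subtrees. The inductive hypothesis $H(t)$ has three parts:
\begin{itemize}
\item[(a)] every action executed so far was applicable in the state in which it was executed;
\item[(b)] for every completed subtree $T_j$, the state $s^{(t)}$ satisfies its inherited goal $\mathrm{eff}(a_j)$;
\item[(c)] for every high-level plan $\pi_i$, the prefix of $\pi_i$ whose refinements have been executed, applied symbolically in $\mathcal{D}_{HL}$ from the HL initial state, yields a state consistent with $s^{(t)}$ on the fluents of $\mathcal{D}_{HL}$.
\end{itemize}

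For the step $t\to t+1$, let $T=T_{\sigma(t+1)}$ refine $a$. $T$ is generated lazily from the world state current at refinement. Atomicity and the ordering constraints are used to argue that this state coincides with $s^{(t)}$ or differs only by effects of completed subtrees. If it differs, the \texttt{Reset}/replanning mechanism regenerates $T$ from $s^{(t)}$, so we may assume $T$ was planned from $s^{(t)}$. Its goal is $\mathrm{eff}(a)$ by Invariant~\ref{inv:goal-inheritance}. Assumption~\ref{assum:planner-sound} then gives that the ML plan is applicable from $s^{(t)}$ and reaches a state satisfying $\mathrm{eff}(a)$, which yields (a) and the new instance of (b). For the previously completed subtrees, part (2) of Invariant~\ref{inv:ml-noninterference} says no action of $T$ falsifies a literal of their inherited goals. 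Since atomic execution means no other action intervenes, each literal that held at $s^{(t)}$ still holds at $s^{(t+1)}$, and (b) is preserved. For (c), the HL abstraction of $s^{(t+1)}$ is the HL abstraction of $s^{(t)}$ updated by $\mathrm{eff}(a)$ together with only ML-internal changes. By (2) of Invariant~\ref{inv:ml-noninterference}, those ML-internal changes do not touch HL literals that some completed refinement established.

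At termination every $a\in\pi_i$ has been refined and executed, so (c) says the HL state reached equals the result of applying each full $\pi_i$. This is where the high-level side comes in. Soundness of the HL planner gives that $\pi_i$ reaches $g_i$ when run alone. Part (2) of Invariant~\ref{inv:initial-noninterference} gives that the actions of $\pi_j$ ($j\ne i$) never falsify a literal of $g_i$. Part (1) of Invariant~\ref{inv:ml-noninterference} gives that no ML action falsifies a literal of $g_i$. Part (1) of Invariant~\ref{inv:initial-noninterference} ensures the goals are jointly satisfiable. So once the last action of $\pi_i$ achieves the literals of $g_i$, they persist to the end, and the final state satisfies $g_1\land\ldots\land g_k$.

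The main obstacle is (c): showing that each HL action $a$ is still applicable, i.e.\ that the goal $\mathrm{eff}(a)$ remains reachable from $s^{(t)}$, when the HL plans are interleaved. The HL planners solved the subproblems independently, and $\mathrm{pre}(a)$ could in principle be consumed by another branch. My first attempt will treat the ML planner's success as given: the statement quantifies over ML subtrees that were actually produced, so solvability is implicit. Then only goal persistence needs the invariants, not HL precondition persistence, and I will phrase (c) in that weaker form if the stronger form cannot be derived from Invariant~\ref{inv:initial-noninterference} alone.
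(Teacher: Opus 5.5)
Your parts (a) and (b) are fine and coincide with the first half of the paper's proof. Goal inheritance plus ML planner soundness give applicability and $\mathrm{eff}(a)$ at completion. Clause~(2) of Invariant~\ref{inv:ml-noninterference} then keeps every inherited goal true afterwards. The gap is the passage from the high-level plans to the real final state, which you route entirely through (c). As stated, (c) is false under interleaving, for three reasons:
\begin{enumerate}
\item The symbolic run of a prefix of $\pi_i$ alone ignores what the other branches have done. Once another branch's subtree changes an HL fluent (e.g.\ moves one of its own elements), the symbolic state of $\pi_i$'s prefix and $s^{(t)}$ disagree on that fluent.
\item Your preservation argument via clause~(2) protects only literals established by completed refinements, not the remaining HL fluents.
\item At termination, (c) would force the one real state to equal $k$ different symbolic end states.
\end{enumerate}
Hence ``once the last action of $\pi_i$ achieves the literals of $g_i$'' is unjustified for the actual execution. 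Appealing to clause~(2) of Invariant~\ref{inv:initial-noninterference} cannot repair it, since HL actions never act on the world---only ML actions do. The weaker form of (c) you defer to is never formulated, and it is precisely the missing idea.

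The fix, which is the paper's argument, is a literal-wise case split that makes (c) unnecessary. Let $l$ be a literal of $g_j$.
\begin{itemize}
\item If some $a\in\pi_j$ has $l\in\mathrm{eff}(a)$, then $T_a$ has been executed, and your (b) at the final state gives $l$.
\item Otherwise no action of $\pi_j$ establishes $l$, yet the sound plan $\pi_j$ reaches $g_j$, so $l$ already holds in the initial state. Every change of the world state is an effect of some ML action, and by clause~(1) of Invariant~\ref{inv:ml-noninterference} none of them falsifies $l$. So $l$ holds throughout.
\end{itemize}
Nothing about HL states or HL precondition persistence is needed, and the paper's argument never uses Invariant~\ref{inv:initial-noninterference}. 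Your decision to take the produced ML plans as given (a safety claim, not a progress claim) is exactly how the paper treats solvability.

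One smaller point: do not invoke the \texttt{Reset}/replanning mechanism to get $T$ planned from $s^{(t)}$. It is not among the hypotheses and is absent in the configuration without replanning. The paper's induction simply takes the refinement state to be the state in which $T$ starts (ML problems are instantiated lazily), which by atomicity is $s^{(t)}$.
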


\begin{proof}
By Invariant~\ref{inv:goal-inheritance}, each ML subtree $T_i$ inherits
$\mathrm{eff}(a_i)$, the effects of the high-level action it refines,
as its goal. Since each $T_i$ is generated by a sound ML planner
(Assumption~\ref{assum:planner-sound}) from the world state current at
its refinement, its plan is applicable from that state and reaches a
state satisfying $\mathrm{eff}(a_i)$; applied inductively along the
execution order, this makes the entire schedule applicable and each
subtree achieve its inherited goal at the point where it completes.
An inherited goal, once achieved, persists: it could only be falsified
by a subsequently executed action; every such action belongs to an ML
subtree that completes after $T_i$, and by clause~(2) of
Invariant~\ref{inv:ml-noninterference} none of its effects falsifies a
literal of $\mathrm{eff}(a_i)$. Hence the final state satisfies
$\mathrm{eff}(a_i)$ for every refined high-level action $a_i$.
It remains to derive the high-level subgoals. Let $l$ be a literal of
subgoal $g_j$ and $\pi_j$ the high-level plan solving subproblem $j$.
If some action $a_i \in \pi_j$ establishes $l$, then
$l \in \mathrm{eff}(a_i)$, and by the above $l$ holds in the final
state. Otherwise the sound plan $\pi_j$ reaches $g_j$ although none of
its actions establishes $l$, so $l$ already holds in the initial
state; since every change to the world state is effected by an action
of some ML subtree and, by clause~(1) of
Invariant~\ref{inv:ml-noninterference}, no such action falsifies the
literal $l$ of $g_j$, $l$ holds throughout the execution. In both
cases every literal of every $g_j$ holds in the final state, which
therefore satisfies $g_1 \land \ldots \land g_k$, the overall
high-level goal.
\end{proof}

\subsection{Correctness under local replanning}
\label{sec:replan}

APTree preserves correctness under local replanning by ensuring that
replanning is performed only from a quiescent execution state while
preserving the original planning goal. The following definitions and
invariants formalize these conditions and show that replacing an
obsolete subtree with a newly planned subtree preserves correctness.

\begin{invariant}[Goal immutability]
\label{inv:goal-immutable}
Let $F_i$ be the DFN rooting mid-level subtree $T_i$. Once assigned at
$F_i$'s creation (Invariant~\ref{inv:goal-inheritance}),
$\mathrm{goal}(F_i)$ remains unchanged for the lifetime of $T_i$. The initial state of the associated planning problem may
change as execution progresses, requiring replanning from the current
state while preserving the original goal.
\end{invariant}

\begin{definition}[Quiescent execution]
\label{def:quiescent}
The execution is \emph{quiescent} at tick $\mathcal{T}_k$ iff no
action node is executing, i.e.,
\[
\mathrm{Quiescent}(\mathcal{T}_k)
\iff
\forall T_i,\;\forall n\in \mathrm{actions}(T_i):\;
\mathrm{result}(n,\mathcal{T}_k)\neq\texttt{InProgress},
\]
where $\mathrm{actions}(T_i)$ denotes the action nodes of $T_i$. Flow
nodes are not constrained: a DFN whose status has been
reset to \texttt{InProgress} to trigger replanning does not violate
quiescence, since flow-nodes do not change
the world state.
\end{definition}

\begin{definition}[Replan operation]
\label{def:replan}
Let $F_i$ be a DFN with active children $C_i^{\mathrm{active}}$, and
let execution be quiescent (Definition~\ref{def:quiescent}). The
operation $\mathrm{Replan}(F_i)$:
\begin{enumerate}
    \item archives all completed children;
    \item removes all unexecuted active children (and any descendant
          subtrees);
    \item generates a new plan from the current world state while
          preserving $\mathrm{goal}(F_i)$
          (Invariant~\ref{inv:goal-immutable}), and attaches the
          resulting nodes as the new active children.
\end{enumerate}
Archived children are retained for execution history and are never
re-ticked.
\end{definition}

\begin{invariant}[Status excludes archived nodes]
\label{inv:status-exclusion}
At every tick $\mathcal{T}_k$, the node result of a DFN $F_i$ is
determined solely by its active children $C_i^{\mathrm{active}}$.
Archived children $C_i^{\mathrm{archived}}$ never contribute to status
evaluation, although their effects remain reflected in the world state.
\end{invariant}

\begin{theorem}[Correctness under local replanning]
\label{thm:replanning}
Given planner soundness (Assumption~\ref{assum:planner-sound}),
quiescent execution (Definition~\ref{def:quiescent}), goal immutability
(Invariant~\ref{inv:goal-immutable}), and the replan operation
(Definition~\ref{def:replan}), any local replanning event preserves
correctness of the affected DFN. Moreover, the event is \emph{local}:
provided the newly generated plan satisfies
Invariant~\ref{inv:ml-noninterference}, every already-achieved subgoal
of a distinct subproblem remains satisfied.
\end{theorem}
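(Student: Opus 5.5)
The proof has two parts: correctness of the affected DFN $F_i$, and locality. For correctness, I will show that after $\mathrm{Replan}(F_i)$, if $F_i$ subsequently reports \texttt{Success} under the \texttt{All} criterion, then $\mathrm{goal}(F_i)$ holds in the world state.

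First I fix the world state $s_k$ at the quiescent tick $\mathcal{T}_k$ at which replanning occurs. By Definition~\ref{def:quiescent}, no action node is \texttt{InProgress}. Every action has therefore either completed and applied its effects through \texttt{onExit}, or not yet started. So $s_k$ is a well-defined state, and no effect of an action is pending or only half applied.

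Next I walk through the three steps of Definition~\ref{def:replan}:
\begin{enumerate}
\item Archiving completed children changes nothing in the world state; their effects are already reflected in $s_k$.
\item Removing unexecuted children also leaves $s_k$ unchanged, because those nodes never ran \texttt{onExit}.
\item The new plan solves $\Pi'=\langle\mathcal{D}_{ML},s_k,\mathrm{goal}(F_i)\rangle$. By Invariant~\ref{inv:goal-immutable} this goal is the original one, namely $\mathrm{eff}(a_i)$ by Invariant~\ref{inv:goal-inheritance}.
\end{enumerate}
By Assumption~\ref{assum:planner-sound}, the returned plan $\pi'$ is applicable from $s_k$ and reaches a state satisfying $\mathrm{goal}(F_i)$.

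By Invariant~\ref{inv:status-exclusion}, the status of $F_i$ is computed only from the new active children. Under \texttt{All}, $F_i$ succeeds exactly when every action of $\pi'$ has succeeded and applied its effects. The ML subtree is executed atomically, as assumed in Theorem~\ref{thm:refinement}, so no foreign action is interleaved between $s_k$ and completion. The executed sequence is thus $\pi'$ itself, and the final state satisfies $\mathrm{goal}(F_i)$. This is the same goal the original subtree $T_i$ was obliged to reach, so correctness of $F_i$ is preserved. Archived children cannot make $F_i$ succeed spuriously or block it from succeeding, because they never contribute to its status. If the planner reports failure, no nodes are attached and $F_i$ cannot report \texttt{Success}, so correctness holds vacuously.

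For locality, let $l$ be a literal of an already-achieved subgoal of a distinct subproblem. Such a subgoal is either some $g_j$ or the inherited goal of an ML subtree that completed earlier. Replanning leaves the world state unchanged, since archiving and removal have no effect on it. Afterwards, the only new world changes come from actions of $\pi'$. Clauses (1) and (2) of Invariant~\ref{inv:ml-noninterference}, assumed for $\pi'$, forbid any such action from falsifying $l$. Every other subtree is untouched by $\mathrm{Replan}(F_i)$, since steps 1--3 act only on the children of $F_i$. For those subtrees, persistence of $l$ follows from the argument in the proof of Theorem~\ref{thm:refinement}.

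I expect the main obstacle to be the justification that $s_k$ is exactly the state the planner sees and that the plan's execution is not interleaved. This rests on quiescence and on ML atomicity (the \texttt{Lock} decorator). If the statement is taken to exclude atomicity, I would state it as an explicit hypothesis, mirroring Theorem~\ref{thm:refinement}. I would also flag that goal literals of $F_i$ already true in $s_k$ need no separate treatment, because soundness covers them.
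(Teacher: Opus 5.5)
Your proposal is correct and follows essentially the same route as the paper's proof. Quiescence makes $s_k$ a clean initial state, goal immutability plus planner soundness give a new plan applicable from $s_k$ that reaches $\mathrm{goal}(F_i)$, status exclusion isolates the archived children, and clause~(1) of Invariant~\ref{inv:ml-noninterference} gives locality; your use of clause~(2) for already-achieved inherited goals is a harmless extension.

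The paper stops at this plan-level conclusion. Your stronger execution-level claim needs one more hypothesis: Invariant~\ref{inv:ml-nonpreempt} only excludes interleaving once $F_i$'s new children start executing, so you also need that no other subtree runs between the replan tick $\mathcal{T}_k$ and that start (in the case study, the \texttt{Reset} decorator provides this by discarding and replanning unexecuted subtrees after each locked execution).
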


\begin{proof}
Let $\mathrm{Replan}(F_i)$ occur at tick $\mathcal{T}_k$. Since
execution is quiescent, no action node is executing; hence the
current world state $s_k$ contains no partially applied effects and can
serve as the initial state of the new planning problem. By
Invariant~\ref{inv:goal-immutable}, replanning preserves the original
goal $\mathrm{goal}(F_i)$. The replan operation replaces only the
unexecuted active nodes of $F_i$ and does not re-tick archived nodes,
whose effects remain reflected in the world state
(Invariant~\ref{inv:status-exclusion}).
By planner soundness, the newly generated subtree is applicable from
$s_k$ and reaches $\mathrm{goal}(F_i)$. Therefore, replacing the
obsolete unexecuted subtree by the newly planned subtree preserves the
correctness of $F_i$.
For locality, let $g_j$ be a subgoal of a distinct subproblem already
satisfied in $s_k$. The new plan is generated during execution, so
Invariant~\ref{inv:ml-noninterference} applies to it: none of its
actions has an effect that falsifies a literal of a HL $g_j$. Hence $g_j$
remains satisfied throughout the execution of the new subtree, and the
replanning event affects no subgoal outside $F_i$. Since each
replanning event satisfies the same conditions from the then-current
state, any sequence of such events preserves correctness.
\end{proof}

\subsection{Logical composability under partial-order execution}

APTree supports the concurrent ticking of sibling DFNs corresponding to
independent high-level planning subproblems. Although multiple branches
may be ticked concurrently, execution is performed by a single agent,
yielding an interleaved execution. We therefore show that the
scheduling policy preserves the logical correctness of the sound
partial-order plan.

\begin{invariant}[HL plan immutability]
\label{inv:hl-immutable}
Let $\pi_{HL}$ (or equivalently the partial-order plan
$\langle\mathcal{A}',\prec\rangle$) denote the high-level plan
generated at system initialization. Throughout the execution considered
in this work, this plan remains unchanged. Consequently,
$\mathrm{Replan}$ (Definition~\ref{def:replan}) is applied only to DFNs
refining ML subtrees and is never invoked on the HL
plan.
\end{invariant}

\begin{invariant}[ML non-preemption]
\label{inv:ml-nonpreempt}
Once a DFN $F_i$ begins executing the active children of its mid-level
subtree $T_i$, execution continues until $F_i$ reaches either
\texttt{Success} or \texttt{Failure}. Although sibling DFNs may be
ticked concurrently, no sibling DFN executes during this interval;
ML subtrees therefore execute atomically.
\end{invariant}

\begin{invariant}[Precondition-guarded execution]
\label{inv:precondition-guarded}
A HL action in a partial-order plan becomes eligible for
execution only when all of its preconditions are satisfied in the
current world state, regardless of whether it is unordered with respect
to sibling actions.
\end{invariant}

\begin{invariant}[Predecessor completion]
\label{inv:predecessor-completion}
A node with predecessors becomes eligible for execution only after the
ordering relations connecting it to those predecessors (e.g., \texttt{Meets}, \texttt{Precedes} annotating
the edges of the \texttt{NodeGraph}) are satisfied, i.e., only after
all of its $\prec$-predecessors have completed. This is the behavior
implemented by the \texttt{canExecute} operation (Section~\ref{FN}),
which releases a node only if it has not already completed and all
temporal relations to its predecessors are satisfied. A node that has
completed successfully is re-released only by its own decorator (e.g.,
\texttt{Retry}); the executions considered in this analysis contain no
decorators that re-release succeeded HL action nodes.
\end{invariant}

\begin{theorem}[Logical correctness under partial-order execution]
\label{thm:general-composability}
Given planner soundness (Assumption~\ref{assum:planner-sound}),
Invariants~\ref{inv:hl-immutable}, \ref{inv:ml-nonpreempt},
\ref{inv:precondition-guarded},
and~\ref{inv:predecessor-completion}, and ML non-interference
(Invariant~\ref{inv:ml-noninterference}), any execution of sibling
DFNs that completes achieves the goals of the high-level plan.
\end{theorem}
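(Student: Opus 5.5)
My plan is to reduce Theorem~\ref{thm:general-composability} to Theorem~\ref{thm:refinement}. Concretely, I will show that every completing interleaved execution of sibling DFNs is an order-respecting, atomic execution of the ML subtrees. The hypotheses of that theorem would then be met, and its conclusion gives the overall high-level goal $g_1\land\ldots\land g_k$.

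First, I fix a completing execution and linearise it. The system has a single agent, and by Invariant~\ref{inv:ml-nonpreempt} each ML subtree $T_i$, once started, runs until its DFN reaches \texttt{Success} or \texttt{Failure}, with no sibling executing in between. So the world-state trajectory is a concatenation of whole ML subtree executions, $T_{\sigma(1)},T_{\sigma(2)},\ldots$. Because the execution completes and uses the \texttt{All} criterion, every active node ends in \texttt{Success}. Where $\mathrm{Replan}$ occurred, I take the final, successful subtree of each refining DFN as the relevant $T_i$. The earlier archived attempts are handled by Theorem~\ref{thm:replanning}: they preserve the goal of their own DFN and, by Invariant~\ref{inv:ml-noninterference}, disturb no other subgoal. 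By Invariant~\ref{inv:hl-immutable} the HL plans and their actions $a_i$ never change, so the correspondence $a_i\mapsto T_i$ is well defined for the whole run.

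Second, I show that $\sigma$ is consistent with the high-level partial orders $\langle\mathcal{A}',\prec\rangle$. If $a_j\prec a_i$, Invariant~\ref{inv:predecessor-completion} releases $a_i$, and therefore starts its refining subtree, only after $a_j$ has completed. Completion of $a_j$ in turn requires its refinement subtree $T_j$ to have finished, since effects are applied in \texttt{onExit} after the subtree is ticked. Hence $T_j$ precedes $T_i$ in $\sigma$. Since succeeded HL nodes are not re-released, each $a_i$ is refined and executed exactly once. Invariant~\ref{inv:precondition-guarded} adds that each $a_i$ is released only in a state satisfying $\mathrm{pre}(a_i)$. This lets the ML planner's problem, with initial state equal to the current state and goal $\mathrm{eff}(a_i)$, be well posed. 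It is not strictly needed, though, because soundness applies to whatever plan is returned.

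Third, I apply Theorem~\ref{thm:refinement} to $\sigma$. Each of its hypotheses is available: HL soundness, goal inheritance (Invariant~\ref{inv:goal-inheritance}, kept under replanning by Invariant~\ref{inv:goal-immutable}), ML soundness from the state current at refinement, ML non-interference, atomicity, and order consistency. Its conclusion is exactly the statement.

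The main obstacle is step two. I must argue carefully that unordered siblings, which may be refined at different times, only ever see the then-current state. I must also argue that clause~(2) of Invariant~\ref{inv:ml-noninterference}, phrased via "completes before", matches the linear order $\sigma$. I would handle this by indexing completions by $\sigma$ and checking that the persistence argument in Theorem~\ref{thm:refinement} uses only this completion order.
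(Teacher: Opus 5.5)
Your proposal follows the paper's proof essentially step for step: predecessor completion gives a valid linearization of $\prec$, ML non-preemption makes the run an interleaving of whole ML subtrees, HL plan immutability confines replanning to ML subtrees (with Theorem~\ref{thm:replanning} covering those events), and the conclusion then comes from the argument of Theorem~\ref{thm:refinement} together with ML non-interference. One small caution: Theorem~\ref{thm:refinement} as stated also assumes Invariant~\ref{inv:initial-noninterference}, which is not among the present hypotheses, so instead of citing it as a black box you should re-run its argument (as the paper does with ``follows as in''), which never actually uses that invariant.
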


\begin{proof}
The high-level plan consists of one sequential plan per independent
subproblem, each sound by
Assumption~\ref{assum:planner-sound}; the partial order $\prec$ chains
the actions within each subproblem, while actions of distinct
subproblems are mutually unordered. By
Invariant~\ref{inv:predecessor-completion}, an action can execute only
after all of its $\prec$-predecessors have completed and no completed
action is re-released; hence every executed schedule is a valid
linearization of $\langle\mathcal{A}',\prec\rangle$, executing each
subproblem's actions in plan order.
Invariant~\ref{inv:precondition-guarded} additionally ensures that an
action, even when unordered with respect to sibling actions, is
released only when its preconditions hold in the current world state.
By Invariant~\ref{inv:ml-nonpreempt}, the ML subtrees refining the
released actions execute atomically, so the overall execution is an
interleaving of the branches at subtree granularity; by
Invariant~\ref{inv:hl-immutable}, replanning is confined to mid-level
subtrees and never modifies the high-level plan. Correctness of this
interleaved execution then follows as in
Theorem~\ref{thm:refinement}: each ML subtree is generated lazily by a
sound planner from the current world state and achieves its inherited
goal, Theorem~\ref{thm:replanning} covers local replanning within a
branch, and ML non-interference
(Invariant~\ref{inv:ml-noninterference}) guarantees that each subgoal,
once achieved, persists across subsequently executed subtrees of other
branches. Hence the final state of any completed execution satisfies
$g_1 \land \ldots \land g_k$, the overall high-level goal. This is a
safety guarantee; we do not claim that execution always makes
progress, i.e., that some executable action is always available until
all goals are reached, and establishing such progress guarantees is
future work.
\end{proof}

\subsection{Enforcement of the assumptions and invariants} 
\label{sec:enforcement}

\hl{The assumption and invariants above differ in how they are ensured. \textbf{Framework-enforced:} goal inheritance and goal immutability (Invariants}~\ref{inv:goal-inheritance} \hl{and}~\ref{inv:goal-immutable}\hl{) hold by construction, since the planning service always generates the ML problem file with the effects of the refined HL action as its goal; status exclusion (Invariant}~\ref{inv:status-exclusion}\hl{) is implemented by the success-criteria evaluation of DFNs; ML non-preemption (Invariant}~\ref{inv:ml-nonpreempt}\hl{) by the \texttt{Lock} decorator, which is attached automatically to every ML subtree; and precondition-guarded execution and predecessor completion (Invariants}~\ref{inv:precondition-guarded} \hl{and}~\ref{inv:predecessor-completion}\hl{) by the \texttt{onEnter} and \texttt{canExecute} operations. Quiescent replanning (Definition}~\ref{def:quiescent}\hl{) follows from ML non-preemption, as replanning is triggered only after the locked ML subtree has completed or failed. 

\textbf{User configuration, checkable before execution:} the \texttt{All} success criterion, HL plan immutability (Invariant}~\ref{inv:hl-immutable}\hl{), and the absence of decorators that re-release completed HL actions are properties of the input model and the selected execution policies. Initial non-interference (Invariant}~\ref{inv:initial-noninterference}\hl{) is the user's responsibility when decomposing the planning problem; it can be checked statically from the initial problem files (clause~1) and from the HL plans generated before execution starts (clause~2). \textbf{External assumption:} planner soundness (Assumption}~\ref{assum:planner-sound}\hl{) is a property of the selected planners; APTree validates generated plans structurally and by type and checks action preconditions at runtime, but does not verify planner soundness. \textbf{Runtime invariant:} ML non-interference (Invariant}~\ref{inv:ml-noninterference}\hl{) concerns plans generated during execution and therefore cannot be checked statically. APTree detects its violations at runtime, when an achieved HL goal is falsified, and halts execution (error~16 in Table}~\ref{tab:error_comparison}\hl{).}

\section{Replicability and Data Availability}
\label{APPB}

All artifacts required to reproduce the listings, tables, and
results are permanently archived in the University of Stuttgart's DaRUS
repository at \url{https://doi.org/10.18419/DARUS-6522}, where the two
evaluated configurations are archived as \texttt{v1.0} (without replanning)
and \texttt{v2.0} (with replanning).
A development version is maintained at
\url{https://github.com/shshermin/APTree-automated-planning-tree},
where the corresponding branches are:

\texttt{Version-1-withoutreplanning-scaled-up} and 

\texttt{version-2-withreplanning-scaled-up}.

The artifact comprises three main components:
\repofile{APTreeDSL/}, \repofile{APTreeExecutionEngine/}, and
\repofile{Lean/}. Reproduction instructions are provided in their README files.

\textbf{DSL and models.}
The DSL implementation, grammars from Section~\ref{packages}, DSL functions,
and example behavior trees are provided in \repofile{APTreeDSL/}.
The livMatS input and generated models are located at
\repofile{APTreeDSL/src/test/resources/valid/behavior_trees/APTreeLiveMat.bt}
and
\repofile{APTreeDSL/src/test/resources/valid/behavior_trees/APTreeLiveMatFinal.bt},
respectively. Valid and intentionally invalid models used in
Section~\ref{eval} are available under
\repofile{APTreeDSL/src/test/resources/valid/} and
\repofile{APTreeDSL/src/test/resources/invalid/}.
The PlanSys2 tests used for the comparison models are available at
\url{https://github.com/shshermin/Plansys2_Comparison}.

\textbf{Execution and evaluation.}
The execution and logging infrastructure is provided in
\repofile{APTreeExecutionEngine/}. Its source code and entry point are located
at \repofile{APTreeExecutionEngine/src/} and
\repofile{APTreeExecutionEngine/src/Program.cs}. Generated logs and summaries
are written to \repofile{APTreeExecutionEngine/WrittenLogs/}, while the logs
used in the paper are retained in
\repofile{APTreeExecutionEngine/kept logs/}.

\textbf{Formal verification.}
The machine-checked Lean 4 formalization and instructions for reproducing the
proofs are provided in \repofile{Lean/}.

\end{document}